\documentclass{article}

\PassOptionsToPackage{numbers, compress}{natbib}

\usepackage[utf8]{inputenc}
\usepackage[T1]{fontenc}
\usepackage{lmodern}
\usepackage{natbib}
\usepackage{geometry}
\usepackage{hyperref}
\hypersetup{
  pdftitle={Fast Learning Rates for Physics-Informed Kernel Methods},
  pdfauthor={Luc Brogat-Motte, Joachim Bona-Pellissier, Giacomo Meanti, Lorenzo Rosasco}
}
\usepackage{url}
\usepackage{booktabs}
\usepackage{amsfonts}
\usepackage{nicefrac}
\usepackage{microtype}
\usepackage{xcolor}
\usepackage{graphbox}
\usepackage{bm}

\definecolor{todoblue}{RGB}{40,95,170}

\usepackage{amsmath}
\usepackage{amssymb}
\usepackage{stmaryrd}
\usepackage{dsfont}
\usepackage{graphicx}
\usepackage{amsthm}
\newtheorem{assumption}{Assumption}

\newcommand{\R}{\mathbb{R}}

\newcommand{\E}{\mathbb{E}}

\newcommand{\bmH}{\mathcal{H}}

\newcommand{\bmX}{\mathcal{X}}

\DeclareMathOperator{\Tr}{Tr}

\newenvironment{sproof}{%
\proof}{\endproof}

\usepackage{thmtools, thm-restate}
\usepackage{graphicx}
\newtheorem{theorem}{Theorem}[section]

\newtheorem{corollary}{Corollary}[section]
\newtheorem{lemma}[theorem]{Lemma}
\newtheorem{proposition}[theorem]{Proposition}

\newcommand{\cH}{\mathcal{H}}
\newcommand{\cX}{\mathcal{X}}
\newcommand{\RR}{\mathbb{R}}
\newcommand{\EE}{\mathbb{E}}

\DeclareMathOperator{\Var}{Var}

\newcommand{\dop}{D}

\usepackage[capitalise]{cleveref}
\crefname{assumption}{assumption}{assumptions}

\title{Fast Learning Rates for Physics-Informed Kernel Methods}

\author{%
  Luc Brogat-Motte$^{1}$, Joachim Bona-Pellissier$^{2}$, \\
  Giacomo Meanti$^{2}$, Lorenzo Rosasco$^{1,2}$\\[0.6em]
  \small $^1$SAIL Unit, Istituto Italiano di Tecnologia, Genoa, Italy\\
  \small $^2$MaLGa Center, DIBRIS, Università degli Studi di Genova,
    Genoa, Italy\\
  \small \texttt{luc.brogatmotte@iit.it}, \texttt{joachim.bona@edu.unige.it},
    \\
  \small \texttt{giacomo.meanti@edu.unige.it},
    \texttt{lorenzo.rosasco@unige.it}%
}

\date{}

\begin{document}

\maketitle

\begin{abstract}

In physics-informed machine learning, a target function \(u^*\) is learned from noisy value observations \(y_i=u^*(x_i)+ \varepsilon_i\), together with differential information, given either by noisy observations \(d_j=(Du^*)(z_j)+\xi_j\) or by a known physical constraint \(Du^*=v\). We consider the setting where \(D\) is a linear differential operator and analyze a physics-informed kernel estimator \(\hat u\)  combining \(n\) value observations and \(m\) differential observations. In this context, we ask how much can differential information improve predictions, and how does this improvement depend quantitatively on \(n\), \(m\), and \(D\).  We prove finite-sample bounds, supported by numerical simulations, revealing a two-regime structure for the prediction error. When \(m\) is limited, the rate depends jointly on \(n\) and \(m\); when \(m\) exceeds a problem-dependent threshold, the rate saturates and matches the oracle rate obtained when the perfect constraint \(D \hat u = Du^*\) is imposed.  
Examples are discussed for Sobolev spaces which are reproducing kernel Hilbert spaces and  include partial Laplacian constraints on the torus and gradient observations on bounded domains. These examples illustrate the range of possible learning rate improvements -- from the standard nonparametric $n^{-1/4}$ to the parametric rate $n^{-1/2}$. 
Finally, we derive physically consistent rates in a stronger norm that jointly controls the errors  in \(\hat u\) and \(D\hat u\).

\end{abstract}

\section{Introduction}

Supervised learning algorithms seek to estimate an unknown target function from noisy observations of its values. In many applications arising in physics, engineering, and scientific computing, however, one has access not only to value observations, but also to physical information involving derivatives or other differential quantities associated with the target function~\cite{cuomo2022scientific}.
This additional information is often specified through functional constraints, such as gradients, Partial Differential Equation (PDE) residuals, or conservation laws \citep{raissi2019physics, karniadakis2021physics, rackauckas2020universal}. 
Concrete examples include learning interatomic potentials, where energies and forces of atomic systems provide coupled value and gradient information through \(F=-\nabla E\)~\citep{behler07potentials,bartok10gap}; 3D reconstruction problems, where surface normals give differential information about the underlying shape~\citep{amos20surface,patel25surface}; or computational cardiology, where physical models involve differential constraints from fluid dynamics~\citep{quarteroni25general,botarelli25navier}.

In this paper we investigate physics-informed machine learning from the point of view of statistical learning theory. While there exists a comprehensive theory describing the statistical performance of learning from  value observations \cite{caponnetto2007optimal, gyorfi2002distribution}, much less is known about how differential information affects learning rates and sample complexity. We are particularly interested in  quantitative results of how differential information improves prediction: what is the improvement it can provide?  how many differential observations are needed for this gain to appear?

We study this question for target functions  in a reproducing kernel Hilbert space (RKHS) \(\mathcal H\), which makes precise analyses feasible.
We assume that there exists a target function \(u^* \in \mathcal H\) and that, in addition to noisy value observations
\[
y_i = u^*(x_i) + \varepsilon_i, \qquad i=1, \dots, n,
\]
also noisy differential quantities are available
\[
d_j = (Du^*)(z_j) + \xi_j, \qquad j=1, \dots, m,
\]
where \(D\) is a linear differential operator.

Our contributions are as follows:
\begin{itemize}
    \item We derive finite-sample bounds that quantitatively describe how the prediction error depends on sample sizes \(n,m\), differential operator \(D\), and regularization parameters \(\lambda,\gamma\).

    \item We introduce a novel capacity assumption for physics-informed learning. This assumption decomposes the learning problem into a component that can be learned from differential data and a residual component not seen by \(D\). It then quantifies \emph{how fast} the first component can be learned from physics data and \emph{how much} complexity remains in the residual one. We illustrate both quantities with two interpretable Sobolev RKHS examples, showing how they depend on the choice of \(D\).

    \item Under this assumption, we obtain learning rates by deriving the best choice of  \(\lambda\) and \(\gamma\). The rates exhibit two regimes: a differential-limited regime, where the error decreases jointly with \(n\) and \(m\), and a saturation regime, where the rate matches the one obtained when the physical constraint is known exactly. In the Sobolev space examples, the saturated prediction rates range from \(n^{-1/4}\) to the parametric rate \(n^{-1/2}\). Simulations further illustrate the predicted improvements and two-regime behavior.
    
    \item Finally, we show that the same estimator is physically consistent beyond value prediction. We prove learning rates in a stronger norm that jointly controls the errors in \(\hat u\) and \(D\hat u\), ruling out accurate value prediction without learning the differential structure.
\end{itemize}

\subsection{Related work}

\textbf{Settings.~~}
Learning from both function values and derivatives (or, more generally, differential operators) is classically known as \textit{Hermite-Birkhoff} interpolation, whose study in a statistical setting dates back at least to the seminal work of \citet{kimeldorf1971some}. This setting has recently resurfaced in machine learning under the names Sobolev training~\citep{czarnecki2017sobolev} and physics-informed learning~\citep{raissi2017physics}. 
Incorporating differential information, whether in discrete or continuous form, can serve various purposes. Differential operators can be used to regularize the learning procedure: penalties of the form $\int (Du(x))^2 dx$ have been well studied for spline smoothing~\citep{wahba1990spline}, inverse problems~\citep{hanke1992regularization,engl1996regularization}, and machine learning~\citep{poggio90splines}. 
Differential operators can also carry information about the data distribution. This is the case in semi-supervised learning, where the operator $D$ is learned from unlabeled data~\citep{zhu2003semi,zhou2005regularization,belkin2006manifold,slepcev2019analysis,cabannes2021overcoming}.
Finally, differential constraints can come from the physical knowledge of the object of study itself, as is often the case in scientific applications~\citep{cuomo2022scientific}. Typically taking the form of a PDE, such constraints provide complementary data, either on the same domain as the value measurements~\citep{sangalli2021spatial} or on a different domain. 
A classical example of the latter is the one where $Du^*$ is known inside a domain $\Omega$ while $u^*$ is known only on the boundary $\partial \Omega$, which is well studied in the PDE literature~\citep{evans2010partial}.

\textbf{Methods.~~}
Deep learning-based algorithms to tackle problems with differential constraints include the Deep Ritz Method~\citep{yu2018deep}, neural operators~\citep{lu2021learning,li2021fourier}, Universal Differential Equations~\citep{rackauckas2020universal}. Closest to our approach are Physics-Informed Neural Networks (PINNs)~\citep{raissi2019physics}, which minimize a loss function similar to ours. 
Kernel methods have also been considered to tackle the problems described above, from splines~\citep{kimeldorf1971some} to RBF collocation~\citep{kansa1990multiquadrics,zongmin1992hermite,fasshauer1996solving,wendland2004scattered}, semi-supervised learning \cite{belkin2006manifold} and, more recently, in scientific machine learning applications~\citep{owhadi2015bayesian,raissi2017gps,chen2021solving,doumeche2024physics,bona2026piks}. Our estimator belongs to this kernel-based family: it combines the usual kernel ridge loss on values with an empirical loss on differential observations \(Du(z_j)\).
An orthogonal class of methods that is commonly used to estimate PDE solutions are finite elements/volumes~\citep{azzimonti2015blood,sangalli2021spatial}.

\textbf{Theory.~~}
Many convergence results exist for learning PDE solutions with kernels~\citep{franke1998solving,wendland2004scattered,schaback2006kernel,batlle2025error} or neural networks~\citep{shin2020convergence,mishra2023estimates,shin2023error,de2024error}. In the settings considered, data observations are only available on the boundary of the input domain (and not inside the domain). 
Such settings differ from ours and are closer to classical numerical methods for PDEs such as finite differences or finite element methods.
Instead, we consider settings in which the value and differential observations lie in the same domain, and discuss the existing results for such settings, which are commonly of two types.
A first type is concerned with physical consistency: showing that a physics-informed estimator $\hat{u}$ converges to the target $u^*$ in a physically consistent, stronger than $L^2$ sense~\citep{doumeche25convergence,shi2010hermite,abdeen23sobolev}. 
In \cref{sec:piks_stronger_norm}, we provide  a result of this nature with learning rates in a norm which controls both $\hat{u}$ and $D\hat u$. In particular, \citet{doumeche25convergence} shows, for a broad class of differential operators that PINNs are physically consistent. 
For kernel estimators, \citet{shi2010hermite} and \citet{abdeen23sobolev} show that learning with gradients allows to obtain convergence rates in the $H^1$ norm. 
A second type of result focuses on how the physics-informed penalty benefits the $L^2$ convergence rates, which we show in \cref{sec:learning_rate_acceleration}.
Previously, \citet{fisher2025precise} studied the impact of gradient information on $L^2$ performance using a random feature model, and highlighted
the existence of a regime where incorporating gradient information may hurt the prediction accuracy. 
\citet{arnone2022some} investigated a regression estimator regularized with a 2-dimensional elliptic PDE, when $u^*$ belongs to the Sobolev space $H^2$.
Their estimator converges at least at the standard rate for $H^2$ functions ($n^{-2/3}$), and can reach the faster $n^{-4/5}$ rate when the PDE constraint is satisfied by the target. 
\citet{doumeche2024physics} reformulated empirical risk minimization with linear differential constraints as kernel regression in a physics-informed RKHS, yielding theoretical evidence that physical constraints can improve convergence rates, characterized through the effective dimension of the new kernel. Such improvements are illustrated for the setting of $D=\partial_x$ with homogeneous constraints ($Du^* = 0$). 
Subsequent works \citep{doumeche2025physics, doumeche2025fast} focus on approximations and fast implementations, while preserving the underlying convergence rates. 
Overall, these theoretical results either study an ideal estimator for which continuous constraints are enforced (which typically is not computable in closed form) \cite{arnone2022some,doumeche2024physics}, or tie the number of value and differential data points ($m=n$) \cite{shi2010hermite,abdeen23sobolev,fisher2025precise}.
In contrast, we provide results for any linear $D$ which hold for arbitrary finite values of $m$ and $n$, explicitly dealing with approximately enforced constraints.

Finally, we must mention works which characterized the statistical performance of kernel regression~\citep{caponnetto2007optimal, steinwart2009optimal, fischer2020sobolev}, which we extend to handle physical constraints. They introduce the key assumptions (source and capacity conditions) under which rates can be derived and the notion of \emph{effective dimension} which controls the difficulty of the learning problem.

\section{Physics-informed kernel regression}\label{sec:piks_setting}

\subsection{Problem setup}

We now give a formal description of the learning problem. Let $\mathcal X$ be a measurable space and let $u^*: \mathcal{X} \to \R$ denote the target function. We will consider linear differential operators $\dop$ of order $s\geq 1$, which take the form 
\begin{equation*}
    \dop = \sum_{|\alpha|\le s} c_\alpha \partial^\alpha,
\end{equation*}
with coefficients $c_\alpha\in\R$. We will assume (in~\ref{as:k_Cs}) that $\dop u^*$ is well defined. Let $\rho$ and $\rho_\dop$ be distributions over $\mathcal{X}$. We consider \textbf{value} and \textbf{differential} observations
\begin{alignat*}{3}
&y_i = u^*(x_i) + \varepsilon_i, \quad &&x_i \overset{i.i.d.}{\sim} \rho, \quad &&i=1,\dots,n,
\\
&d_j = (Du^*)(z_j) + \xi_j, \quad &&z_j \overset{i.i.d.}{\sim} \rho_D, \quad &&j=1,\dots,m,
\end{alignat*}
where \((\varepsilon_i)_i\) and \((\xi_j)_j\) are real-valued noise variables. In general, $\rho$ and $\rho_D$ can be different, reflecting the fact that value and differential information may be available on different regions of the domain.

\subsection{Physics informed kernel estimator}

Let $(\mathcal H,\langle\cdot,\cdot\rangle_{\mathcal H})$ be an RKHS with associated kernel $k: \mathcal X\times \mathcal X\to\R$.
Given parameters $\lambda > 0$ and $\gamma > 0$, the regularized empirical risk is defined, for $u\in\bmH$, as
\begin{equation}\label{eq:physics-informed-emp-risk}
\widehat R(u)
=
\frac1n\sum_{i=1}^n\big(u(x_i)-y_i\big)^2
+\gamma\,\frac1m\sum_{j=1}^m\big(Du(z_j)-d_j\big)^2
+\lambda\|u\|_{\mathcal H}^2,
\end{equation}
and the Physics Informed Kernel MethodS (PIKS) \cite{bona2026piks} estimator is the result of an optimization procedure in $\mathcal H$
\[
\hat u
:=
\operatorname*{argmin}_{u\in\mathcal H}\widehat R(u).
\]
A small deviation from the estimator defined in \cite{bona2026piks} is the introduction of the scaling parameter $\gamma$. The parameter \(\lambda\) controls the regularity of the estimator,
while \(\gamma\) controls the relative weight assigned to derivative observations. Setting \(\gamma=0\) recovers standard kernel ridge regression with value observations.

Since \(\lambda>0\), \(\widehat R\) is a strictly convex quadratic functional on \(\mathcal H\), hence the minimizer \(\hat u\) exists and is unique. Moreover, the estimator
admits a finite-dimensional representer form. In particular,
\[
\hat u(\cdot)
=
\sum_{i=1}^n \alpha_i k(x_i,\cdot)
+
\sum_{j=1}^m \beta_j D_1 k(z_j,\cdot),
\]
where the coefficients \(\alpha\in\mathbb R^n\) and \(\beta\in\mathbb R^m\) are the solutions to a linear system involving the value, mixed, and differential Gram matrices. The explicit system is given in Appendix~\ref{sec:closed_forms}.

\section{Main results}\label{sec:piks_rates}

In this section, we present learning rates for PIKS and quantify the impact of derivative information. After listing the assumptions in \cref{sec:assumptions}, we state a general finite sample result in \cref{sec:finite_sample_bounds}, from which, in \cref{sec:learning_rate_acceleration}, we derive learning rates, emphasizing the existence of two regimes. In \cref{sec:piks_stronger_norm} we study the rates in a  stronger norm involving both $u$ and $Du$.

\subsection{Assumptions}\label{sec:assumptions}

\Cref{as:D_order,as:k_Cs,as:bounded,as:attainability,as:consistency} are standard and ensure that derivative evaluations are well defined in the RKHS and that both value and derivative observations are generated by the same underlying target function. The main novel assumption is~\ref{ass:abstract-alignment}, which quantifies how much of the function space is visible through derivative information.

\begin{assumption}[Linear differential operator] \label{as:D_order} 
We consider linear differential operators $D$ of the form $D=\sum_{\lvert\alpha\rvert \leq s} c_\alpha \partial^{\alpha}$ for some integer $s \geq 1$ and coefficients $c_\alpha\in\mathbb{R}$.
\end{assumption}

Typical examples include directional derivatives, Laplacians, and more general linear partial differential operators. The order $s$ determines the degree of smoothness required in the kernel such that derivative evaluations are well defined.

\begin{assumption}[Kernel regularity]\label{as:k_Cs}
We assume that the kernel \(k\) is of class \(C^{s,s}\) on \(\bmX\times \bmX\), meaning that all mixed derivatives
\(\partial_x^\alpha \partial_{x'}^\beta k(x,x')\) with \(|\alpha|,|\beta|\le s\) exist and are continuous.
\end{assumption}

Under \cref{as:D_order,as:k_Cs}, the differential operator \(D\) can be applied to the kernel and gives rise to well-defined derivative representers in the RKHS. More precisely, let \(\phi(x):=k(x,\cdot)\) be the RKHS feature map. Since derivative evaluations are continuous linear functionals on $\mathcal H$~\citep[Corollary~4.36]{steinwart2008support}, and defining \(D\phi(x):=Dk(x,\cdot),\)
where $D$ acts on the first argument of $k$, one has $D\phi(x)\in\mathcal H$ and the reproducing property holds on both the kernel and its derivatives:
\[
u(x)=\langle u,\phi(x)\rangle_{\mathcal H},
\qquad
Du(z)=\langle u,D\phi(z)\rangle_{\mathcal H}.
\]

\begin{assumption}[Boundedness of features and outputs] \label{as:bounded}
There exist constants $\kappa_V,\kappa_D,L_V,L_D>0$ such that, almost surely,
\[
\|\phi(x)\|_{\mathcal H}\le \kappa_V,
\qquad
\|D\phi(z)\|_{\mathcal H}\le \kappa_D,
\qquad
|y|\le L_V,
\qquad
|d|\le L_D.
\]
\end{assumption}
Such boundedness assumptions are standard in statistical analyses of kernel methods and are satisfied under \cref{as:D_order,as:k_Cs} when \(\bmX\) is compact.

\begin{assumption}[Attainability] \label{as:attainability}
We assume that the regression function $u^*(x) := \mathbb E[y\mid x]$ belongs to $\mathcal H$.
Equivalently, there exists $w^*\in\mathcal H$ such that
\[
u^*(x) = \langle w^*,\phi(x)\rangle_{\mathcal H}.
\]
\end{assumption}
This is a standard assumption for well-specified problems: it states that the target function belongs to the RKHS induced by the kernel, so it can be represented by the model class under consideration.

\begin{assumption}[Consistency of derivative observations] \label{as:consistency}
We assume that the derivative observations are unbiased measurements of the derivative of the regression function, in the sense that
\[
\mathbb E[d\mid z] = Du^*(z).
\]
\end{assumption}
This assumption states that, when \(Du^*\) is known, it can be interpreted as having a physical constraint which is well-specified. It is also referred to as using a \emph{perfect} measure $\rho_D$~\citep{shi2010hermite}.

To quantify both the effective dimension of the hypothesis space and its alignment with value and derivative observations, we introduce the following covariance operators
\[
\Sigma := \mathbb E[\phi(x)\otimes\phi(x)],
\qquad
\Sigma_D := \mathbb E[D\phi(z)\otimes D\phi(z)],
\]
where, for \(a,b\in\mathcal H\), the rank-one operator \(a\otimes b : \mathcal H\to\mathcal H\) is defined as
\(
(a\otimes b)w := \langle w,b\rangle_{\mathcal H}\,a
\). For self-adjoint operators \(A,B\), we write \(A \preceq B\) when \(B-A\) is positive semi-definite.

\begin{assumption}[Value-derivative capacity decomposition] \label{ass:abstract-alignment}
Assume that there exist positive semi-definite operators \(\Sigma_1,\Sigma_2 \succeq 0\) such that
\[
\Sigma = \Sigma_1 + \Sigma_2,
\]
and that there exist constants $c_1,c_2>0$ and $\alpha,r\in[0,1]$ such that, for all
$\lambda\in(0,1]$,
\begin{equation}\label{eq:as6}
\Tr\!\big(\Sigma_1(\Sigma_1+\lambda I)^{-1}\big)
\;\le\;
c_1\,\lambda^{-\alpha},
\qquad
\Tr\!\big((\Sigma_D+\lambda I)^{-1}\Sigma_2\big)
\;\le\;
c_2\,\lambda^{-r}.
\end{equation}
\end{assumption}

This assumption decomposes the geometry of function values into directions that are \emph{invisible}, or only weakly visible, to the operator $D$ (captured by $\Sigma_1$) and directions that are \emph{detectable} through differential observations (captured by $\Sigma_2$). The exponent $\alpha$ governs the effective dimension of the invisible component, while $r$ measures how well the detectable component is separated from the null-space of $D$. In the limiting case \(\Sigma_1=\Sigma\) and \(\Sigma_2=0\), the assumption reduces to the usual effective-dimension condition of standard kernel regression. 
Together, these conditions formalize how differential information can reduce the statistical complexity of learning function values via capacity reduction, depending on interactions between kernel, differential operator, and data distribution.

\subsection{Finite-sample bounds}\label{sec:finite_sample_bounds}

\begin{theorem}[High-probability finite-sample bounds for PIKS]\label{thm:finite-sample} Under \cref{as:D_order,as:k_Cs,as:bounded,as:attainability,as:consistency}, let $\lambda>0$, $\gamma>0$ and let $\delta\in(0,1/2)$. Define
\[
\Sigma_\gamma := \Sigma + \gamma \Sigma_D,
\qquad
\mathcal{B}(\lambda,\gamma):=\big\|(\Sigma_\gamma + \lambda I)^{-1/2}w^*\big\|_{\bmH},
\]
and the (value and derivative) capacity terms
\[
\mathcal{N}_V(\lambda,\gamma)
:=
\big\|(\Sigma_\gamma + \lambda I)^{-1/2}\Sigma^{1/2}\big\|_{\mathrm{HS}},
\qquad
\mathcal{N}_D(\lambda,\gamma)
:=
\big\|(\Sigma_\gamma + \lambda I)^{-1/2}\Sigma_D^{1/2}\big\|_{\mathrm{HS}}.
\]
If
\begin{equation}\label{eq:lambda-cond}
\frac{36}{n}\log\frac{2n}{\delta} \;\le\; \lambda \;\le\; \|\Sigma\|_\infty \quad\text{and}\quad
\frac{36}{m}\log\frac{2m}{\delta} \;\le\; \lambda\gamma^{-1} \;\le\; \|\Sigma_D\|_\infty,
\end{equation}
then there exists a constant $c>0$, depending only on $\kappa_V,\kappa_D,L_V,L_D$ and $\|u^*\|_{\bmH}$, but not on $\lambda,\gamma,n,m,\delta$, such that, with probability at least $1-2\delta$,
\begin{equation}\label{eq:main-bound}
\E\left[\left(\hat u(x)-u^*(x)\right)^2\right]^{\frac{1}{2}}
\le
c\log\frac{4}{\delta}\left[
\frac{1}{\sqrt{\lambda}}
\left(\frac1n+\frac{\gamma}{m}\right)
+
\frac{\mathcal N_V(\lambda,\gamma)}{\sqrt n}
+
\gamma\frac{\mathcal N_D(\lambda,\gamma)}{\sqrt m}
+
\lambda\mathcal B(\lambda,\gamma)
\right].
\end{equation}
\end{theorem}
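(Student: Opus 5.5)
The plan is the standard operator-theoretic analysis of kernel ridge regression (Caponnetto--De Vito, Rudi et al.), applied to the combined covariance $\Sigma_\gamma$. First we write the estimator in closed form. Introduce the empirical covariances
\[
\hat\Sigma := \tfrac1n\sum_i \phi(x_i)\otimes\phi(x_i),\qquad \hat\Sigma_D := \tfrac1m\sum_j D\phi(z_j)\otimes D\phi(z_j),\qquad \hat\Sigma_\gamma := \hat\Sigma+\gamma\hat\Sigma_D,
\]
and the data vectors $\hat b := \frac1n\sum_i y_i\phi(x_i)$ and $\hat b_D := \frac1m\sum_j d_j D\phi(z_j)$. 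The first-order condition for \eqref{eq:physics-informed-emp-risk} gives
\[
\hat u = (\hat\Sigma_\gamma+\lambda I)^{-1}(\hat b+\gamma\hat b_D).
\]
We then split
\[
\hat u - w^* = (\hat\Sigma_\gamma+\lambda I)^{-1}\big[(\hat b-\hat\Sigma w^*) + \gamma(\hat b_D-\hat\Sigma_D w^*)\big] - \lambda(\hat\Sigma_\gamma+\lambda I)^{-1}w^*.
\]
The value error satisfies $\E[(\hat u(x)-u^*(x))^2]^{1/2} = \|\Sigma^{1/2}(\hat u-w^*)\|_{\bmH}$, and since $\Sigma\preceq\Sigma_\gamma$ this is at most $\|\Sigma_\gamma^{1/2}(\hat u - w^*)\|$. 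We insert the factor $(\Sigma_\gamma+\lambda I)^{1/2}$ everywhere.

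The key step is to bound the operator
\[
\|(\Sigma_\gamma+\lambda I)^{1/2}(\hat\Sigma_\gamma+\lambda I)^{-1}(\Sigma_\gamma+\lambda I)^{1/2}\|\le 2.
\]
This follows once $\|(\Sigma_\gamma+\lambda I)^{-1/2}(\Sigma_\gamma-\hat\Sigma_\gamma)(\Sigma_\gamma+\lambda I)^{-1/2}\|\le 1/2$. We treat the two pieces separately.
\begin{itemize}
\item For the value piece, $(\Sigma_\gamma+\lambda)^{-1}\preceq(\Sigma+\lambda)^{-1}$, so we can use the known Bernstein-type bound for $(\Sigma+\lambda)^{-1/2}(\Sigma-\hat\Sigma)(\Sigma+\lambda)^{-1/2}$. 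Its norm is at most $1/4$ with probability $1-\delta$ under $\lambda\ge \frac{36}{n}\log\frac{2n}{\delta}$ and $\lambda\le\|\Sigma\|$, using the trace/intrinsic-dimension version of matrix Bernstein. Passing to the smaller weight requires a little care: we write $(\Sigma_\gamma+\lambda)^{-1/2}=(\Sigma_\gamma+\lambda)^{-1/2}(\Sigma+\lambda)^{1/2}\cdot(\Sigma+\lambda)^{-1/2}$, and the first factor is a contraction.
\item The derivative piece $\gamma(\Sigma_D-\hat\Sigma_D)$ is handled identically after factoring out $\gamma$, which gives $(\Sigma_D+\lambda/\gamma)^{-1}$. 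This is exactly why the condition appears with $\lambda\gamma^{-1}$ and $m$.
\end{itemize}
I expect this step to be the main obstacle: getting the constants (36, the $\log(2n/\delta)$ with the intrinsic dimension bounded by $n$) and the two-sided comparison right.

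On this event, the bias term gives $\lambda\|(\Sigma_\gamma+\lambda)^{-1/2}w^*\|\cdot 2=2\lambda\mathcal B$. The variance terms reduce to $2\|(\Sigma_\gamma+\lambda)^{-1/2}(\hat b-\hat\Sigma w^*)\|$ and its $D$ analogue. Each is a mean of i.i.d. centered vectors: centering uses \cref{as:attainability} and \cref{as:consistency}, since $\E[(y-u^*(x))\phi(x)]=0$. We apply Pinelis/Bernstein in Hilbert space. The vector $\zeta_i=(y_i-u^*(x_i))(\Sigma_\gamma+\lambda)^{-1/2}\phi(x_i)$ has norm at most $(L_V+\kappa_V\|u^*\|)\kappa_V\lambda^{-1/2}$ and second moment at most $C\,\Tr((\Sigma_\gamma+\lambda)^{-1}\Sigma)=C\mathcal N_V^2$. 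This yields
\[
c\log\tfrac{2}{\delta}\Big(\tfrac{1}{n\sqrt\lambda}+\tfrac{\mathcal N_V}{\sqrt n}\Big).
\]
The derivative vector gives $\gamma\, c\log\tfrac{2}{\delta}\big(\tfrac{1}{m\sqrt\lambda}+\tfrac{\mathcal N_D}{\sqrt m}\big)$, using $\|(\Sigma_\gamma+\lambda)^{-1/2}\|\le\lambda^{-1/2}$ and the boundedness in \cref{as:bounded}.

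Finally, a union bound over the four events, with $\delta$ adjusted (giving the $\log(4/\delta)$ and probability $1-2\delta$), and collecting constants depending only on $\kappa_V,\kappa_D,L_V,L_D,\|u^*\|_{\bmH}$, gives \eqref{eq:main-bound}.
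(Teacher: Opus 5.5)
Your proof is correct. It follows the same overall template as the paper: the closed form, the bound $\Delta_{\Sigma,3}\le\frac12$ obtained by handling the value and derivative pieces separately, and Hilbert-space Bernstein for the variance. The error decomposition, however, is different.

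\textbf{The paper's decomposition.} The paper goes through the population regularized solution $w_{\lambda,\gamma}=(\Sigma_\gamma+\lambda I)^{-1}V_\gamma$. The bias $\lambda(\Sigma_\gamma+\lambda I)^{-1}w^*$ is then deterministic, and the estimation error is bounded by $\Delta_{\Sigma,1}^2(\Delta_V+\|w^*\|\Delta_{\Sigma,2})$. This needs two separate concentration arguments: a vector Bernstein bound for $\widehat V_\gamma-V_\gamma$, and a Bernstein bound in Hilbert--Schmidt space for $\Sigma_\gamma-\widehat\Sigma_\gamma$, which uses the operator inequality $(A-B)^2\preceq 2A^2+2B^2$.

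\textbf{Your decomposition.} Your split $\hat u-w^*=(\hat\Sigma_\gamma+\lambda I)^{-1}[(\hat b-\hat\Sigma w^*)+\gamma(\hat b_D-\hat\Sigma_D w^*)]-\lambda(\hat\Sigma_\gamma+\lambda I)^{-1}w^*$ merges these two stochastic terms into the centered residual sums $\frac1n\sum_i(y_i-u^*(x_i))\phi(x_i)$ and $\frac1m\sum_j(d_j-Du^*(z_j))D\phi(z_j)$. Indeed, $\hat b-\hat\Sigma w^*=(\hat b-V)-(\hat\Sigma-\Sigma)w^*$ because $V=\Sigma w^*$. Your bias term is now random, but it is controlled by the same factor $\le 2$ coming from $\Delta_{\Sigma,3}\le\frac12$, giving $2\lambda\mathcal B$.

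\textbf{What each route buys.}
\begin{itemize}
\item Your route has no $\Delta_{\Sigma,2}$ term and no operator-valued Bernstein; only vector Bernstein is needed.
\item You need only four events, so allotting $\delta/2$ to each reproduces exactly the stated condition $\frac{36}{n}\log\frac{2n}{\delta}$ and the factor $\log\frac4\delta$. The paper uses six events with $\eta=\delta/3$, which strictly speaking would require $\log\frac{3n}{\delta}$ in the admissibility condition.
\item The paper's route, in exchange, keeps the bias deterministic and mirrors the Caponnetto--De Vito analysis term by term.
\end{itemize}

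The step you flagged as the main obstacle is handled exactly as in the paper: contract with $(\Sigma_\gamma+\lambda I)^{-1/2}(\Sigma+\lambda I)^{1/2}$, rescale the derivative piece by $\gamma$, and apply the Rudi et al.\ lemma at threshold $\frac14$. You carry the same level of rigor there as the paper, which also only asserts the constant 36.
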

\begin{sproof}
The proof stems from decomposing the error into separate approximation and estimation terms, and controlling the stochastic parts through concentration inequalities for covariance operators in Hilbert spaces.
The main novelty compared to the classical kernel ridge regression analysis lies in handling of the derivative observations. We introduce the covariance operator
$\Sigma_\gamma = \Sigma + \gamma \Sigma_D$,
and decompose all deviation terms into value and derivative contributions while carefully tracking their dependence on \(\gamma\), \(n\), and \(m\). 
Once these additional decompositions are established, the remainder of the argument closely parallels the standard KRR proof~\citep{caponnetto2007optimal}.
\end{sproof}
The bound in \cref{thm:finite-sample} separates the contributions of value observations, derivative observations, and regularization bias. The quantities $\mathcal N_V(\lambda,\gamma)$ and $\mathcal N_D(\lambda,\gamma)$ play the role of effective dimensions associated with the value and derivative components of the problem. They control the corresponding variance contributions, while the bias term $\lambda \mathcal B(\lambda,\gamma)$ captures the approximation error induced by regularization. The contributions of $\mathcal N_V(\lambda,\gamma)$ and $\mathcal N_D(\lambda,\gamma)$ interact through the shared parameters $\lambda$ and $\gamma$. Increasing $\gamma$ places more weight on the derivative constraints, which reduces the effective complexity of the value component by shrinking the set of admissible functions in directions that are observable through $D$, but also amplifies the variance contribution coming from noisy derivative observations. Setting $\gamma = 0$ recovers the standard KRR bound~\citep{caponnetto2007optimal}. Setting $m=\infty$ instead recovers the rates of \citet{doumeche2024physics}.

\subsection{Learning rate acceleration from differential information}\label{sec:learning_rate_acceleration}

We now instantiate the finite-sample bound under \cref{ass:abstract-alignment} and optimize over the regularization parameters $\lambda$ and $\gamma$ to precisely characterize the learning rates of the PIKS estimator.

\begin{figure}
    \centering
    \includegraphics[height=5.7cm]{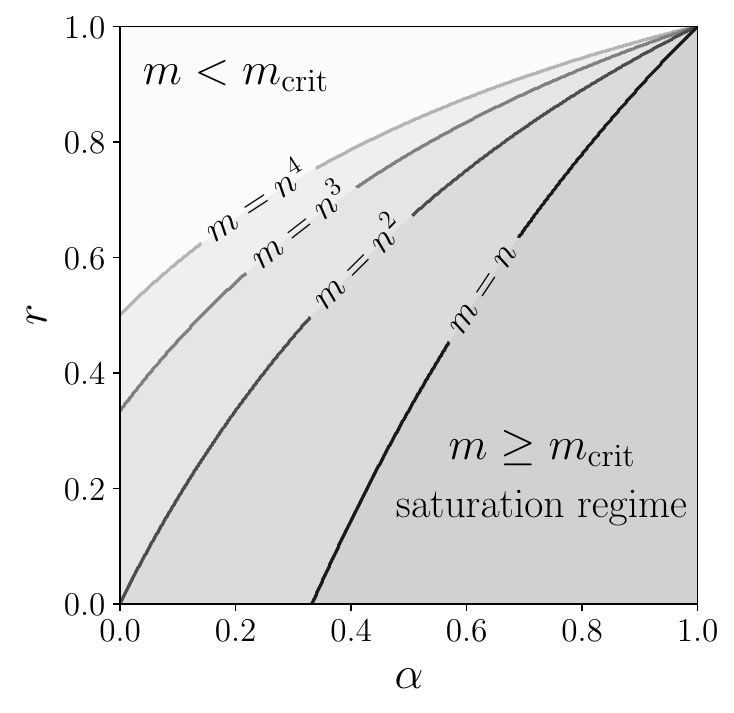} \hfill
    \includegraphics[height=5.7cm]{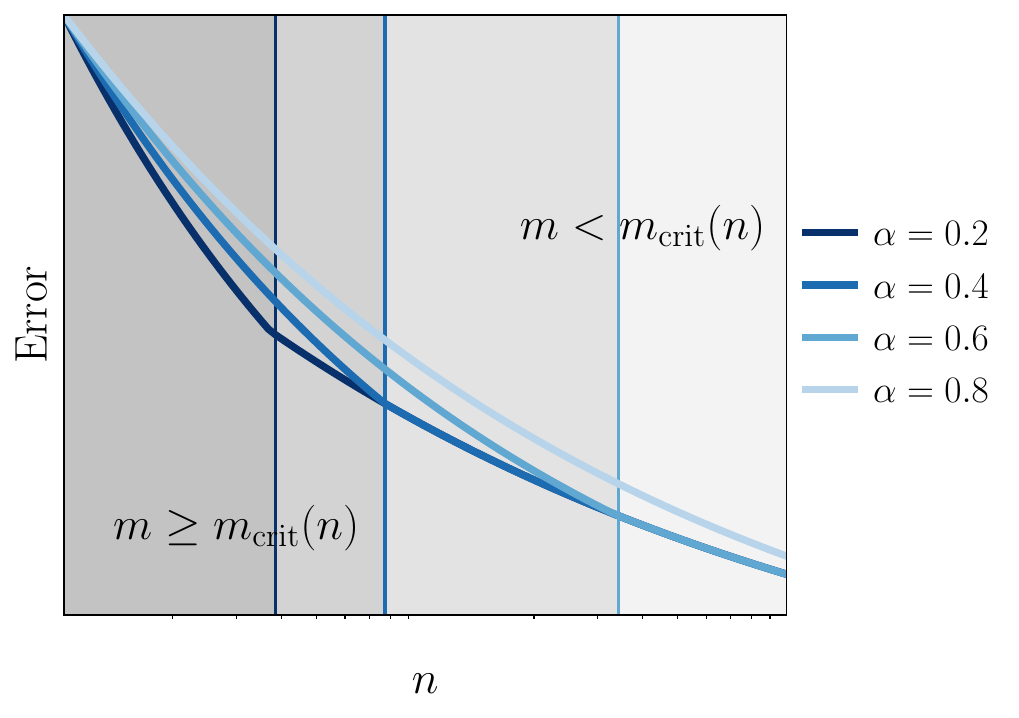}
    \caption{(\emph{left}) Rate-regime changes in the $(\alpha, r)$ plane. On the lower-right of each contour line the problem parameters are such that $m$ is larger than the critical threshold. (\emph{right}) Log-linear plot of the expected error as a function of $n$ and $\alpha$ for fixed $m, r$. For smaller $\alpha$ a larger $m/n$ ratio is required to be in the fast saturated regime.}
    \label{fig:abs-rates}
\end{figure}

\begin{corollary}[Learning rates for PIKS]
\label{thm:rates} Under \cref{as:D_order,as:k_Cs,as:bounded,as:attainability,as:consistency,ass:abstract-alignment}, there exist choices of regularization parameters $\lambda=\lambda(n,m)$ and $\gamma=\gamma(n,m)$, given explicitly in the proof and which depend polynomially on $n$, $m$, $\log n$, and $\log m$, such that, with high probability,
\[
\E\left[\left(\hat u(x) - u^*(x)\right)^2\right]^{1/2}
\;\lesssim\;
\begin{cases}
n^{-\frac14}m^{-\frac{1-r}{8}},
& m \lesssim m_{\mathrm{crit}},\\[8pt]
n^{-\frac{1}{2(1+\alpha)}},
& m \gtrsim m_{\mathrm{crit}},
\end{cases}
\]
where
\(
m_{\mathrm{crit}}
\;\asymp\;
n^{\frac{2(1-\alpha)}{(1+\alpha)(1-r)}}.
\)
\end{corollary}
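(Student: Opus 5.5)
The plan is to bound each term on the right-hand side of \eqref{eq:main-bound} in terms of $\lambda,\gamma$ using \cref{ass:abstract-alignment}. This reduces the corollary to a scalar optimization in $(\lambda,\gamma)$, with the reparametrization $\mu:=\lambda/\gamma$ playing the role of the effective regularization on the derivative side. Throughout I will ignore logarithmic factors; they are absorbed into the polynomial-in-$\log n,\log m$ choices of $\lambda,\gamma$.

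\textbf{Step 1: bounding the capacity and bias terms.} For the bias, $\Sigma_\gamma\succeq 0$ gives $\mathcal B(\lambda,\gamma)\le \lambda^{-1/2}\|w^*\|_{\mathcal H}$, so $\lambda\mathcal B\lesssim \lambda^{1/2}$.

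For the value capacity, write
\[
\mathcal N_V^2=\Tr\big((\Sigma_\gamma+\lambda I)^{-1}\Sigma_1\big)+\Tr\big((\Sigma_\gamma+\lambda I)^{-1}\Sigma_2\big).
\]
Since $\Sigma_\gamma\succeq\Sigma_1$ and operator inversion is monotone, the first trace is at most $\Tr(\Sigma_1(\Sigma_1+\lambda I)^{-1})\le c_1\lambda^{-\alpha}$. Since $\Sigma_\gamma\succeq\gamma\Sigma_D$, we have $(\Sigma_\gamma+\lambda I)^{-1}\preceq \gamma^{-1}(\Sigma_D+\mu I)^{-1}$. Hence, when $\mu\le 1$, the second trace is at most $c_2\gamma^{-1}\mu^{-r}=c_2\gamma^{-(1-r)}\lambda^{-r}$. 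These trace comparisons are legitimate because $\Sigma_1,\Sigma_2\succeq0$.

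For the derivative capacity, no capacity assumption on $\Sigma_D$ is available, so I use the crude bound
\[
\mathcal N_D^2\le \gamma^{-1}\Tr\big((\Sigma_D+\mu I)^{-1}\Sigma_D\big)\le \gamma^{-1}\kappa_D^2/\mu=\kappa_D^2/\lambda .
\]
It follows that $\gamma\mathcal N_D/\sqrt m\lesssim \gamma/\sqrt{\lambda m}$.

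\textbf{Step 2: the reduced bound.} Combining Step 1 with \eqref{eq:main-bound}, the error is bounded up to constants and logs by
\[
\lambda^{1/2}+\frac{\lambda^{-\alpha/2}}{\sqrt n}+\frac{\gamma^{-(1-r)/2}\lambda^{-r/2}}{\sqrt n}+\frac{\gamma}{\sqrt{\lambda m}}+\frac{1}{\sqrt\lambda}\Big(\frac1n+\frac\gamma m\Big).
\]
The last group is dominated by the others in the regimes below. I will take $\gamma=\lambda\sqrt m$ (i.e.\ $\mu=m^{-1/2}$) or smaller, so that $\gamma/\sqrt{\lambda m}\le\lambda^{1/2}$.

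\textbf{Step 3: the two regimes.}

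\emph{Saturated regime.} Choose $\lambda=n^{-1/(1+\alpha)}$, which balances $\lambda^{1/2}$ with $\lambda^{-\alpha/2}n^{-1/2}$. The mixed term $\gamma^{-(1-r)/2}\lambda^{-r/2}n^{-1/2}$ is then at most $\lambda^{1/2}$ iff
\[
\gamma\ge n^{(r-\alpha)/((1+\alpha)(1-r))}.
\]
Compatibility with $\gamma\le\lambda\sqrt m$ holds exactly when
\[
m\gtrsim n^{2(1-\alpha)/((1+\alpha)(1-r))}=m_{\mathrm{crit}},
\]
which yields the rate $n^{-1/(2(1+\alpha))}$.

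\emph{Differential-limited regime.} Set $\gamma=\lambda\sqrt m$ and balance $\lambda^{1/2}$ against the mixed term, which becomes $\lambda^{-1/2}m^{-(1-r)/4}n^{-1/2}$. This gives
\[
\lambda=n^{-1/2}m^{-(1-r)/4},
\]
hence the rate $\lambda^{1/2}=n^{-1/4}m^{-(1-r)/8}$. When $m\lesssim m_{\mathrm{crit}}$ this $\lambda$ is at least $n^{-1/(1+\alpha)}$, so the $\lambda^{-\alpha/2}n^{-1/2}$ term is also dominated by $\lambda^{1/2}$.

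\textbf{Step 4 and main obstacle.} Finally, I check the admissibility conditions \eqref{eq:lambda-cond}.
\begin{itemize}
\item $\lambda\gtrsim n^{-1}\log n$ holds since both choices are $\gtrsim n^{-1/2}$ up to powers of $m$. In the first regime one has $m\lesssim m_{\mathrm{crit}}$, so $\lambda\ge n^{-1/(1+\alpha)}\ge n^{-1}$.
\item $\lambda/\gamma=m^{-1/2}\ge 36m^{-1}\log(2m/\delta)$ holds for $m$ large.
\item In the saturated regime I pick $\gamma$ at its lower bound, so $\lambda/\gamma\ge m^{-1/2}$ and the condition still holds.
\item The upper constraints $\lambda\le\|\Sigma\|_\infty$ and $\mu\le\min(1,\|\Sigma_D\|_\infty)$ hold for $n,m$ large.
\end{itemize}
The delicate point is this bookkeeping: ensuring that $\mu\le1$ (needed to invoke \eqref{eq:as6}) and the lower bound on $\lambda/\gamma$ hold simultaneously, while inserting the log factors from \eqref{eq:lambda-cond} without spoiling the exponents. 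I would handle this by inflating $\lambda$ and $\mu$ by $\log$ factors, which only affects the rates logarithmically.
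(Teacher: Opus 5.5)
Your proposal is correct and follows essentially the same route as the paper: the same bounds $\mathcal N_V^2\lesssim\lambda^{-\alpha}+\gamma^{r-1}\lambda^{-r}$, $\mathcal N_D^2\lesssim\lambda^{-1}$ and $\lambda\mathcal B\lesssim\sqrt\lambda$ feed the same four-term reduced bound. The optimization then gives the same $\lambda_A^\star\asymp n^{-1/(1+\alpha)}$, $\lambda_B^\star\asymp n^{-1/2}m^{-(1-r)/4}$ and threshold $m_{\mathrm{crit}}$, with logarithmic inflation of $\lambda$ to meet the admissibility conditions. The only difference is how $\gamma$ is fixed: in the differential-limited regime your $\gamma=\lambda\sqrt m$ coincides exactly with the paper's balancing choice $\gamma^\star(\lambda_B^\star)\asymp n^{-1/2}m^{(1+r)/4}$, while in the saturated regime you take the smallest $\gamma$ making the mixed term at most $\sqrt\lambda$ instead of $\gamma^\star(\lambda_A^\star)$, which is equally valid.
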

Here and throughout, $a(n,m) \lesssim b(n,m)$ means that $a(n,m) \le c\,b(n,m)$ for some $c>0$
independent of $n$ and $m$, up to logarithmic factors in $n$, $m$, and $\delta^{-1}$ (the high-probability parameter).

The error bound reveals two regimes depending on the amount of differential samples $m$, relative to the number of function-value samples $n$ as depicted also in \cref{fig:abs-rates}:

\begin{itemize}

\item \emph{Differential-limited regime ($m \lesssim m_{\mathrm{crit}}$).} 
With low $m$, the error decreases as both $n$ and $m$ increase. In particular as the number of differential observations $m$ increases, the estimation error along directions captured by $\Sigma_2$ decreases. 
The rate of decrease depends on how well the $\Sigma_2$ directions are aligned with the differential covariance $\Sigma_D$. The exponent $r$ quantifies this effect, with smaller values of $r$ leading to faster decay of the error with respect to $m$.

\item \emph{Saturation threshold ($m \asymp m_{\mathrm{crit}}$).} Increasing $m$, a threshold is reached where differential and value contributions to the error are of the same order:
\[
n^{-\frac14} m^{-\frac{1-r}{8}}
\;\asymp\;
n^{-\frac{1}{2(1+\alpha)}},
\]
which yields 
\(
m_{\mathrm{crit}} \asymp n^{\frac{2(1-\alpha)}{(1+\alpha)(1-r)}}.
\)
The dependence of $m_{\mathrm{crit}}$ on $\alpha$ and $r$ follows from this relation: smaller values of $r$ lead to a slower decay of the differential term in $m$, and smaller values of $\alpha$ lead to a faster decay of the value term in $n$, so that a larger $m$ is required to balance the two contributions.

\item \emph{Saturation regime ($m \gtrsim m_{\mathrm{crit}}$).}
Once \(m\) exceeds the threshold, the rate saturates: increasing the number of differential observations only reduces lower-order terms, but it no longer improves the leading error term.  At this point, the error on the differential-accessible component is below the error driven by \(\Sigma_1\), which captures directions that are either in the null-space of \(D\), and therefore invisible to differential observations, or only weakly visible through them. These directions can be estimated only with value samples.
In this regime, the leading rate matches the physical oracle rate obtained when the differential information is known exactly (see Appendix~\ref{sec:oracle_physical_rates}).
\end{itemize}

\subsection{Learning rates in physically consistent norms}\label{sec:piks_stronger_norm}

In many physics-informed settings, respecting the differential structure is as important as minimizing prediction error. For instance, in learning interatomic potentials, predicting differential quantities such as atomic forces is essential for molecular dynamics simulations~\citep{noe20machine}. However, improved learning rates for function values do not by themselves ensure that \(\hat{u}\) captures this structure. Indeed, convergence in value alone does not control derivatives: on any bounded domain, \(u_\varepsilon(x)=\varepsilon\sin(\varepsilon^{-2}x)\) satisfies \(\|u_\varepsilon\|_{L^2}\to 0\) as \(\varepsilon\to 0\), whereas for \(D=\partial_x\), \(Du_\varepsilon(x)=\varepsilon^{-1}\cos(\varepsilon^{-2}x)\) and \(\|Du_\varepsilon\|_{L^2}\to\infty\). We therefore study convergence in a stronger \(L^2\)-type norm that jointly controls function-value and differential errors.

\begin{corollary}[Learning rates in the physically consistent norm]
Under \cref{as:D_order,as:k_Cs,as:bounded,as:attainability,as:consistency,ass:abstract-alignment} there exist choices of regularization parameters $\lambda=\lambda(n,m)$ and $\gamma=\gamma(n,m)$, given explicitly in the proof with $\gamma\ge 1$ such that, with high probability,
\[
\E\left[
    \left(\hat u(x) - u^*(x)\right)^2 + \left(D\hat u(z) - Du^*(z)\right)^2
\right]^{1/2}
\;\lesssim\;
\begin{cases}
n^{-\frac14}m^{-\frac{1-r}{8}} + m^{-1/4},
&
m \lesssim m_{\mathrm{crit}}
\\[6pt]
n^{-\frac{1}{2(1+\alpha)}} + m^{-1/4},
&
m \gtrsim m_{\mathrm{crit}},
\end{cases}
\]
where
\(
m_{\mathrm{crit}}
\;\asymp\;
n^{\frac{2(1-\alpha)}{(1+\alpha)(1-r)}}.
\)
\end{corollary}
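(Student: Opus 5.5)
The plan is to prove a finite-sample bound for the joint error, analogous to \cref{thm:finite-sample}, and then reuse the parameter choices from the proof of \cref{thm:rates}, adjusted so that $\gamma\ge 1$. The starting observation is that the joint error is a weighted norm of $\hat w - w^*$:
\[
\E\big[(\hat u(x)-u^*(x))^2\big]+\E\big[(D\hat u(z)-Du^*(z))^2\big]
=\|(\Sigma+\Sigma_D)^{1/2}(\hat w-w^*)\|_{\bmH}^2 .
\]
If $\gamma\ge 1$, then $\Sigma+\Sigma_D\preceq \Sigma_\gamma\preceq \Sigma_\gamma+\lambda I$. Hence the joint error is bounded by $\|(\Sigma_\gamma+\lambda I)^{1/2}(\hat w-w^*)\|_{\bmH}$. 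The proof of \cref{thm:finite-sample} (the caponnetto-style decomposition) in fact controls exactly this $\Sigma_\gamma$-weighted norm before it is specialized to $\Sigma^{1/2}$. So I would re-enter that proof at the step where $\|\Sigma^{1/2}(\cdot)\|\le\|(\Sigma_\gamma+\lambda I)^{1/2}(\cdot)\|$ is used, and obtain the same kind of right-hand side.

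Carrying the decomposition through, I expect the following terms:
\begin{itemize}
\item a bias term $\lambda\mathcal B(\lambda,\gamma)\le\sqrt{\lambda}\,\|w^*\|_{\bmH}$;
\item a value variance term $\mathcal N_V/\sqrt n+\lambda^{-1/2}/n$;
\item a derivative variance term involving $\sqrt{\gamma}\,\|(\Sigma_\gamma+\lambda I)^{-1/2}(\gamma\Sigma_D)^{1/2}\|_{\mathrm{HS}}/\sqrt m$.
\end{itemize}
For the derivative term I would track the $\gamma$-dependence carefully. The noise vector is $\gamma\frac1m\sum_j \xi_j D\phi(z_j)$, and its $(\Sigma_\gamma+\lambda I)^{-1/2}$-norm has variance $\gamma^2\Tr((\Sigma_\gamma+\lambda I)^{-1}\Sigma_D)/m$. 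This is what gives $\gamma\mathcal N_D/\sqrt m$ in \cref{thm:finite-sample}.

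The crude bound $\mathcal N_D^2\le \gamma^{-1}\Tr(\Sigma_D(\Sigma_D+\lambda\gamma^{-1}I)^{-1})\le \gamma^{-1}\kappa_D^2\gamma/\lambda$ depends on $\lambda$. Instead I would use $\gamma\Sigma_D\preceq\Sigma_\gamma$. Together with a Bernstein bound that uses the sup-norm $\|(\Sigma_\gamma+\lambda I)^{-1/2}\sqrt\gamma D\phi\|\le\kappa_D\sqrt{\gamma/\lambda}$, this yields $\gamma\mathcal N_D/\sqrt m\lesssim \sqrt{\gamma}\,\sqrt{\mathrm{d}_D}/\sqrt m$, where $\mathrm d_D=\Tr(\gamma\Sigma_D(\Sigma_\gamma+\lambda)^{-1})$.

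The main obstacle is obtaining the additive $m^{-1/4}$ term. This term should arise from the derivative error on its own: a ridge-type estimation of $Du^*$ from $m$ samples with no capacity assumption available for $\Sigma_D$. The bound $\mathrm d_D\le \kappa_D^2\gamma/\lambda$ gives a variance of order $\gamma/\sqrt{\lambda m}$ relative to the bias $\sqrt\lambda$. The natural rescaling is to measure the derivative part with effective regularization $\lambda/\gamma$. The derivative bias is then $\sqrt{\lambda/\gamma}\,\|w^*\|$ and its variance is $1/\sqrt{(\lambda/\gamma) m}$. Balancing these gives $\lambda/\gamma\asymp m^{-1/2}$, i.e. exactly $m^{-1/4}$, which is the standard capacity-free KRR rate.

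I would therefore bound the derivative component of the error separately, weighting by $\gamma^{-1}$: since $\Sigma_D\preceq\gamma^{-1}(\Sigma_\gamma+\lambda I)$, the derivative error is at most $\gamma^{-1/2}$ times the $\Sigma_\gamma$-norm. The value component keeps the bound from \cref{thm:rates}. Finally I would choose the $\lambda,\gamma$ of \cref{thm:rates} in both regimes and check two things: that $\gamma\ge1$ holds there (raising $\gamma$ to 1 when needed), and that with these choices the added derivative term is $\lesssim m^{-1/4}$. If this check fails for the original parameters, I would instead cap $\lambda/\gamma\le m^{-1/2}$ and verify that this constraint does not degrade the value rate. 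That verification is where I expect most of the bookkeeping.
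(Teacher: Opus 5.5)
Your reduction matches the paper's. For $\gamma\ge 1$ the joint error is dominated by the $(\Sigma_\gamma+\lambda I)^{1/2}$-weighted error. The decomposition behind \cref{thm:finite-sample} controls that norm verbatim: the factor $\|\Sigma^{1/2}(\Sigma_\gamma+\lambda I)^{-1/2}\|\le 1$ simply becomes $1$. You then land on $\lambda^{-\alpha/2}/\sqrt n+\gamma^{(r-1)/2}\lambda^{-r/2}/\sqrt n+\gamma\lambda^{-1/2}/\sqrt m+\sqrt\lambda$. Your detour through $\gamma\Sigma_D\preceq\Sigma_\gamma$ is only the identity $\gamma^2\mathcal N_D^2=\gamma\,\mathrm d_D$; without a capacity bound on $\Sigma_D$ it returns the same $\gamma\lambda^{-1/2}/\sqrt m$. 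Raising $\gamma$ to $1$ when needed is also what the paper does.

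The gap is in the final parameter choice, which is exactly where the $m^{-1/4}$ is produced, and your fallback points the wrong way. Once $\gamma\ge 1$, the derivative bias $\sqrt{\lambda/\gamma}\le\sqrt\lambda$ is automatically no larger than the value bias, so an upper cap $\lambda/\gamma\le m^{-1/2}$ repairs nothing. What breaks is the derivative variance. Suppose the balancing choice $\gamma^\star(\lambda)\asymp(m/n)^{1/(3-r)}\lambda^{(1-r)/(3-r)}$ from \cref{thm:rates} is below $1$ and you set $\gamma=1$. Then the third term becomes $\lambda^{-1/2}/\sqrt m$, which exceeds the target precisely when the value-optimal $\lambda^\star$ is $\ll m^{-1/2}$. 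This occurs, for instance, when $m\asymp n$ and $\alpha,r<1$. In that situation $\lambda/\gamma=\lambda^\star$ already satisfies your cap, so the cap is inactive, and pushing $\lambda/\gamma$ further down only increases $\sqrt{\gamma/(\lambda m)}$.

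The missing step is a \emph{lower} clip on $\lambda$: take $\gamma=\max\{1,\gamma^\star(\lambda)\}$ and $\lambda\asymp\max\{\lambda^\star,m^{-1/2}\}$, as in the paper. Let $B(\lambda)$ denote the common value of the two $\gamma$-dependent terms at $\gamma^\star(\lambda)$. Since $\gamma^{(r-1)/2}$ is nonincreasing in $\gamma$, raising $\gamma$ above $\gamma^\star(\lambda)$ keeps the $\Sigma_2$ term below $B(\lambda)$. The derivative variance is then at most $B(\lambda)+\lambda^{-1/2}/\sqrt m$. Because $\lambda^{-\alpha/2}/\sqrt n$, $B(\lambda)$ and $\lambda^{-1/2}/\sqrt m$ are nonincreasing in $\lambda$, the clip costs only $\lambda^{-1/2}/\sqrt m\le m^{-1/4}$ and $\sqrt\lambda\le\sqrt{\lambda^\star}+m^{-1/4}$.

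You planned to verify that the constraint does not degrade the value rate. Within this argument it \emph{does} raise the bound on the value part by an additive $m^{-1/4}$ through the bias. This is harmless only because the target bound carries that term anyway. Likewise, requiring the added derivative term to be $\lesssim m^{-1/4}$ is stronger than needed when $\gamma^\star(\lambda^\star)\ge 1$. There it equals $B(\lambda^\star)$ by the balancing, so it is already of the order of the value rate, which is all the statement asks.
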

The stronger norm bounds rule out the possibility that the PIKS estimator has a good prediction 
accuracy on $u^*$ while failing to accurately learn $Du^*$.
This can be interpreted as a form of physical consistency: the learned function respects both the values and the differential structure of the target. Compared with the value-only rates in \cref{thm:rates}, this stronger guarantee incurs an additional term $m^{-1/4}$, which depends only on the number of derivative observations and therefore cannot be reduced by increasing $n$ alone. This is a reasonable price to pay in settings where differential predictions are themselves of interest or are used in downstream tasks.

\section{Examples}\label{sec:examples}

Here we illustrate the improved rates of \cref{thm:rates} in concrete learning settings. We focus in \cref{ex:sobolev-torus} on Sobolev spaces on the torus where Laplacian information can provide large benefits, and in \cref{sec:grad-ex} on bounded domains, which are particularly relevant for their connection to PDEs.

\subsection{Example 1: Sobolev spaces on the torus}\label{ex:sobolev-torus}

We define the periodic Mat\'ern kernel on $\mathbb T^d= [0,1]^d$ with Fourier expansion
\[
k(x-x')=\sum_{k\in\mathbb Z^d}\mu_k\,e^{2\pi i k\cdot(x-x')},
\qquad
\mu_k \asymp (1+|k|^2)^{-s},
\]
for some $s>\frac{d}{2}+2$, where $\asymp$ denotes the equality up to positive multiplicative constants. The associated RKHS $\mathcal H$ consists of functions
$f(x)=\sum_{k\in\mathbb Z^d}c_k\,e^{2\pi i k\cdot x}$  with norm
\[
\|f\|_{\mathcal H}^2
=\sum_{k\in\mathbb Z^d}\frac{|c_k|^2}{\mu_k}
\asymp
\sum_{k\in\mathbb Z^d}(1+|k|^2)^s|c_k|^2,
\] where $c_k\in\mathbb C$ are the Fourier coefficients. $\mathcal H$ is norm-equivalent to the Sobolev space $H^s(\mathbb T^d)$.

\begin{proposition}[Capacity decomposition for Sobolev spaces]
\label{lem:sobolev_capacity}
Let $\mathcal X=\mathbb T^d $, let $k$ be a periodic Matérn kernel of smoothness $s>\frac{d}{2}+2$, and let $D=\sum_{i\in S}\partial_{z_i}^2$ be a partial Laplacian with $S\subseteq\{1,\dots,d\}$.
Assume that value and differential samples are drawn uniformly on $\mathbb T^d$.
Then the value covariance $\Sigma$ admits a decomposition $\Sigma=\Sigma_1+\Sigma_2$ satisfying \cref{ass:abstract-alignment}, with exponents
\[
\alpha=\frac{d-|S|}{2s},
\qquad
r=\frac{d}{2s}.
\]
\end{proposition}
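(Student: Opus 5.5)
The plan is to diagonalize $\Sigma$ and $\Sigma_D$ simultaneously in the Fourier basis, and to split the frequencies according to whether they are seen by $D$. Write $k=(k_S,k_{S^c})$ with $k_S\in\mathbb Z^{|S|}$. Work with the real trigonometric system, pairing $k$ with $-k$ so that everything stays real. The functions $e_k(x)=\sqrt{\mu_k}\,e^{2\pi i k\cdot x}$ form an orthonormal basis of $\mathcal H$. The reproducing property then gives
\[
\phi(x)=\sum_k \overline{e_k(x)}\,e_k, \qquad
D\phi(z)=\sum_k \overline{(De_k)(z)}\,e_k ,
\]
and a direct computation shows $De_k=-4\pi^2|k_S|^2e_k$.

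First I will compute the two covariance operators under the uniform measure on $\mathbb T^d$. Orthogonality of the characters in $L^2(\mathbb T^d)$ gives $\E[\overline{e_k(x)}e_{k'}(x)]=\mu_k\delta_{kk'}$. Hence
\[
\Sigma e_k=\mu_k e_k, \qquad
\Sigma_D e_k=16\pi^4|k_S|^4\mu_k\, e_k,
\]
so both operators are diagonal in the basis $(e_k)$. Next I define the splitting. Let $\Sigma_1$ be the restriction of $\Sigma$ to frequencies with $k_S=0$, which is the null space of $D$ and contains the constants. Let $\Sigma_2$ be the restriction to frequencies with $k_S\neq 0$. Both are positive semi-definite, commute with $\Sigma_D$, and satisfy $\Sigma=\Sigma_1+\Sigma_2$.

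For the first trace, the nonzero eigenvalues of $\Sigma_1$ are $\mu_k\asymp(1+|k_{S^c}|^2)^{-s}$ with $k_{S^c}\in\mathbb Z^{d-|S|}$. So $\Tr(\Sigma_1(\Sigma_1+\lambda I)^{-1})=\sum_{j\in\mathbb Z^{d'}}\mu_j/(\mu_j+\lambda)$ with $d'=d-|S|$. This is the classical effective dimension of a Sobolev kernel in dimension $d'$. Terms with $\mu_j\ge\lambda$ number about $\#\{|j|\lesssim\lambda^{-1/(2s)}\}\asymp\lambda^{-d'/(2s)}$, each contributing at most $1$. The remaining terms are bounded by $\lambda^{-1}\sum_{|j|\gtrsim\lambda^{-1/(2s)}}|j|^{-2s}$, a lattice tail. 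Since $2s>d'$, this tail is $\lambda^{-1}\cdot\lambda^{(2s-d')/(2s)}=\lambda^{-d'/(2s)}$. This yields $\alpha=(d-|S|)/(2s)$. If $S=\{1,\dots,d\}$, then $\Sigma_1$ has rank one and $\alpha=0$, consistent with the formula.

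For the second trace, $\Tr((\Sigma_D+\lambda I)^{-1}\Sigma_2)=\sum_{k_S\neq0}\mu_k/(16\pi^4|k_S|^4\mu_k+\lambda)$. The key observation is that $|k_S|\ge1$ on the support of $\Sigma_2$. Hence each term is at most $\mu_k/(c\mu_k+\lambda)$ with $c=16\pi^4$, and the sum is bounded by the full $d$-dimensional effective dimension $\sum_{k\in\mathbb Z^d}\mu_k/(c\mu_k+\lambda)$. The same counting argument, now in dimension $d$ and using $2s>d$, bounds this by $c_2\lambda^{-d/(2s)}$, giving $r=d/(2s)$. Both exponents lie in $[0,1]$ because $s>d/2$.

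The main technical step I expect is the uniform lattice-counting estimate. I need $\#\{k\in\mathbb Z^p:(1+|k|^2)^{s}\le t\}\asymp t^{p/(2s)}$ and the matching tail bound, valid for all $\lambda\in(0,1]$ rather than only asymptotically. I would handle it by comparing the lattice sums with integrals over unit cubes. I would also have to track the constants implicit in $\mu_k\asymp(1+|k|^2)^{-s}$, which only affect $c_1$ and $c_2$. The remaining checks are routine: that the features are real-valued in the real basis, and that the condition $s>d/2+2$ ensures $D\phi(z)\in\mathcal H$ as required by \cref{as:k_Cs}.
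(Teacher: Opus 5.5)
Your proposal is correct and follows essentially the same route as the paper: you diagonalize $\Sigma$ and $\Sigma_D$ in the Fourier basis, place the frequencies with $k_S=0$ in $\Sigma_1$ and bound its effective dimension by a $(d-|S|)$-dimensional lattice count, and use $|k_S|\ge 1$ on the support of $\Sigma_2$ to dominate the second trace by the full $d$-dimensional effective dimension. Your use of the $\mathcal H$-orthonormal basis $\sqrt{\mu_k}\,e^{2\pi i k\cdot x}$ and your explicit head/tail counting argument only spell out what the paper calls a standard comparison between lattice sums and integrals.
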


Referring back to \cref{thm:rates}, this Sobolev example provides a functional-analytic interpretation of \cref{ass:abstract-alignment} and of the learning rates.
For small $m$, the learning rate coincides with that of estimating a function in $H^s(\mathbb T^d)$ from value observations, reflecting the full $d$-dimensional complexity of the data. 
When $m$ exceeds the saturation threshold, all $n$ function-value data points can be used to learn the nullspace of the partial Laplacian $D$, which depends only on $d_0 = d-\lvert S \rvert$ variables. The saturated rate therefore matches the minimax rate for Sobolev regression on $\mathbb T^{d_0}$, making explicit how the differential constraint reduces the problem dimensionality.

\subsection{Example 2: Gradients on bounded domain}\label{sec:grad-ex}

Consider now the case when $D$ is the gradient operator $\nabla$, and the RKHS $\bmH$ is a Sobolev space of smoothness $s>\frac{d}{2}+1$ on domain $\mathcal X\subset\R^d$. We consider the reproducing kernel $k$ associated with $\bmH$ to be a Mat\'ern kernel of smoothness $s$~\citep{williams2006gaussian}.

Note that, while in \cref{sec:piks_setting,sec:piks_rates} the analysis is limited to the case of scalar-valued $D$, it still holds for vector-valued operators. Denoting by $D_i$ each scalar component of $D$, we decompose the differential-data covariance as $\Sigma_D :=\sum_{\ell=1}^k\Sigma_{D_\ell}$, where $\Sigma_{D_\ell} := \EE [D_\ell \phi(z) \otimes D_\ell \phi(z)]$. Provided this new $\Sigma_D$ satisfies \cref{ass:abstract-alignment} (which is unchanged), \cref{thm:rates} still holds. We give more details on the extension to the vector-valued case in Appendix~\ref{sec:adaptation-vector}. 

When $D=\nabla$, we have
\(
    \Sigma_D = \EE\left[\sum_{i=1}^d \partial_i \phi(z) \otimes \partial_i \phi(z)\right].
\) The effect of learning with differential constraints is shown in the following proposition.

\begin{proposition}\label{prop:gradient-rates} Let $\mathcal X\subset\R^d$ be a bounded, connected and Lipschitz domain.
Fix $s>\frac d2 +1$. Let $\bmH = H^s(\Omega)$ and let $D = \nabla$ be the gradient operator. Then the value covariance operator $\Sigma$ admits a decomposition $\Sigma = \Sigma_1+\Sigma_2$ satisfying \cref{ass:abstract-alignment}, with exponents
\[
\alpha = 0,
\qquad
r = \frac{d}{2(s-1)}.
\]
\end{proposition}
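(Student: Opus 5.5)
The plan is to use the fact that, on a connected domain, the only functions invisible to $\nabla$ are the constants. This leaves a one-dimensional residual component, so $\alpha=0$; the remaining component is controlled by the gradient through a Poincaré inequality. Throughout I assume, as in the torus example, that $\rho$ and $\rho_D$ have densities bounded above and below with respect to Lebesgue measure on $\mathcal X$; by equivalence of norms I then replace them by the uniform distribution. Let $\mu:=\E_{x\sim\rho}[\phi(x)]\in\mathcal H$ be the mean embedding. I set
\[
\Sigma_1:=\mu\otimes\mu,\qquad \Sigma_2:=\E\big[(\phi(x)-\mu)\otimes(\phi(x)-\mu)\big],
\]
so that $\Sigma=\Sigma_1+\Sigma_2$ with both operators positive semi-definite. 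Since $\Sigma_1$ has rank one, $\Tr(\Sigma_1(\Sigma_1+\lambda I)^{-1})\le 1$ for every $\lambda$, which gives the first condition in \cref{ass:abstract-alignment} with $\alpha=0$ and $c_1=1$.

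For the second condition I will compare the quadratic forms of $\Sigma_2$ and $\Sigma_D$. For $f\in\mathcal H$,
\[
\langle f,\Sigma_2 f\rangle_{\mathcal H}=\Var_\rho(f(x)),\qquad
\langle f,\Sigma_D f\rangle_{\mathcal H}=\E_{\rho_D}\|\nabla f(z)\|^2 .
\]
Because $\mathcal X$ is bounded, connected and Lipschitz, the Poincaré inequality gives $\|f-\bar f\|_{L^2}^2\le C_P\|\nabla f\|_{L^2}^2$, where $\bar f$ is the Lebesgue mean of $f$. Since the variance is the minimal mean-square deviation from a constant, $\Var_\rho(f)\le \E_\rho(f-\bar f)^2$. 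Combined with the density bounds, this yields $\Sigma_2\preceq C\,\Sigma_D$. Conjugating by $(\Sigma_D+\lambda I)^{-1/2}$ preserves the order, so
\[
\Tr\big((\Sigma_D+\lambda I)^{-1}\Sigma_2\big)\le C\,\Tr\big((\Sigma_D+\lambda I)^{-1}\Sigma_D\big).
\]
It therefore suffices to bound the effective dimension of $\Sigma_D$ by $\lambda^{-d/(2(s-1))}$.

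This last step is the main obstacle, and it amounts to an eigenvalue decay estimate $\lambda_i(\Sigma_D)\lesssim i^{-2(s-1)/d}$. I would write $\Sigma_D=\sum_{\ell=1}^d S_\ell^*S_\ell$, where $S_\ell:\mathcal H\to L^2(\rho_D)$ is the map $f\mapsto\partial_\ell f$. Each $S_\ell$ factors as
\[
H^s(\mathcal X)\xrightarrow{\ \partial_\ell\ } H^{s-1}(\mathcal X)\hookrightarrow L^2(\mathcal X).
\]
The first map is bounded. For the embedding, I would invoke the classical approximation/entropy-number estimates for Sobolev embeddings on Lipschitz domains, obtained via Stein's extension operator (Edmunds--Triebel). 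These give singular values of $H^{s-1}\hookrightarrow L^2$ of order $i^{-(s-1)/d}$, hence $\lambda_i(S_\ell^*S_\ell)\lesssim i^{-2(s-1)/d}$.

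To pass to the sum of $d$ operators, I would use the Ky Fan/Weyl inequality $\lambda_{d(i-1)+1}\big(\sum_\ell A_\ell\big)\le\sum_\ell\lambda_i(A_\ell)$. Since $d$ is fixed, this preserves the decay order. The standard computation
\[
\sum_i \frac{\lambda_i}{\lambda_i+\lambda}\lesssim \lambda^{-d/(2(s-1))},
\]
valid when $2(s-1)/d>1$, i.e.\ $s>\tfrac d2+1$, then yields $r=\frac{d}{2(s-1)}\le 1$ with a suitable $c_2$.
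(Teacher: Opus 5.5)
Your proposal is correct and follows essentially the same route as the paper: the rank-one mean-embedding component $\Sigma_1=\mu\otimes\mu$ gives $\alpha=0$, the centered covariance $\Sigma_2$ is dominated by a multiple of $\Sigma_D$ via the Poincar\'e--Wirtinger inequality, and the remaining task is bounding the effective dimension of $\Sigma_D$ using eigenvalue decay of order $j^{-2(s-1)/d}$. Your factorization through $H^{s-1}\hookrightarrow L^2$ combined with the Ky Fan/Weyl inequality justifies that decay, which the paper only cites as standard, and your bounded-density assumption mildly generalizes the paper's choice of $\rho=\rho_D$ uniform.
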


The \emph{saturated} rate from \cref{thm:rates} is obtained as long as $m \geq m_{crit} = n^{\frac{2(s-1)}{(s-1)-d/2}} \geq n^2$. Then,
\[
    \E\left[\left(\hat u(x) - u^*(x)\right)^2\right]^{1/2} \lesssim n^{-\frac{1}{2(1+\alpha)}},
\]
which proves that the gradient information allows to obtain the parametric rate $n^{-\frac{1}{2}}$.
\section{Numerical experiments}\label{sec:experiments}

We show how the different rate-regimes of \cref{thm:rates} look like in practice through simulations on two learning problems.
We will show i) the \emph{error saturation effect} when $m$ increases and ii) the effect of decreasing the capacity of $\Sigma_1$ (decreasing $\alpha$) via the partial Laplacian. More details on the implementation are available in Appendix~\ref{sec:app-experiments}.

\textbf{Saturation effect on Mat{\'e}rn data.}~~
We sample synthetic data on the 2d disk according to the following function, letting $x_\mathrm{supp}$ be a support point in the domain and $p := \| x - x_\mathrm{supp}\|$
\begin{equation*}
    u^*(x) = \left(1 - \|x\|^2\right) \left[  \left(1 + \sqrt{5} p + (5/3) p^2\right)\exp\left(-\sqrt{5}p\right) \right].
\end{equation*}
$u^*$ is in $H^s$ with $s=3$, and we will use the corresponding Mat{\'e}rn ($\nu=5/2$) kernel.
In \cref{fig:inc-m} we show that, for fixed $n$ and increasing $m$, the error rates switch between the two regimes of \cref{thm:rates}: an initial phase in which $m < m_{\mathrm{crit}}$ and the error decreases as $m$ grows, followed by a phase in which $m$ has grown beyond the threshold and the error has reached a \emph{saturation} with respect to $m$ in which additional data does not improve accuracy.
Here we used $D=\nabla$ and plotted empirical rates using PIKS next to the theoretical results from \cref{sec:grad-ex}. Despite some differences -- notably the experimental curves saturate at similar values of $m$ -- due to the unknown problem-dependent constants affecting the rates, the saturation effect is clear as well as the dependence of the initial rate on both $n$ and $m$.

\begin{figure}
    \centering
    \begin{minipage}[t]{0.48\linewidth}
        \includegraphics[width=\linewidth]{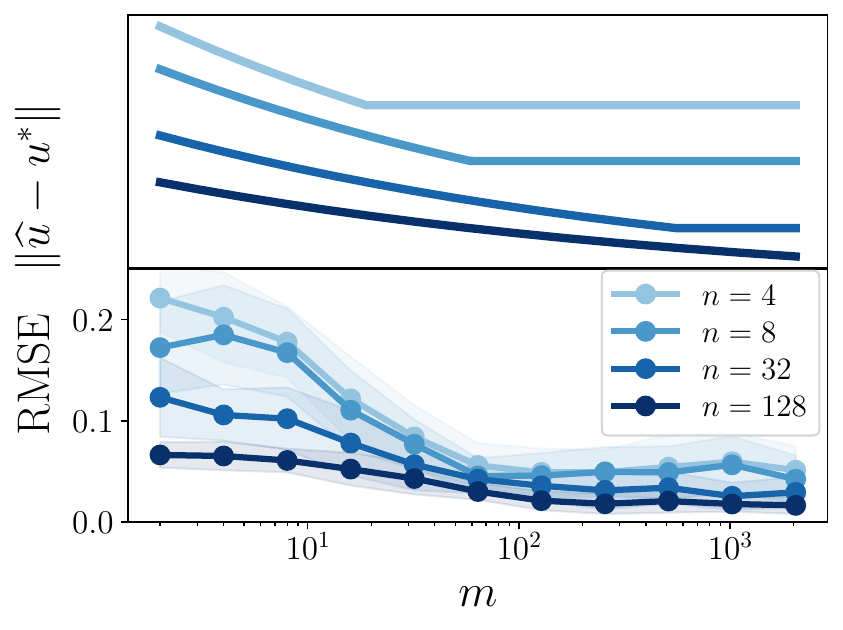}\vspace*{-0.5em}
        \caption{Theoretical and experimental rates for fixed $n$, increasing $m$ showing the saturation effect with gradient information on Mat\'ern data.}\label{fig:inc-m}
    \end{minipage}
    \hfill
    \begin{minipage}[t]{0.48\linewidth}
        \includegraphics[width=\linewidth]{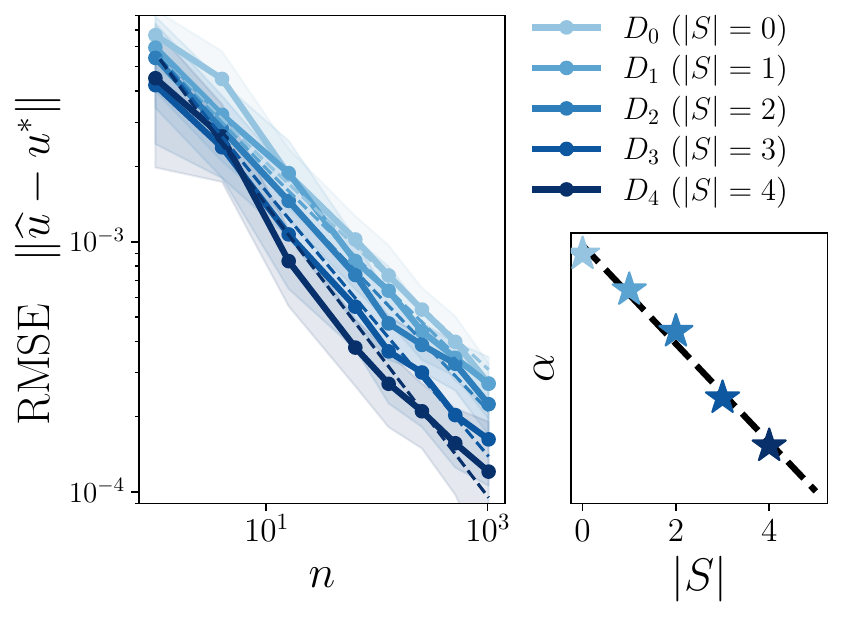}\vspace*{-0.5em}
        \caption{Test error rates increasing $n$ and partial Laplacian dimensions $\lvert S \rvert$. Dashed lines show a best fit for the rates of \cref{lem:sobolev_capacity}. On the right are the inferred $\alpha$ exponents.}\label{fig:part-lap}
    \end{minipage}
\end{figure}

\textbf{Partial Laplacian.}~~We now consider Sobolev spaces on the torus with partial Laplacian information. Unlike the example in \cref{ex:sobolev-torus}, we approximate the infinite-dimensional Fourier construction by truncating to a finite number of frequencies.
With $\cX =\mathbb T^4$, we randomly sample frequency vectors $k_\ell\in\mathbb{Z}^4$ such that a single dimension is active (non-zero) at a time.
Writing $\phi_\ell(x) = \sqrt{2}\cos(2\pi\langle k_\ell, x\rangle)$ for the feature-map, target function and kernel are defined as
\begin{equation*}
    u^*(x) = u_0 + \sum_{\ell=1}^Fc_\ell \phi_\ell(x), \qquad k(x, x') = 1 + \sum_{\ell=1}^F\mu_\ell\phi_\ell(x)\phi_\ell(x'),
\end{equation*}
with $F$ frequency coefficients $c_\ell$ and $\mu_\ell$ which depend on a smoothness parameter of the problem.
$k_\ell$ are sampled such that $u^*$ decomposes into frequency blocks $u^*(x) = u_0 + u_1(x) + u_2(x) + u_3(x) + u_4(x)$, 
each of which only depends on a single dimension of $x$ and cannot be observed unless the partial Laplacian $D_s = \sum_{i=1}^s\partial^2_{z_i}$ includes that specific dimension. This creates a simple setting in which the incrementing the partial Laplacian order $s$, increases the amount of information which can be learned with the differential data. Moreover the Sobolev capacity decomposition from \cref{lem:sobolev_capacity} remains valid after truncation, with constants independent of $F$. 

In \cref{fig:part-lap} we observe the learning rate in $n$, as the type of differential data changes: from having access to function data only ($D_0$) to having access to the full Laplacian ($D_4$).
The number of differential points $m$ is set as a constant factor of $n$. 
Best fits are obtained for the theoretical rates of \cref{thm:finite-sample} to obtain the exponent term of the error rate in $n$.
Despite the high variance of the experiments, due to the noise $\varepsilon, \xi$ and the random sampling of points $x_i, z_j$, it is evident that with each increase in the amount of differential information $\lvert S \rvert$ the learning rate increases.
From the fitted curves we can infer the values of $\alpha$ which -- as predicted by \cref{lem:sobolev_capacity} -- have a linear relationship with the number of partial Laplacian dimensions (denoted by $\lvert S \rvert$), despite the mismatch caused by the finite-dimensional setting.

\section{Conclusion and research directions}

In this paper, we provide a precise characterization of how kernel regression benefits from learning with both function-value and differential observations. Our analysis quantifies how the prediction error depends jointly on the number of value samples, the number of differential samples, and the structure of the underlying differential operator. The resulting rates reveal two regimes: a differential-limited regime, in which the error decreases with both types of samples, and a saturation regime, in which the leading rate matches the physical oracle rate attainable with exact differential information. We also establish convergence in a stronger norm that controls both function-value and differential errors. Several questions remain open: are these rates minimax optimal? Can similar gains be obtained for nonlinear differential operators or misspecified physical constraints? 
Is it possible to preserve the statistical gains with approximate algorithms which are more efficient?

\section*{Acknowledgements}
This material is based upon work supported by the European Commission (Horizon Europe grant ELIAS 101120237), and the Ministry of Education, University and Research (FARE grant ML4IP R205T7J2KP).

{\small
\bibliographystyle{unsrtnat}
\bibliography{references}
}

\newpage
\appendix

\section{Proofs}\label{sec:piks_proofs}

This section provides the proofs of the main results. We first derive the closed-form expressions of the empirical and population minimizers in Section~\ref{sec:closed_forms}, then establish an error decomposition in Section~\ref{sec:error_decomposition}, derive high-probability bounds for its constituent terms in Section~\ref{sec:hp-bounds}, and finally combine these results to obtain finite-sample guarantees and optimized learning rates in Sections~\ref{sec:finite_sample} and~\ref{sec:learning_rates}.

\subsection{Closed forms}\label{sec:closed_forms}

We derive here the closed-form expressions of the empirical minimizer of the regularized empirical risk $\widehat R$ introduced in Section~\ref{sec:piks_setting},
\[
\widehat R(u)
=
\frac1n\sum_{i=1}^n\big(u(x_i)-y_i\big)^2
+\gamma\,\frac1m\sum_{j=1}^m\big(Du(z_j)-d_j\big)^2
+\lambda\|u\|_{\mathcal H}^2,
\]
and of the minimizer of its population counterpart
\[
R_{\lambda,\gamma}(u)
=
\mathbb E\big[(u(x)-y)^2\big]
+
\gamma\,\mathbb E\big[(Du(z)-d)^2\big]
+
\lambda\|u\|_{\mathcal H}^2,
\]
where the expectations are taken over \(x\sim\rho\), \(z\sim\rho_D\), and the corresponding observation noises. The empirical minimizer is given in operator form in Lemma~\ref{lem:empirical-solution} and in finite-dimensional form in Lemma~\ref{lem:piks-finite-dimensional}. The population minimizer is given in Lemma~\ref{lem:population-solution}.

\vspace{1em}
\begin{lemma}[Empirical solution]\label{lem:empirical-solution}
Let \(\lambda>0\) and \(\gamma>0\). Define
\[
\widehat \Sigma
:= \frac{1}{n}\sum_{i=1}^n \phi(x_i)\otimes \phi(x_i),
\qquad
\widehat \Sigma_D
:= \frac{1}{m}\sum_{j=1}^m D\phi(z_j)\otimes D\phi(z_j),
\]
\[
\widehat V
:= \frac{1}{n}\sum_{i=1}^n y_i\,\phi(x_i),
\qquad
\widehat V_D
:= \frac{1}{m}\sum_{j=1}^m d_j\,D\phi(z_j).
\]
Then \(\widehat R\) has a unique minimizer \(\hat u \in\mathcal H\) of the form
\[
\hat u(x) = \langle \widehat w,\phi(x)\rangle_{\mathcal H}, \qquad \widehat w
=
\big(\widehat \Sigma + \gamma \widehat \Sigma_D + \lambda I\big)^{-1}
\big(\widehat V + \gamma \widehat V_D\big).
\]
\end{lemma}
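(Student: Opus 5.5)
The plan is to rewrite $\widehat R$ as a quadratic functional on $\mathcal H$ using the reproducing properties, and then read off the minimizer from its first-order optimality condition. We will use $u(x)=\langle u,\phi(x)\rangle_{\mathcal H}$ and $Du(z)=\langle u,D\phi(z)\rangle_{\mathcal H}$, which hold under \cref{as:D_order,as:k_Cs}. Writing $u=w$ with $w\in\mathcal H$, the value term expands as
\[
\frac1n\sum_{i}\big(\langle w,\phi(x_i)\rangle-y_i\big)^2=\langle w,\widehat\Sigma w\rangle-2\langle w,\widehat V\rangle+\frac1n\sum_i y_i^2,
\]
using the identity $\langle w,(a\otimes a)w\rangle=\langle w,a\rangle^2$ for the rank-one operators. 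The derivative term expands in the same way with $\widehat\Sigma_D,\widehat V_D$, weighted by $\gamma$. Adding $\lambda\langle w,w\rangle$ gives
\[
\widehat R(w)=\langle w,(\widehat\Sigma+\gamma\widehat\Sigma_D+\lambda I)w\rangle-2\langle w,\widehat V+\gamma\widehat V_D\rangle+\text{const}.
\]

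Next we record the properties of the operator $A:=\widehat\Sigma+\gamma\widehat\Sigma_D+\lambda I$. The operators $\widehat\Sigma$ and $\widehat\Sigma_D$ are finite sums of rank-one operators $a\otimes a$, so they are bounded, self-adjoint and positive semi-definite. Since $\gamma>0$ and $\lambda>0$, it follows that $A\succeq\lambda I$. By Lax--Milgram, or simply because a self-adjoint operator bounded below by $\lambda>0$ is invertible with $\|A^{-1}\|\le 1/\lambda$, the operator $A$ is boundedly invertible.

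The minimization itself comes from completing the square. For $\widehat w:=A^{-1}(\widehat V+\gamma\widehat V_D)$, we have
\[
\widehat R(w)-\widehat R(\widehat w)=\langle w-\widehat w,A(w-\widehat w)\rangle\ge\lambda\|w-\widehat w\|_{\mathcal H}^2 .
\]
This inequality shows both that $\widehat w$ is a minimizer and that it is unique, since equality forces $w=\widehat w$. Equivalently, one can set the Fréchet gradient $2Aw-2(\widehat V+\gamma\widehat V_D)$ to zero. Finally, $\hat u(x)=\langle\widehat w,\phi(x)\rangle_{\mathcal H}$ by the reproducing property.

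There is no real obstacle here. The only point needing care is justifying that $D\phi(z)\in\mathcal H$ represents $Du(z)$; this is already stated in the text following \cref{as:k_Cs}, via \citep[Corollary~4.36]{steinwart2008support}. The remaining steps are the routine check of self-adjointness and positivity of the empirical covariance operators.
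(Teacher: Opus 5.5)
Your proposal is correct and takes essentially the same route as the paper: you use the reproducing property to rewrite $\widehat R$ as the quadratic form $\langle w,(\widehat\Sigma+\gamma\widehat\Sigma_D+\lambda I)w\rangle-2\langle w,\widehat V+\gamma\widehat V_D\rangle+\text{const}$, then read off the minimizer from the normal equation. Your completing-the-square identity, together with the bound $A\succeq\lambda I$, just spells out the existence, uniqueness and bounded invertibility that the paper asserts by appealing to strict convexity.
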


\begin{proof}
Using the representation \(u(x)=\langle w,\phi(x)\rangle_{\mathcal H}\),
\(Du(z)=\langle w,D\phi(z)\rangle_{\mathcal H}\), we write \(\widehat R(u)=\widehat R(w)\) as
\[
\widehat R(w)
=
\frac1n\sum_{i=1}^n\big(\langle w,\phi(x_i)\rangle_{\bmH} - y_i\big)^2
+\gamma\,\frac1m\sum_{j=1}^m\big(\langle w,D\phi(z_j)\rangle_{\bmH} - d_j\big)^2
+\lambda\langle w,w\rangle_{\mathcal H}.
\]

Expanding the least-squares terms in \(\mathcal H\), we obtain for the value part
\[
\begin{aligned}
\frac1n\sum_{i=1}^n\big(\langle w,\phi(x_i)\rangle - y_i\big)^2
&= \frac1n\sum_{i=1}^n
\Big(\langle w,\phi(x_i)\rangle^2
- 2y_i \langle w,\phi(x_i)\rangle
+ y_i^2\Big)\\
&= \left\langle w,
\Big(\tfrac1n\sum_{i=1}^n \phi(x_i)\otimes \phi(x_i)\Big) w
\right\rangle_{\mathcal H}
- 2\left\langle \tfrac1n\sum_{i=1}^n y_i\phi(x_i),\,w\right\rangle_{\mathcal H}
+ \text{const}\\
&= \langle w,\widehat \Sigma w\rangle_{\mathcal H}
- 2\langle \widehat V,w\rangle_{\mathcal H}
+ \text{const},
\end{aligned}
\]
and for the derivative part
\[
\begin{aligned}
\gamma\,\frac1m\sum_{j=1}^m\big(\langle w,D\phi(z_j)\rangle - d_j\big)^2
&= \gamma\left(
\langle w,\widehat \Sigma_D w\rangle_{\mathcal H}
- 2\langle \widehat V_D,w\rangle_{\mathcal H}
+ \text{const}
\right).
\end{aligned}
\]

Adding the regularization term \(\lambda\langle w,w\rangle_{\mathcal H}\), we get
\[
\widehat R(w)
=
\big\langle w,(\widehat \Sigma + \gamma \widehat \Sigma_D + \lambda I)w\big\rangle_{\mathcal H}
- 2\big\langle \widehat V + \gamma \widehat V_D, w\big\rangle_{\mathcal H}
+ \text{const}.
\]
Since $\widehat\Sigma$ and $\widehat\Sigma_D$ are positive self-adjoint and $\lambda>0$, the operator $\widehat\Sigma+\gamma\widehat\Sigma_D+\lambda I$ is strictly positive. 
Thus \(\widehat R(w)\) is a strictly convex quadratic functional of \(w\), whose unique minimizer \(\widehat w\) is characterized by the normal equation
\[
(\widehat \Sigma + \gamma \widehat \Sigma_D + \lambda I)\,\widehat w
= \widehat V + \gamma \widehat V_D,
\]
which yields
\[
\widehat w
=
\big(\widehat \Sigma + \gamma \widehat \Sigma_D + \lambda I\big)^{-1}
\big(\widehat V + \gamma \widehat V_D\big).
\]
The corresponding minimizer in function space is
\(\hat u(x)=\langle \widehat w,\phi(x)\rangle_{\mathcal H}\).
\end{proof}

\vspace{1em}
\begin{lemma}[Finite-dimensional form of the PIKS estimator]
\label{lem:piks-finite-dimensional}
Let \(\lambda>0\) and \(\gamma>0\). Define the Gram matrices
\[
(K_{XX})_{ii'} = k(x_i,x_{i'}), 
\qquad
(K_{XZ})_{ij} = D_1 k(z_j,x_i),
\]
\[
(K_{ZX})_{ji} = D_2 k(x_i,z_j),
\qquad
(K_{ZZ}^{D})_{jj'} = D_1D_2 k(z_j,z_{j'}).
\]
Then the PIKS estimator can be computed as
\[
\hat u(\cdot)
=
\frac1{\sqrt n}\sum_{i=1}^n \alpha_i k(x_i,\cdot)
+
\sqrt{\frac{\gamma}{m}}\sum_{j=1}^m \beta_j D_1k(z_j,\cdot),
\qquad
\theta=
\begin{bmatrix}
\alpha\\
\beta
\end{bmatrix},
\]
where \(\theta\in\mathbb R^{n+m}\) solves
\[
\left(
\begin{bmatrix}
\tfrac1n K_{XX}
&
\sqrt{\tfrac{\gamma}{nm}}\,K_{XZ}
\\[4pt]
\sqrt{\tfrac{\gamma}{nm}}\,K_{ZX}
&
\tfrac{\gamma}{m}K_{ZZ}^{D}
\end{bmatrix}
+
\lambda I
\right)\theta
=
\begin{bmatrix}
y/\sqrt n\\
\sqrt{\gamma}\,d/\sqrt m
\end{bmatrix}.
\]
\end{lemma}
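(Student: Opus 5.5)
The plan is to start from the operator form of Lemma~\ref{lem:empirical-solution} and rewrite it through sampling operators, using the push-through identity. Define the value sampling operator $S_X:\mathcal H\to\mathbb R^n$, $(S_Xw)_i=\langle w,\phi(x_i)\rangle_{\mathcal H}/\sqrt n$, and the derivative sampling operator $S_Z:\mathcal H\to\mathbb R^m$, $(S_Zw)_j=\sqrt{\gamma/m}\,\langle w,D\phi(z_j)\rangle_{\mathcal H}$. Stack them into $S:=\begin{bmatrix}S_X\\ S_Z\end{bmatrix}:\mathcal H\to\mathbb R^{n+m}$. The adjoint acts as $S^*\theta=\frac1{\sqrt n}\sum_i\alpha_i\phi(x_i)+\sqrt{\gamma/m}\sum_j\beta_jD\phi(z_j)$. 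A direct check then gives $S^*S=\widehat\Sigma+\gamma\widehat\Sigma_D$. Setting $b:=(y/\sqrt n,\sqrt\gamma\,d/\sqrt m)$, we also get $S^*b=\widehat V+\gamma\widehat V_D$.

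Next, apply the push-through identity $(S^*S+\lambda I)^{-1}S^*=S^*(SS^*+\lambda I)^{-1}$. It holds because $S^*(SS^*+\lambda I)=(S^*S+\lambda I)S^*$, and both inverses exist since $\lambda>0$ and the operators are positive. This gives $\widehat w=S^*\theta$ with $\theta=(SS^*+\lambda I)^{-1}b$. Since $\hat u(\cdot)=\langle \widehat w,\phi(\cdot)\rangle$, and $\langle\phi(x_i),\phi(\cdot)\rangle=k(x_i,\cdot)$ while $\langle D\phi(z_j),\phi(\cdot)\rangle=D_1k(z_j,\cdot)$, we recover the claimed expansion of $\hat u$.

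The remaining work is to identify $SS^*$ entrywise with the block matrix in the statement, using the reproducing property for kernels and derivatives stated after \cref{as:k_Cs}. The four blocks are as follows.
\begin{itemize}
\item $\langle\phi(x_i),\phi(x_{i'})\rangle/n=k(x_i,x_{i'})/n$.
\item $\langle\phi(x_i),D\phi(z_j)\rangle\sqrt{\gamma/(nm)}$. Here $\langle\phi(x_i),D\phi(z_j)\rangle=(D\phi(z_j))(x_i)=D_1k(z_j,x_i)$, which matches $(K_{XZ})_{ij}$.
\item The lower-left block is the transpose of the previous one. By symmetry of $k$ it equals $D_2k(x_i,z_j)$.
\item $\frac{\gamma}{m}\langle D\phi(z_j),D\phi(z_{j'})\rangle=\frac{\gamma}{m}D_1D_2k(z_j,z_{j'})$.
\end{itemize}

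The main point needing care is this last identity. It requires interchanging $D$ with the inner product, that is, $D_{z'}\langle D\phi(z),\phi(z')\rangle=\langle D\phi(z),D\phi(z')\rangle$. This follows from the derivative reproducing property $Du(z')=\langle u,D\phi(z')\rangle$ applied to $u=D\phi(z)$, together with the $C^{s,s}$ regularity of $k$ in \cref{as:k_Cs}, which is what guarantees $D\phi(z)\in\mathcal H$ (Steinwart--Christmann, Cor.~4.36). One also has to check the index conventions so that $D_1$ and $D_2$ act on the correct arguments. Uniqueness of $\theta$ follows because $SS^*+\lambda I$ is positive definite.
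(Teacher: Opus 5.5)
Your proposal is correct and follows essentially the same route as the paper: your stacked sampling operator $S$ is exactly the adjoint $\Gamma_\gamma^\ast$ of the paper's $\Gamma_\gamma$, and both arguments apply the push-through identity and then identify $SS^\ast=\Gamma_\gamma^\ast\Gamma_\gamma$ with the block Gram matrix. The differences are cosmetic. You start from the normal equation of Lemma~\ref{lem:empirical-solution}, via $S^\ast S=\widehat\Sigma+\gamma\widehat\Sigma_D$ and $S^\ast b=\widehat V+\gamma\widehat V_D$, whereas the paper rewrites $\widehat R(u)=\|\Gamma_\gamma^\ast u-b_\gamma\|_2^2+\lambda\|u\|_{\mathcal H}^2$ directly; your entrywise check of the four blocks is also more explicit than the paper's.
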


\begin{proof}
Let \((e_i)_{i=1}^n\) and \((f_j)_{j=1}^m\) denote the canonical bases of \(\mathbb R^n\) and \(\mathbb R^m\), and define
\[
\Phi e_i=k(x_i,\cdot),
\qquad
D\Phi f_j=D_1k(z_j,\cdot).
\]
Set
\[
\Gamma_\gamma=
\Big[
\tfrac1{\sqrt n}\Phi\quad \sqrt{\tfrac{\gamma}{m}}D\Phi
\Big],
\qquad
b_\gamma=
\begin{bmatrix}
y/\sqrt n\\
\sqrt{\gamma}\,d/\sqrt m
\end{bmatrix}.
\]
Then
\[
\widehat R(u)=\|\Gamma_\gamma^\ast u-b_\gamma\|_2^2+\lambda\|u\|_{\mathcal H}^2.
\]
The optimality condition gives
\[
(\Gamma_\gamma\Gamma_\gamma^\ast+\lambda I)\hat u=\Gamma_\gamma b_\gamma.
\]
Using
\[
(\Gamma_\gamma\Gamma_\gamma^\ast+\lambda I)^{-1}\Gamma_\gamma
=
\Gamma_\gamma(\Gamma_\gamma^\ast\Gamma_\gamma+\lambda I)^{-1},
\]
we get
\[
\hat u=\Gamma_\gamma\theta,
\qquad
\theta=(\Gamma_\gamma^\ast\Gamma_\gamma+\lambda I)^{-1}b_\gamma.
\]
Finally,
\[
\Gamma_\gamma^\ast\Gamma_\gamma
=
\begin{bmatrix}
\tfrac1n K_{XX}
&
\sqrt{\tfrac{\gamma}{nm}}K_{XZ}\\[4pt]
\sqrt{\tfrac{\gamma}{nm}}K_{ZX}
&
\tfrac{\gamma}{m}K_{ZZ}^{D}
\end{bmatrix},
\]
which gives the claimed system. Moreover, since
\[
\hat u=\Gamma_\gamma\theta
=
\frac1{\sqrt n}\Phi\alpha
+
\sqrt{\frac{\gamma}{m}}D\Phi\beta,
\]
we obtain
\[
\hat u(\cdot)
=
\frac1{\sqrt n}\sum_{i=1}^n \alpha_i k(x_i,\cdot)
+
\sqrt{\frac{\gamma}{m}}\sum_{j=1}^m \beta_jD_1k(z_j,\cdot).
\]
\end{proof}

\vspace{1em}
\begin{lemma}[Population solution]\label{lem:population-solution}
Let \(x\sim \rho\) and \(z\sim \rho_D\), and let the expectations involving \(y\) and \(d\) be taken with respect to the joint laws induced by the observation models. Define
\[
\Sigma := \mathbb E_{x\sim \rho}\big[\phi(x)\otimes \phi(x)\big],
\qquad
\Sigma_D := \mathbb E_{z\sim \rho_D}\big[D\phi(z)\otimes D\phi(z)\big],
\]
\[
V := \mathbb E\big[y\,\phi(x)\big],
\qquad
V_D := \mathbb E\big[d\,D\phi(z)\big].
\]
Then \(R_{\lambda,\gamma}\) has a unique minimizer \(u_{\lambda,\gamma}\in\mathcal H\) of the form
\[
u_{\lambda,\gamma}(x)=\langle w_{\lambda,\gamma},\phi(x)\rangle_{\mathcal H},
\qquad
w_{\lambda,\gamma}
=
\big(\Sigma + \gamma \Sigma_D + \lambda I\big)^{-1}(V + \gamma V_D).
\]
\end{lemma}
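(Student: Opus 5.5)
The plan mirrors the proof of Lemma~\ref{lem:empirical-solution}, with empirical averages replaced by expectations. Using the reproducing properties $u(x)=\langle w,\phi(x)\rangle_{\mathcal H}$ and $Du(z)=\langle w,D\phi(z)\rangle_{\mathcal H}$, I will write $R_{\lambda,\gamma}$ as a functional of $w\in\mathcal H$ and expand each square. The first step is to check that every expectation involved is finite. By \cref{as:bounded}, $\|\phi(x)\|_{\mathcal H}\le\kappa_V$, $\|D\phi(z)\|_{\mathcal H}\le\kappa_D$, $|y|\le L_V$ and $|d|\le L_D$ almost surely. Hence $\phi(x)\otimes\phi(x)$ and $D\phi(z)\otimes D\phi(z)$ are Bochner integrable in the space of trace-class operators, and $y\phi(x)$ and $d\,D\phi(z)$ are Bochner integrable in $\mathcal H$. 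It follows that $\Sigma,\Sigma_D$ are bounded, self-adjoint and positive, and that $V,V_D\in\mathcal H$.

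The second step is to identify the terms. Because bounded linear functionals commute with Bochner integrals, I will obtain
\[
\mathbb E\big[\langle w,\phi(x)\rangle_{\mathcal H}^2\big]=\langle w,\Sigma w\rangle_{\mathcal H},
\qquad
\mathbb E\big[y\langle w,\phi(x)\rangle_{\mathcal H}\big]=\langle V,w\rangle_{\mathcal H}.
\]
The same identities hold for the derivative terms with $\Sigma_D$ and $V_D$. This gives
\[
R_{\lambda,\gamma}(w)=\big\langle w,(\Sigma+\gamma\Sigma_D+\lambda I)w\big\rangle_{\mathcal H}-2\langle V+\gamma V_D,w\rangle_{\mathcal H}+\mathbb E[y^2]+\gamma\,\mathbb E[d^2],
\]
where the constant term is finite by boundedness.

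The third step is to show that $A:=\Sigma+\gamma\Sigma_D+\lambda I$ satisfies $A\succeq\lambda I$. Then $A$ is boundedly invertible with $\|A^{-1}\|\le\lambda^{-1}$, and the functional is strongly convex and continuous, so a unique minimizer exists. Completing the square gives
\[
R_{\lambda,\gamma}(w)=\langle w-w_{\lambda,\gamma},A(w-w_{\lambda,\gamma})\rangle_{\mathcal H}+\text{const},
\qquad
w_{\lambda,\gamma}=A^{-1}(V+\gamma V_D).
\]
Alternatively, setting the Fréchet derivative to zero gives the normal equation $Aw=V+\gamma V_D$. Either way, the unique minimizer is the claimed $w_{\lambda,\gamma}$, and $u_{\lambda,\gamma}(x)=\langle w_{\lambda,\gamma},\phi(x)\rangle_{\mathcal H}$.

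The only step needing any care is the interchange of expectation and inner product, that is, the Bochner integrability justification. It is routine under the boundedness assumption.
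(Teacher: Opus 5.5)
Your proposal is correct and takes the same route as the paper. The paper's proof just repeats the quadratic expansion of Lemma~\ref{lem:empirical-solution} with empirical averages replaced by expectations and reads off the normal equation $(\Sigma+\gamma\Sigma_D+\lambda I)w=V+\gamma V_D$. You also justify the interchange of expectation and inner product via Bochner integrability under \cref{as:bounded}, which the paper leaves implicit, and you handle that step correctly.
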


\begin{proof}
The derivations are the same as in the empirical case, (Lemma~\ref{lem:empirical-solution}) with empirical averages replaced by expectations and \(\widehat \Sigma,\widehat \Sigma_D,\widehat V,\widehat V_D\) replaced by \(\Sigma,\Sigma_D,V,V_D\), and with the same quadratic expansion of \(R_{\lambda,\gamma}(w)\).

\end{proof}

\subsection{Error decomposition}\label{sec:error_decomposition}

We next establish a standard error decomposition adapted to our setting. The idea is to separate the error into a bias term and an estimation term, controlled by the deviations of the empirical covariance operators from their population counterparts.

\vspace{1em}
\begin{lemma}[Error decomposition]\label{lem:error_decomp}
Let
\[
\Sigma_\gamma := \Sigma + \gamma \Sigma_D,
\qquad
\widehat \Sigma_\gamma := \widehat \Sigma + \gamma \widehat \Sigma_D,
\qquad
V_\gamma := V + \gamma V_D,
\qquad
\widehat V_\gamma := \widehat V + \gamma \widehat V_D,
\]
and define
\[
w_{\lambda,\gamma} := (\Sigma_\gamma + \lambda I)^{-1} V_\gamma,
\qquad
\widehat w := (\widehat \Sigma_\gamma + \lambda I)^{-1} \widehat V_\gamma,
\qquad
u^*(x)=\langle w^*,\phi(x)\rangle_{\mathcal H}.
\]

Assume Assumption \ref{as:attainability} holds. Define the quantities
\begin{align*}
    &\Delta_{\Sigma,1} := \|(\Sigma_\gamma + \lambda I)^{1/2}(\widehat \Sigma_\gamma + \lambda I)^{-1/2}\big\|\\
    &\Delta_{\Sigma,2}:= \big\|(\Sigma_\gamma + \lambda I)^{-1/2}(\Sigma_\gamma - \widehat \Sigma_\gamma)
    \big\|\\
    &\Delta_V:= \big\|(\Sigma_\gamma + \lambda I)^{-1/2}(\widehat V_\gamma - V_\gamma)\big\|\\
    &\mathcal{B}(\lambda, \gamma):= \big\|(\Sigma_\gamma + \lambda I)^{-1/2} w^*\big\|.
\end{align*}
Then
\[
\|\Sigma^{1/2}(\widehat w - w^*)\|
\le
 \Delta_{\Sigma,1}^2 (\Delta_V + \big\|w^* \big\| \Delta_{\Sigma,2}) + \lambda \mathcal{B}(\lambda, \gamma).
\]
\end{lemma}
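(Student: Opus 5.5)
We split $\widehat w - w^*$ into an estimation part $\widehat w - w_{\lambda,\gamma}$ and a bias part $w_{\lambda,\gamma}-w^*$. Throughout we write $A:=\Sigma_\gamma+\lambda I$ and $\widehat A:=\widehat\Sigma_\gamma+\lambda I$.

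\textbf{Reduction to the $\Sigma_\gamma$-geometry.} Since $\gamma\Sigma_D\succeq 0$, we have $\Sigma\preceq \Sigma_\gamma+\lambda I = A$. Hence $\|\Sigma^{1/2}A^{-1/2}\|\le 1$, and so for any $h\in\mathcal H$
\[
\|\Sigma^{1/2}h\|\le \|A^{1/2}h\|.
\]
It therefore suffices to bound $\|A^{1/2}(\widehat w-w_{\lambda,\gamma})\|$ and $\|A^{1/2}(w_{\lambda,\gamma}-w^*)\|$ separately.

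\textbf{Bias term.} By \cref{as:attainability,as:consistency} we have $V=\Sigma w^*$, since $\E[y\phi(x)]=\E[\langle w^*,\phi(x)\rangle\phi(x)]$. Similarly $V_D=\Sigma_D w^*$, since $\E[d\mid z]=Du^*(z)=\langle w^*,D\phi(z)\rangle$. Hence $V_\gamma=\Sigma_\gamma w^*$, and
\[
w_{\lambda,\gamma}-w^*=A^{-1}\Sigma_\gamma w^*-w^*=-\lambda A^{-1}w^*.
\]
It follows that $\|A^{1/2}(w_{\lambda,\gamma}-w^*)\|=\lambda\|A^{-1/2}w^*\|=\lambda\mathcal B(\lambda,\gamma)$. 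Note that one could even bound this term directly in the $\Sigma^{1/2}$ norm.

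\textbf{Estimation term.} Write
\[
\widehat w-w_{\lambda,\gamma}=\widehat A^{-1}\big(\widehat V_\gamma-\widehat A w_{\lambda,\gamma}\big).
\]
Using $A w_{\lambda,\gamma}=V_\gamma$, the inner vector becomes
\[
\widehat V_\gamma - V_\gamma - (\widehat\Sigma_\gamma-\Sigma_\gamma)w_{\lambda,\gamma}
=(\widehat V_\gamma-V_\gamma)+(\Sigma_\gamma-\widehat\Sigma_\gamma)w_{\lambda,\gamma}.
\]
Then
\[
A^{1/2}\widehat A^{-1}=\big(A^{1/2}\widehat A^{-1/2}\big)\big(\widehat A^{-1/2}A^{1/2}\big)A^{-1/2}.
\]
Both factors in parentheses have norm $\Delta_{\Sigma,1}$; the second is the adjoint of the first. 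This gives
\[
\|A^{1/2}(\widehat w-w_{\lambda,\gamma})\|\le \Delta_{\Sigma,1}^2\Big(\|A^{-1/2}(\widehat V_\gamma-V_\gamma)\|+\|A^{-1/2}(\Sigma_\gamma-\widehat\Sigma_\gamma)w_{\lambda,\gamma}\|\Big).
\]
The first term inside the parentheses is $\Delta_V$. The second is at most $\Delta_{\Sigma,2}\|w_{\lambda,\gamma}\|$. Finally, $\|w_{\lambda,\gamma}\|=\|A^{-1}\Sigma_\gamma w^*\|\le\|w^*\|$, because $A^{-1}\Sigma_\gamma$ has spectrum in $[0,1)$.

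\textbf{Conclusion and main obstacle.} Combining the two parts by the triangle inequality gives the claim. The only delicate point is the factorization of $A^{1/2}\widehat A^{-1}$ so that $\Delta_{\Sigma,1}$ appears squared. This requires checking that $\|\widehat A^{-1/2}A^{1/2}\|=\|A^{1/2}\widehat A^{-1/2}\|$, which holds by adjointness since both operators are self-adjoint. The remaining steps are routine, so no further obstacle is expected.
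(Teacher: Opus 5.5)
Your proof is correct and follows essentially the same route as the paper. It uses the same bias/estimation split and the same identity $\widehat w-w_{\lambda,\gamma}=(\widehat\Sigma_\gamma+\lambda I)^{-1}\big[(\widehat V_\gamma-V_\gamma)+(\Sigma_\gamma-\widehat\Sigma_\gamma)w_{\lambda,\gamma}\big]$ (the paper reaches it via the resolvent identity), the same factorization producing $\Delta_{\Sigma,1}^2$, and the same bound $\|w_{\lambda,\gamma}\|\le\|w^*\|$. The only difference is that you pass to the $(\Sigma_\gamma+\lambda I)^{1/2}$-norm once at the start instead of inserting $\|\Sigma^{1/2}(\Sigma_\gamma+\lambda I)^{-1/2}\|\le 1$ in each term, which directly gives the stronger-norm bound the paper uses later; you are also right to invoke Assumption~\ref{as:consistency} for $V_D=\Sigma_D w^*$, which the paper's proof needs but attributes to attainability alone.
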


\begin{proof}
We first split the error into an estimation and a bias term:
\[
\|\Sigma^{1/2}(\widehat w - w^*)\|
\leq
\|\Sigma^{1/2}(\widehat w - w_{\lambda,\gamma})\|
+
\|\Sigma^{1/2}(w_{\lambda,\gamma} - w^*)\|.
\]
We now bound each term.

\emph{Bias term.} Under the Assumption \ref{as:attainability} we have
\[
V = \Sigma w^*,
\qquad
V_D = \Sigma_D w^*,
\]
hence
\[
V_\gamma = V + \gamma V_D = (\Sigma + \gamma \Sigma_D)w^* = \Sigma_\gamma w^*.
\]
Therefore
\[
\begin{aligned}
w_{\lambda,\gamma} - w^*
&= (\Sigma_\gamma + \lambda I)^{-1} V_\gamma - w^* \\
&= (\Sigma_\gamma + \lambda I)^{-1} \Sigma_\gamma w^* - w^* \\
&= \big[(\Sigma_\gamma + \lambda I)^{-1} \Sigma_\gamma - I\big] w^* \\
&= -\lambda (\Sigma_\gamma + \lambda I)^{-1} w^*.
\end{aligned}
\]
As a consequence,
\[
\|\Sigma^{1/2}(w_{\lambda,\gamma} - w^*)\|
= \lambda \big\|\Sigma^{1/2}(\Sigma_\gamma + \lambda I)^{-1} w^*\big\|
\leq \lambda \big\|(\Sigma_\gamma + \lambda I)^{-1/2} w^*\big\|,
\]
where we used \(\|\Sigma^{1/2}(\Sigma_\gamma + \lambda I)^{-1/2}\|\le 1\) since \(\Sigma\preccurlyeq \Sigma_\gamma + \lambda I\).

\emph{Estimation term.}
We have
\[
\widehat w - w_{\lambda,\gamma}
=
(\widehat \Sigma_\gamma + \lambda I)^{-1}(\widehat V_\gamma - V_\gamma)
+
\Big[(\widehat \Sigma_\gamma + \lambda I)^{-1} - (\Sigma_\gamma + \lambda I)^{-1}\Big]V_\gamma,
\]
and, using \(A^{-1} - B^{-1} = A^{-1}(B-A)B^{-1}\) with
\(A=\widehat \Sigma_\gamma + \lambda I\), \(B=\Sigma_\gamma + \lambda I\),
\[
(\widehat \Sigma_\gamma + \lambda I)^{-1} - (\Sigma_\gamma + \lambda I)^{-1}
=
(\widehat \Sigma_\gamma + \lambda I)^{-1}(\Sigma_\gamma - \widehat \Sigma_\gamma)(\Sigma_\gamma + \lambda I)^{-1}.
\]

Multiplying by \(\Sigma^{1/2}\) and inserting
\((\Sigma_\gamma + \lambda I)^{\pm 1/2}\), we obtain
\[
\begin{aligned}
\|\Sigma^{1/2}(\widehat w - w_{\lambda,\gamma})\|
&\le
\big\|\Sigma^{1/2}(\widehat \Sigma_\gamma + \lambda I)^{-1/2}\big\|\,
\big\|(\widehat \Sigma_\gamma + \lambda I)^{-1/2}(\Sigma_\gamma + \lambda I)^{1/2}\big\| \\
&\quad\times \Big[
\big\|(\Sigma_\gamma + \lambda I)^{-1/2}(\widehat V_\gamma - V_\gamma)\big\| \\
&\qquad\quad+
\big\|(\Sigma_\gamma + \lambda I)^{-1/2}(\Sigma_\gamma - \widehat \Sigma_\gamma)\big\|\,
\big\|(\Sigma_\gamma + \lambda I)^{-1}V_{\gamma}\big\|
\Big].
\end{aligned}
\]

Then,
\[
\begin{aligned}
\big\|\Sigma^{1/2}(\widehat \Sigma_\gamma + \lambda I)^{-1/2}\big\|
&=
\big\|\Sigma^{1/2}(\Sigma_\gamma + \lambda I)^{-1/2}
      (\Sigma_\gamma + \lambda I)^{1/2}(\widehat \Sigma_\gamma + \lambda I)^{-1/2}\big\|\\
&\le
\big\|\Sigma^{1/2}(\Sigma_\gamma + \lambda I)^{-1/2}\big\|\,
\big\|(\Sigma_\gamma + \lambda I)^{1/2}(\widehat \Sigma_\gamma + \lambda I)^{-1/2}\big\|\\
&\le
\big\|(\Sigma_\gamma + \lambda I)^{1/2}(\widehat \Sigma_\gamma + \lambda I)^{-1/2}\big\|,
\end{aligned}
\]
where we used again that \(\|\Sigma^{1/2}(\Sigma_\gamma + \lambda I)^{-1/2}\|\le 1\).

Moreover, using $V_{\gamma} = \Sigma_{\gamma}w^*$, we have
\begin{align*}
\big\|(\Sigma_\gamma + \lambda I)^{-1}V_{\gamma}\big\|
&= \big\|(\Sigma_\gamma + \lambda I)^{-1}\Sigma_{\gamma}w^*\big\|\\
&\leq \big\|w^* \big\|.
\end{align*}

\emph{Conclusion.} We define 
\begin{align*}
    &\Delta_{\Sigma,1} := \big\|(\Sigma_\gamma + \lambda I)^{1/2}(\widehat \Sigma_\gamma + \lambda I)^{-1/2}\big\|\\
    &\Delta_{\Sigma,2}:= \big\|(\Sigma_\gamma + \lambda I)^{-1/2}(\Sigma_\gamma - \widehat \Sigma_\gamma)
    \big\|\\
        &\Delta_V:= \big\|(\Sigma_\gamma + \lambda I)^{-1/2}(\widehat V_\gamma - V_\gamma)\big\|\\
    &\mathcal{B}(\lambda, \gamma):= \big\|(\Sigma_\gamma + \lambda I)^{-1/2} w^*\big\|.
\end{align*}

Collecting the terms, we obtain
\[
\|\Sigma^{1/2}(\widehat w - w_{\lambda,\gamma})\|
\;\le\; \Delta_{\Sigma,1}^2 (\Delta_V + \big\|w^* \big\| \Delta_{\Sigma,2}),
\]
and 
\[
\|\Sigma^{1/2}(w_{\lambda,\gamma} - w^*)\|
\;\le\; \lambda \mathcal{B}(\lambda, \gamma),
\]

which together yields
\[
\|\Sigma^{1/2}(\widehat w - w^*)\|
\le
 \Delta_{\Sigma,1}^2 (\Delta_V + \big\|w^* \big\| \Delta_{\Sigma,2}) + \lambda \mathcal{B}(\lambda, \gamma).
\]
\end{proof}

\subsection{High-probability bounds for the error decomposition terms
\texorpdfstring{$\Delta_{\Sigma,1}$, $\Delta_{\Sigma,2}$, and $\Delta_V$}
{Delta Sigma 1, Delta Sigma 2, and Delta V}}\label{sec:hp-bounds}

We next control the random quantities appearing in the error decomposition of Lemma~\ref{lem:error_decomp}. Compared with the standard proof for kernel ridge regression, the only additional difficulty is the presence of the derivative term and the need to keep track of its dependence on $\gamma$. This leads us to decompose both covariance and moment deviations into value and derivative contributions. The remainder of the argument then follows the standard kernel ridge regression proof strategy: we derive high-probability bounds for $\Delta_V$ and $\Delta_{\Sigma,2}$, and control the multiplicative factor $\Delta_{\Sigma,1}$ through the auxiliary normalized covariance deviation
\[
\Delta_{\Sigma,3}
:=
\big\|(\Sigma_\gamma + \lambda I)^{-1/2}(\Sigma_\gamma-\widehat\Sigma_\gamma)(\Sigma_\gamma + \lambda I)^{-1/2}\big\|.
\]
The key point is that when $\Delta_{\Sigma,3}<1$, the empirical covariance operator remains well conditioned relative to $\Sigma_\gamma+\lambda I$, which yields a bound on $\Delta_{\Sigma,1}$.

\vspace{1em}
\begin{lemma}[High-probability bound on $\Delta_{\Sigma,3}$]\label{lem:delta-sigma-3} We define
\[
\Delta_{\Sigma,3}
:=
\big\|(\Sigma_\gamma + \lambda I)^{-1/2}(\Sigma_\gamma - \widehat \Sigma_\gamma)
    (\Sigma_\gamma + \lambda I)^{-1/2}\big\|.
\]
Let $\delta\in(0,1)$ and $\lambda>0$ satisfy
\[
\frac{36}{n}\log\frac{n}{\delta} \;\le\; \lambda \;\le\; \|\Sigma\|_\infty,
\qquad
\frac{36}{m}\log\frac{m}{\delta} \;\le\; \lambda \gamma^{-1} \;\le\; \|\Sigma_D\|_\infty.
\]
Then, with probability at least $1-2\delta$,
\[
\Delta_{\Sigma,3} \;\le\; \tfrac{1}{2}.
\]
\end{lemma}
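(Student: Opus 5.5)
Set $B:=(\Sigma_\gamma+\lambda I)^{-1/2}$. The plan is to split the normalized deviation into a value part and a derivative part and to control each one with a standard Bernstein-type concentration inequality for sums of i.i.d.\ self-adjoint operators. The intrinsic-dimension (Tropp/Minsker) version is the natural tool, since it produces logarithmic factors of exactly the type $\log(n/\delta)$ and $\log(m/\delta)$. Because $\Sigma_\gamma-\widehat\Sigma_\gamma=(\Sigma-\widehat\Sigma)+\gamma(\Sigma_D-\widehat\Sigma_D)$, the triangle inequality gives
\[
\Delta_{\Sigma,3}\le \big\|B(\Sigma-\widehat\Sigma)B\big\|+\gamma\big\|B(\Sigma_D-\widehat\Sigma_D)B\big\|.
\]
It then suffices to show that each term is at most $1/4$ with probability at least $1-\delta$. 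A union bound then gives the claim with probability at least $1-2\delta$.

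For the value term, write $B(\widehat\Sigma-\Sigma)B=\frac1n\sum_i (\zeta_i-\E\zeta_i)$ with $\zeta_i:=B\phi(x_i)\otimes B\phi(x_i)$. Since $\Sigma\preceq\Sigma_\gamma$, we have $B\preceq(\Sigma+\lambda I)^{-1/2}$ in the sense $\|B h\|\le\|(\Sigma+\lambda I)^{-1/2}h\|$. Hence $\|\zeta_i\|=\|B\phi(x_i)\|^2\le\kappa_V^2/\lambda$, and $\E\zeta_i^2\preceq(\kappa_V^2/\lambda)\,B\Sigma B$ with $\|B\Sigma B\|\le1$. The intrinsic dimension of $B\Sigma B$ is $\Tr(B\Sigma B)/\|B\Sigma B\|$. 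I would bound the trace by $\kappa_V^2/\lambda$. For the norm, $\|B\Sigma B\|\ge\|\Sigma\|/(\|\Sigma_\gamma\|+\lambda)$, and I would bound this from below using $\lambda\le\|\Sigma\|_\infty$. Alternatively, I would use the cruder version of the inequality in which the dimensional factor is absorbed into the $\log(n/\delta)$ term, as in the standard KRR analysis of Rudi et al. Bernstein then yields, with probability at least $1-\delta$,
\[
\big\|B(\Sigma-\widehat\Sigma)B\big\|\lesssim \frac{\kappa_V^2\log(n/\delta)}{\lambda n}+\sqrt{\frac{\kappa_V^2\log(n/\delta)}{\lambda n}}.
\]
With $\lambda\ge\frac{36}{n}\log\frac n\delta$ (constants normalized, $\kappa_V\le1$ or absorbed), this is at most $1/4$.

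The derivative term is handled identically with $\gamma^{1/2}B D\phi(z_j)$ in place of $B\phi(x_i)$. Since $\gamma\Sigma_D\preceq\Sigma_\gamma$, we get $\|\gamma^{1/2}BD\phi(z)\|^2\le\gamma\kappa_D^2/\lambda=\kappa_D^2/(\lambda\gamma^{-1})$, and $\gamma B\Sigma_DB\preceq I$. Bernstein therefore gives the same bound with $n$ replaced by $m$ and $\lambda$ replaced by $\lambda\gamma^{-1}$. The condition $\lambda\gamma^{-1}\ge\frac{36}{m}\log\frac m\delta$ then makes this term at most $1/4$. The upper bound $\lambda\gamma^{-1}\le\|\Sigma_D\|_\infty$ is used only to control the intrinsic-dimension ratio, by bounding $\|\gamma B\Sigma_D B\|$ below by a constant.

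The main obstacle is getting the intrinsic-dimension factor in the operator Bernstein inequality to collapse into $\log(n/\delta)$ rather than $\log(\mathrm{const}/(\lambda\delta))$. The lower bounds $\|B\Sigma B\|\ge\|\Sigma\|/(\|\Sigma\|+\gamma\|\Sigma_D\|+\lambda)$ can be small when $\gamma$ is large, so this is delicate. The first thing I would try is to bound $\Tr(B\Sigma B)\le\kappa_V^2/\lambda\le n$ by the lower bound on $\lambda$, and to use the version of the inequality whose log factor is $\log(4\Tr/(\|\cdot\|\delta))$. The remaining factor of $\|\cdot\|^{-1}$ I would handle either by working with the unnormalized variance proxy $\kappa_V^2/\lambda\cdot I$ restricted to a finite-rank approximation, or by accepting the slightly worse constant that the $36$ is meant to absorb.
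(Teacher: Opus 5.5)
Your skeleton (triangle split into a value part and a derivative part, a Bernstein bound at level $1/4$ for each, union bound) is the same as the paper's. Your bounds $\|\zeta_i\|\le\kappa_V^2/\lambda$ and $\E\zeta_i^2\preceq(\kappa_V^2/\lambda)B\Sigma B$ are fine. The gap is exactly the step you flag as the obstacle, and the way you plan to use the hypotheses there does not work.

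With the normalization $B=(\Sigma_\gamma+\lambda I)^{-1/2}$, neither $\|B\Sigma B\|$ nor $\|\gamma B\Sigma_D B\|$ is bounded below by a constant under $\lambda\le\|\Sigma\|_\infty$ and $\lambda\gamma^{-1}\le\|\Sigma_D\|_\infty$. For example, if $\Sigma_D=\epsilon\Sigma$, then $\|B\Sigma B\|\le(1+\gamma\epsilon)^{-1}$ and $\|\gamma B\Sigma_D B\|\le\gamma\epsilon/(1+\gamma\epsilon)$. The hypotheses only force $\gamma\epsilon\ge\lambda/\|\Sigma\|_\infty$, so either quantity can be tiny. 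Pushing your lower bound $\|B\Sigma B\|\ge\|\Sigma\|/(\|\Sigma_\gamma\|+\lambda)$ through, using $\gamma\le\lambda m/(36\log(m/\delta))$ and $\lambda\ge 36\log(n/\delta)/n$, only bounds the intrinsic dimension by something of order $nm$. That gives a $\log(nm/\delta)$-type requirement, not the stated separate $\log(n/\delta)$ and $\log(m/\delta)$ conditions. The loss is not a constant factor, so it cannot be absorbed into the $36$. The finite-rank identity-proxy idea is not developed enough to close the gap.

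The missing idea is to change the normalization \emph{before} concentrating, applying the operator ordering to the whole sandwich rather than only pointwise to $\phi(x_i)$. Write $B(\Sigma-\widehat\Sigma)B=AMA^*$ with $A:=(\Sigma_\gamma+\lambda I)^{-1/2}(\Sigma+\lambda I)^{1/2}$ and $M:=(\Sigma+\lambda I)^{-1/2}(\Sigma-\widehat\Sigma)(\Sigma+\lambda I)^{-1/2}$. Since $\Sigma+\lambda I\preceq\Sigma_\gamma+\lambda I$, you get $\|A\|\le 1$ and hence $\|B(\Sigma-\widehat\Sigma)B\|\le\|M\|$. In the same way, $\gamma\Sigma_D+\lambda I\preceq\Sigma_\gamma+\lambda I$ gives $\gamma\|B(\Sigma_D-\widehat\Sigma_D)B\|\le\|(\Sigma_D+\lambda\gamma^{-1}I)^{-1/2}(\Sigma_D-\widehat\Sigma_D)(\Sigma_D+\lambda\gamma^{-1}I)^{-1/2}\|$.

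The two parts are now decoupled, standard normalized covariance deviations, with regularization $\lambda$ and $\lambda\gamma^{-1}$ respectively. Their variance proxies satisfy $\|(\Sigma+\lambda I)^{-1}\Sigma\|=\|\Sigma\|_\infty/(\|\Sigma\|_\infty+\lambda)\ge\frac12$ precisely because $\lambda\le\|\Sigma\|_\infty$, and analogously for $\Sigma_D$ using $\lambda\gamma^{-1}\le\|\Sigma_D\|_\infty$. Their traces are at most $\kappa_V^2/\lambda$ and $\kappa_D^2\gamma/\lambda$. This is Lemma~3.6 of \cite{rudi2013sample}, rerun with target $1/4$, which is where $9$ becomes $36$; it is the only place the two upper bounds in the hypotheses are used.

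You could instead keep your normalization $B$ and inflate the variance proxy, e.g.\ by adding $\frac{\kappa_V^2}{\lambda n}\,e\otimes e$ for a unit vector $e$. The intrinsic-dimension Bernstein inequality allows this, and it caps the ratio by $\Tr(B\Sigma B)+1$, though the numerical constant would have to be recomputed. The proposal as written does neither.
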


\begin{proof}
We first relate $\Delta_{\Sigma,3}$ to the value and derivative parts separately.
Using the decomposition
\[
\Sigma_\gamma - \widehat \Sigma_\gamma
=
(\Sigma - \widehat \Sigma) + \gamma (\Sigma_D - \widehat \Sigma_D),
\]
and the triangle inequality, we obtain
\begin{align*}
\Delta_{\Sigma,3}
&=
\Big\|
    (\Sigma_\gamma + \lambda I)^{-1/2}(\Sigma_\gamma - \widehat \Sigma_\gamma)
    (\Sigma_\gamma + \lambda I)^{-1/2}
\Big\|\\
&\le
\Big\|
    (\Sigma_\gamma + \lambda I)^{-1/2}(\Sigma - \widehat \Sigma)
    (\Sigma_\gamma + \lambda I)^{-1/2}
\Big\|
+
\gamma \Big\|
    (\Sigma_\gamma + \lambda I)^{-1/2}(\Sigma_D - \widehat \Sigma_D)
    (\Sigma_\gamma + \lambda I)^{-1/2}
\Big\|.
\end{align*}

For the first term, write
\[
(\Sigma_\gamma + \lambda I)^{-1/2}(\Sigma - \widehat \Sigma)
    (\Sigma_\gamma + \lambda I)^{-1/2}
=
A
\,(\Sigma + \lambda I)^{-1/2}(\Sigma - \widehat \Sigma)(\Sigma + \lambda I)^{-1/2}
\,A^*,
\]
where
\[
A:=
(\Sigma_\gamma + \lambda I)^{-1/2}(\Sigma + \lambda I)^{1/2}.
\]
From $\Sigma \preccurlyeq \Sigma_\gamma$ we have
$(\Sigma_\gamma + \lambda I)^{-1} \preccurlyeq (\Sigma + \lambda I)^{-1}$, hence
$\|A\|\le 1$. Therefore
\[
\Big\|
    (\Sigma_\gamma + \lambda I)^{-1/2}(\Sigma - \widehat \Sigma)
    (\Sigma_\gamma + \lambda I)^{-1/2}
\Big\|
\le
\big\|(\Sigma + \lambda I)^{-1/2}(\Sigma - \widehat \Sigma)(\Sigma + \lambda I)^{-1/2}\big\|.
\]

For the second term we argue analogously, but comparing with $\gamma \Sigma_D$.
Using $\gamma \Sigma_D \preccurlyeq \Sigma_\gamma$ we obtain
\[
\Big\|
    (\Sigma_\gamma + \lambda I)^{-1/2}(\Sigma_D - \widehat \Sigma_D)
    (\Sigma_\gamma + \lambda I)^{-1/2}
\Big\|
\le
\big\|(\gamma \Sigma_D + \lambda I)^{-1/2}(\Sigma_D - \widehat \Sigma_D)
    (\gamma \Sigma_D + \lambda I)^{-1/2}\big\|,
\]
and hence
\[
\Delta_{\Sigma,3}
\;\le\;
\big\|(\Sigma + \lambda I)^{-1/2}(\Sigma- \widehat \Sigma)
    (\Sigma+ \lambda I)^{-1/2}\big\|
\;+\;
\gamma \big\|(\gamma \Sigma_D + \lambda I)^{-1/2}(\Sigma_D - \widehat \Sigma_D)
    (\gamma \Sigma_D + \lambda I)^{-1/2}\big\|.
\]

We now control each term probabilistically using Lemma~3.6 in \cite{rudi2013sample}. Although the lemma yields a bound of $1/2$, this constant can be reduced by rescaling the regularization parameter and applying the same proof. In particular, applying the proof of Lemma~3.6 with target bound
\(1/4\), and using the admissibility conditions
\[
\frac{36}{n}\log\frac{n}{\delta} \;\le\; \lambda \;\le\; \|\Sigma\|_\infty,
\qquad
\gamma \frac{36}{m}\log\frac{m}{\delta} \;\le\; \lambda \;\le\; \gamma\|\Sigma_D\|_\infty,
\]
we obtain that, with probability at least $1-\delta$,
\[
\big\|(\Sigma + \lambda I)^{-1/2}(\Sigma- \widehat \Sigma)
    (\Sigma+ \lambda I)^{-1/2}\big\| \;<\; \tfrac{1}{4},
\]
and, again with probability at least $1-\delta$,
\[
\gamma \big\|(\gamma \Sigma_D + \lambda I)^{-1/2}(\Sigma_D- \widehat \Sigma_D)
    (\gamma \Sigma_D + \lambda I)^{-1/2}\big\|
\;<\; \tfrac{1}{4}.
\]
Here the improvement from \(\tfrac12\) to \(\tfrac14\) comes from running the
same concentration proof with the smaller target threshold \(\tfrac14\), at the
cost of replacing the constant \(9\) in the lower admissibility condition by
\(36\).

By a union bound, both events hold simultaneously with probability at least
$1-2\delta$, and on this event we have
\[
\Delta_{\Sigma,3}
\;\le\;
\tfrac{1}{4} + \tfrac{1}{4}
\;=\; \tfrac{1}{2}.
\]
This concludes the proof.
\end{proof}

\vspace{1em}
\begin{lemma}[Bound on $\Delta_{\Sigma,1}$]\label{lem:delta-sigma-1}
Recall
\[
\Delta_{\Sigma,1}
:= \big\|(\Sigma_\gamma + \lambda I)^{1/2}(\widehat \Sigma_\gamma + \lambda I)^{-1/2}\big\|.
\]
We have 
\[
\Delta_{\Sigma,1} \leq (1 - \Delta_{\Sigma,3})^{-1/2}.
\]
\end{lemma}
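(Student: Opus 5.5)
We reduce the operator-norm bound to a Loewner-order inequality between $\widehat\Sigma_\gamma+\lambda I$ and $\Sigma_\gamma+\lambda I$. Write $B:=\Sigma_\gamma+\lambda I$ and $\widehat B:=\widehat\Sigma_\gamma+\lambda I$; both are self-adjoint and strictly positive, since $\lambda>0$ and $\Sigma_\gamma,\widehat\Sigma_\gamma\succeq 0$. We may assume $\Delta_{\Sigma,3}<1$, since otherwise the right-hand side is $+\infty$ (or undefined) and there is nothing to prove. The first step is the factorization
\[
\widehat B \;=\; B - (\Sigma_\gamma-\widehat\Sigma_\gamma) \;=\; B^{1/2}\big(I - E\big)B^{1/2},
\qquad
E:=B^{-1/2}(\Sigma_\gamma-\widehat\Sigma_\gamma)B^{-1/2}.
\]
By definition $\|E\|=\Delta_{\Sigma,3}$, and $E$ is self-adjoint.

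Next, since $E\preceq \|E\|\,I$, we get $I-E\succeq (1-\Delta_{\Sigma,3})I\succ 0$. Hence $I-E$ is invertible with $\|(I-E)^{-1}\|\le (1-\Delta_{\Sigma,3})^{-1}$; a Neumann series argument gives the same conclusion. From the factorization,
\[
\widehat B^{-1} = B^{-1/2}(I-E)^{-1}B^{-1/2}.
\]

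Then we compute the target quantity via $\|T\|^2=\|T T^*\|$ with $T:=B^{1/2}\widehat B^{-1/2}$:
\[
\Delta_{\Sigma,1}^2=\big\|B^{1/2}\widehat B^{-1}B^{1/2}\big\| = \big\|(I-E)^{-1}\big\|\le \frac{1}{1-\Delta_{\Sigma,3}}.
\]
Taking square roots yields $\Delta_{\Sigma,1}\le(1-\Delta_{\Sigma,3})^{-1/2}$. Combined with Lemma~\ref{lem:delta-sigma-3}, this gives $\Delta_{\Sigma,1}\le\sqrt2$ on the high-probability event.

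The only step needing care is the identity $TT^*=B^{1/2}\widehat B^{-1}B^{1/2}$, which uses self-adjointness of $\widehat B^{-1/2}$. One must also note that $B^{1/2}$ may be unbounded only if $\Sigma_\gamma$ is unbounded. That cannot happen here: by \cref{as:bounded}, $\Sigma$ and $\Sigma_D$ are trace class, so all operators involved are bounded and the manipulations are legitimate.
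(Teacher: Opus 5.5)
Your proposal is correct and follows the paper's own proof: you use the same factorization $\widehat\Sigma_\gamma+\lambda I=(\Sigma_\gamma+\lambda I)^{1/2}(I-E)(\Sigma_\gamma+\lambda I)^{1/2}$, reduce $\Delta_{\Sigma,1}^2$ to $\|(I-E)^{-1}\|$, and bound the latter by $(1-\Delta_{\Sigma,3})^{-1}$. Your Loewner-order step $I-E\succeq(1-\Delta_{\Sigma,3})I$ is the paper's spectral-inclusion argument in another form, and your remarks on boundedness and the $TT^*$ identity are valid added details.
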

\begin{proof} We have
\[
\Delta_{\Sigma,1}^2
= \big\|(\Sigma_\gamma + \lambda I)^{1/2}(\widehat \Sigma_\gamma + \lambda I)^{-1}
(\Sigma_\gamma + \lambda I)^{1/2}\big\|.
\]

Define
\[
\widehat B
:=
(\Sigma_\gamma + \lambda I)^{-1/2}(\Sigma_\gamma - \widehat \Sigma_\gamma)
(\Sigma_\gamma + \lambda I)^{-1/2}.
\]
By definition of $\Delta_{\Sigma,3}$,
\[
\|\widehat B\| = \Delta_{\Sigma,3}.
\]

We can rewrite
\[
\widehat \Sigma_\gamma + \lambda I
= \Sigma_\gamma + \lambda I - (\Sigma_\gamma - \widehat \Sigma_\gamma)
= (\Sigma_\gamma + \lambda I)^{1/2}(I - \widehat B)(\Sigma_\gamma + \lambda I)^{1/2},
\]
so
\[
(\widehat \Sigma_\gamma + \lambda I)^{-1}
= (\Sigma_\gamma + \lambda I)^{-1/2}(I - \widehat B)^{-1}(\Sigma_\gamma + \lambda I)^{-1/2}.
\]
Plugging this into the expression for $\Delta_{\Sigma,1}^2$ gives
\[
\Delta_{\Sigma,1}^2
= \|(I - \widehat B)^{-1}\|.
\]

If $\|\widehat B\| = \Delta_{\Sigma,3} < 1$, then the spectrum of $\widehat B$ lies in
$[-\Delta_{\Sigma,3},\Delta_{\Sigma,3}]$, hence the spectrum of $(I-\widehat B)^{-1}$ is contained in $\{(1-\mu)^{-1} : |\mu|\le \Delta_{\Sigma,3}\}$, and therefore
\[
\Delta_{\Sigma,1}^2
= \|(I - \widehat B)^{-1}\|
\le (1 - \|\widehat B\|)^{-1}
= (1 - \Delta_{\Sigma,3})^{-1}.
\]
\end{proof}

\vspace{1em}
\begin{lemma}[High-probability bound on $\Delta_V$]\label{lem:delta_V}
Recall
\[
\Delta_V
:=
\big\|(\Sigma_\gamma + \lambda I)^{-1/2}(\widehat V_\gamma - V_\gamma)\big\|.
\]
Then
\[
\Delta_V
\;\le\;
\big\|(\Sigma_\gamma + \lambda I)^{-1/2}(\widehat V - V)\big\|
\;+\;
\gamma\,\big\|(\Sigma_\gamma + \lambda I)^{-1/2}(\widehat V_D - V_D)\big\|.
\]

For any $\delta\in(0,1)$, with probability at least $1-2\delta$,
\[
\Delta_V
\;\le\;
2\,\beta_V\,\frac{\log(2/\delta)}{n}
\;+\;
\sqrt{2}\,\sigma_V\,\sqrt{\frac{\log(2/\delta)}{n}}
\]
\[
\qquad
+\;
\gamma\Bigg(
2\,\beta_D\,\frac{\log(2/\delta)}{m}
+
\sqrt{2}\,\sigma_D\,\sqrt{\frac{\log(2/\delta)}{m}}
\Bigg),
\]
where the explicit constants are
\[
\beta_V := 2\,\lambda^{-1/2}L_V\kappa_V,
\qquad
\sigma_V^2 :=
L_V^2\kappa_V^2\big\|(\Sigma_\gamma + \lambda I)^{-1/2}\Sigma^{1/2}\big\|_{\mathrm{HS}}^2,
\]
\[
\beta_D := 2\,\lambda^{-1/2}L_D\kappa_D,
\qquad
\sigma_D^2 :=
L_D^2\kappa_D^2\big\|(\Sigma_\gamma + \lambda I)^{-1/2}\Sigma_D^{1/2}\big\|_{\mathrm{HS}}^2.
\]
\end{lemma}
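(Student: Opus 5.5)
The first inequality is the triangle inequality. Since $\widehat V_\gamma - V_\gamma = (\widehat V - V) + \gamma(\widehat V_D - V_D)$, applying $(\Sigma_\gamma+\lambda I)^{-1/2}$ and taking norms gives the claimed split. We then control each of the two pieces separately with a Bernstein inequality for sums of i.i.d. random vectors in a Hilbert space (e.g.\ Pinelis' inequality, as used in \cite{caponnetto2007optimal}). In the form we need, it says the following. If $\zeta_1,\dots,\zeta_n$ are i.i.d.\ in $\mathcal H$ with $\|\zeta\|\le \beta$ a.s.\ and $\mathbb E\|\zeta\|^2\le\sigma^2$, then with probability at least $1-\delta$,
\[
\Big\|\tfrac1n\textstyle\sum_i\zeta_i-\mathbb E\zeta\Big\|\le \tfrac{2\beta\log(2/\delta)}{n}+\sqrt{\tfrac{2\sigma^2\log(2/\delta)}{n}}.
\]
A union bound over the value and derivative pieces then gives probability $1-2\delta$.

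For the value part, set $\zeta_i := y_i(\Sigma_\gamma+\lambda I)^{-1/2}\phi(x_i)$, so that $\mathbb E\zeta = (\Sigma_\gamma+\lambda I)^{-1/2}V$.

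\emph{Uniform bound.} Using $|y|\le L_V$, $\|\phi(x)\|\le\kappa_V$, and $\|(\Sigma_\gamma+\lambda I)^{-1/2}\|\le\lambda^{-1/2}$, we get $\|\zeta\|\le \lambda^{-1/2}L_V\kappa_V$, which is within $\beta_V$ (the factor $2$ in $\beta_V$ covers centring if the inequality is stated for centred variables).

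\emph{Variance bound.} We have
\[
\mathbb E\|\zeta\|^2 \le L_V^2\,\mathbb E\langle \phi(x),(\Sigma_\gamma+\lambda I)^{-1}\phi(x)\rangle = L_V^2\,\Tr\big((\Sigma_\gamma+\lambda I)^{-1}\Sigma\big)= L_V^2\,\|(\Sigma_\gamma+\lambda I)^{-1/2}\Sigma^{1/2}\|_{\mathrm{HS}}^2 .
\]
This is at most $\sigma_V^2$; the extra factor $\kappa_V^2$ in $\sigma_V^2$ is harmless slack, and one may assume $\kappa_V\ge1$ or simply keep it as stated.

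The derivative part is identical. Take $\zeta'_j := d_j(\Sigma_\gamma+\lambda I)^{-1/2}D\phi(z_j)$ and use $|d|\le L_D$, $\|D\phi(z)\|\le\kappa_D$, and $\mathbb E[D\phi\otimes D\phi]=\Sigma_D$. This yields $\beta_D$ and $\sigma_D^2$ with $\mathcal N_D$, and we multiply the resulting bound by $\gamma$.

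No step is genuinely hard. The only point requiring care is to keep the normalisation by the full $\Sigma_\gamma+\lambda I$, rather than by $\Sigma+\lambda I$ or $\gamma\Sigma_D+\lambda I$, inside the trace computation. This is what makes the effective dimensions $\mathcal N_V(\lambda,\gamma)$ and $\mathcal N_D(\lambda,\gamma)$ of Theorem~\ref{thm:finite-sample} appear.
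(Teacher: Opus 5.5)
Your proposal is correct and is essentially the paper's proof. It uses the triangle inequality, then a Hilbert-space Bernstein bound applied separately to the value and derivative averages, then a union bound; the paper centres first, $M(x,y)=(\Sigma_\gamma+\lambda I)^{-1/2}(y\phi(x)-V)$, which is where the factor $2$ in $\beta_V,\beta_D$ comes from.

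Your variance bound $\mathbb E\|\zeta\|^2\le L_V^2\,\mathcal N_V(\lambda,\gamma)^2$ is the correct one, and the paper asserts the extra factor $\kappa_V^2$ without justification. However, ``keep it as stated'' is not an option. Take $u^*=0$ and $y=\pm L_V$ with probability $1/2$ independently of $x$: your bound is then an equality. So if $\kappa_V<1$ or $\kappa_D<1$, the displayed $\sigma_V,\sigma_D$ underestimate the true second moment, and the stated bound can actually fail. The normalisation $\kappa_V,\kappa_D\ge 1$ is therefore genuinely needed; it is always available by enlarging the constants in Assumption~\ref{as:bounded}.
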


\begin{proof}
The decomposition
\[
\widehat V_\gamma - V_\gamma
=
(\widehat V - V)
+\gamma(\widehat V_D - V_D)
\]
immediately gives
\[
\Delta_V
\le
\big\|(\Sigma_\gamma+\lambda I)^{-1/2}(\widehat V - V)\big\|
+
\gamma\big\|(\Sigma_\gamma+\lambda I)^{-1/2}(\widehat V_D - V_D)\big\|.
\]

\paragraph{Value part.}
Write
\[
(\Sigma_\gamma+\lambda I)^{-1/2}(\widehat V - V)
=
\frac{1}{n}\sum_{i=1}^n 
M(x_i,y_i),
\qquad
M(x,y):=
(\Sigma_\gamma+\lambda I)^{-1/2}(y\phi(x)-V).
\]

Since $|y|\le L_V$ and $\|\phi(x)\|\le\kappa_V$,
\[
\|y\phi(x)-V\|
\le
L_V\kappa_V + \|V\|
\le
2 L_V \kappa_V,
\]
hence
\[
\|M(x,y)\|\le \|(\Sigma_\gamma+\lambda I)^{-1/2}\|\,2L_V\kappa_V
\le
2\lambda^{-1/2}L_V\kappa_V
=: \beta_V.
\]

The second moment satisfies
\[
\mathbb{E}\|M(x,y)\|^2
\le
L_V^2\kappa_V^2
\|(\Sigma_\gamma+\lambda I)^{-1/2}\Sigma^{1/2}\|_{\mathrm{HS}}^2
=: \sigma_V^2.
\]

Applying Hilbert-space Bernstein (Prop. 7.15 of \cite{brogat2024learning}) gives,
with probability at least $1-\delta$,
\[
\big\|(\Sigma_\gamma+\lambda I)^{-1/2}(\widehat V-V)\big\|
\le
2\beta_V\,\frac{\log(2/\delta)}{n}
+
\sqrt{2}\,\sigma_V\sqrt{\frac{\log(2/\delta)}{n}}.
\]

\paragraph{Derivative part.}
Exactly the same argument with 
\[
M_D(z,d)
:=
(\Sigma_\gamma+\lambda I)^{-1/2}(d\,D\phi(z)-V_D)
\]
yields
\[
\|M_D(z,d)\|\le 
2\lambda^{-1/2}L_D\kappa_D
=:\beta_D,
\]
and
\[
\sigma_D^2=
L_D^2\kappa_D^2
\|(\Sigma_\gamma+\lambda I)^{-1/2}\Sigma_D^{1/2}\|_{\mathrm{HS}}^2.
\]

Bernstein again gives, with probability at least $1-\delta$,
\[
\big\|(\Sigma_\gamma+\lambda I)^{-1/2}(\widehat V_D-V_D)\big\|
\le
2\,\beta_D\,\frac{\log(2/\delta)}{m}
+
\sqrt{2}\,\sigma_D\sqrt{\frac{\log(2/\delta)}{m}}.
\]

\paragraph{Conclusion.}
A union bound yields the stated bound with probability at least $1-2\delta$.
\end{proof}

\vspace{1em}
\begin{lemma}[High-probability bound on $\Delta_{\Sigma,2}$]\label{lem:delta-sigma-2}
Recall
\[
\Delta_{\Sigma,2}
:=
\big\|(\Sigma_\gamma + \lambda I)^{-1/2}(\Sigma_\gamma - \widehat \Sigma_\gamma)\big\|,
\qquad
\Sigma_\gamma = \Sigma + \gamma \Sigma_D,\quad
\widehat \Sigma_\gamma = \widehat \Sigma + \gamma \widehat \Sigma_D.
\]
Then
\[
\Delta_{\Sigma,2}
\;\le\;
\big\|(\Sigma_\gamma + \lambda I)^{-1/2}(\Sigma - \widehat \Sigma)\big\|
\;+\;
\gamma\,\big\|(\Sigma_\gamma + \lambda I)^{-1/2}(\Sigma_D - \widehat \Sigma_D)\big\|.
\]

Moreover, for any $\delta\in(0,1)$, with probability at least $1-2\delta$,
\[
\Delta_{\Sigma,2}
\;\le\;
2\,\beta_\Sigma\,\frac{\log(2/\delta)}{n}
\;+\;
\sqrt{2}\,\sigma_\Sigma\,\sqrt{\frac{\log(2/\delta)}{n}}
\;+\;
\gamma\Bigg(
2\,\beta_{\Sigma_D}\,\frac{\log(2/\delta)}{m}
\;+\;
\sqrt{2}\,\sigma_{\Sigma_D}\,\sqrt{\frac{\log(2/\delta)}{m}}
\Bigg),
\]
where
\[
\beta_\Sigma := \lambda^{-1/2}\,2\kappa_V^2,
\qquad
\sigma_\Sigma^2 := 4\kappa_V^2\,\big\|(\Sigma_\gamma + \lambda I)^{-1/2}\Sigma^{1/2}\big\|_{\mathrm{HS}}^2,
\]
\[
\beta_{\Sigma_D} := \lambda^{-1/2}\,2\kappa_D^2,
\qquad
\sigma_{\Sigma_D}^2 := 4\kappa_D^2\,\big\|(\Sigma_\gamma + \lambda I)^{-1/2}\Sigma_D^{1/2}\big\|_{\mathrm{HS}}^2.
\]
\end{lemma}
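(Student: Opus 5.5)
The plan follows the proof of Lemma~\ref{lem:delta_V} step for step, replacing vector-valued fluctuations by operator-valued (Hilbert--Schmidt) ones. The first inequality is the triangle inequality applied to the split $\Sigma_\gamma-\widehat\Sigma_\gamma=(\Sigma-\widehat\Sigma)+\gamma(\Sigma_D-\widehat\Sigma_D)$. For the probabilistic bound, I bound the operator norm by the Hilbert--Schmidt norm, so that each term becomes the norm of an empirical mean of i.i.d.\ centered random elements of the separable Hilbert space $\mathrm{HS}(\mathcal H)$. This lets me apply the same Hilbert-space Bernstein inequality (Prop.~7.15 of \cite{brogat2024learning}) used in Lemma~\ref{lem:delta_V}.

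\emph{Value part.} I write $(\Sigma_\gamma+\lambda I)^{-1/2}(\Sigma-\widehat\Sigma)=\frac1n\sum_i N(x_i)$ with
\[
N(x):=(\Sigma_\gamma+\lambda I)^{-1/2}\big(\Sigma-\phi(x)\otimes\phi(x)\big),
\]
which has mean zero. For the uniform bound, $\|\phi(x)\otimes\phi(x)\|_{\mathrm{HS}}=\|\phi(x)\|^2\le\kappa_V^2$ and $\|\Sigma\|_{\mathrm{HS}}\le\kappa_V^2$ by Jensen. Together with $\|(\Sigma_\gamma+\lambda I)^{-1/2}\|\le\lambda^{-1/2}$, this gives $\|N(x)\|_{\mathrm{HS}}\le 2\kappa_V^2\lambda^{-1/2}=\beta_\Sigma$.

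For the second moment, I use that the variance is at most the uncentered second moment, $\E\|N\|_{\mathrm{HS}}^2\le\E\|(\Sigma_\gamma+\lambda I)^{-1/2}\phi(x)\otimes\phi(x)\|_{\mathrm{HS}}^2$. The key identity is
\[
\|(\Sigma_\gamma+\lambda I)^{-1/2}\phi(x)\otimes\phi(x)\|_{\mathrm{HS}}=\|(\Sigma_\gamma+\lambda I)^{-1/2}\phi(x)\|\,\|\phi(x)\|\le\kappa_V\|(\Sigma_\gamma+\lambda I)^{-1/2}\phi(x)\|.
\]
It then remains to compute
\[
\E\|(\Sigma_\gamma+\lambda I)^{-1/2}\phi(x)\|^2=\Tr\big((\Sigma_\gamma+\lambda I)^{-1/2}\Sigma(\Sigma_\gamma+\lambda I)^{-1/2}\big)=\|(\Sigma_\gamma+\lambda I)^{-1/2}\Sigma^{1/2}\|_{\mathrm{HS}}^2 .
\]
This yields a variance proxy of $\kappa_V^2\|\cdot\|_{\mathrm{HS}}^2$, which is within the stated $\sigma_\Sigma^2$; the factor $4$ leaves room if one prefers to bound the centered term crudely via $\|a-b\|^2\le2\|a\|^2+2\|b\|^2$. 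Bernstein then gives the claimed value bound with probability $1-\delta$.

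\emph{Derivative part.} The argument is identical, with $D\phi(z)$, $\kappa_D$ and $\Sigma_D$ in place of $\phi(x)$, $\kappa_V$ and $\Sigma$, using $\|D\phi(z)\|\le\kappa_D$ from Assumption~\ref{as:bounded}. Multiplying by $\gamma$ and taking a union bound gives the overall statement with probability $1-2\delta$.

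The main point requiring care is the reduction to the Hilbert--Schmidt norm. Because $\mathrm{HS}(\mathcal H)$ is a Hilbert space, the same vector Bernstein inequality applies, and the only genuine computation is the second-moment trace identity above. That identity is what produces the effective-dimension quantities $\mathcal N_V$ and $\mathcal N_D$ rather than the cruder factor $\lambda^{-1/2}$.
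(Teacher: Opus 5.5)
Your proposal is correct and follows the paper's proof: the same triangle-inequality split, the same Hilbert--Schmidt-valued Bernstein inequality with the same uniform bound $\beta_\Sigma=2\kappa_V^2\lambda^{-1/2}$, and the same final passage from the HS norm to the operator norm. The only difference is the second-moment step. With $U(x)=\Sigma-\phi(x)\otimes\phi(x)$, the paper writes the second moment as $\Tr\big((\Sigma_\gamma+\lambda I)^{-1}\E[U(x)^2]\big)$ and uses $(A-B)^2\preceq 2A^2+2B^2$ to get $\E[U(x)^2]\preceq 4\kappa_V^2\Sigma$, whereas your ``variance $\le$ raw second moment'' argument combined with the rank-one identity $\|T(\phi\otimes\phi)\|_{\mathrm{HS}}=\|T\phi\|\,\|\phi\|$ gives the sharper proxy $\kappa_V^2\,\mathcal N_V(\lambda,\gamma)^2$, from which the stated bound follows a fortiori.
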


\begin{proof}
The decomposition
\[
\Sigma_\gamma - \widehat \Sigma_\gamma
=
(\Sigma - \widehat \Sigma) + \gamma(\Sigma_D - \widehat \Sigma_D)
\]
and the triangle inequality give
\[
\begin{aligned}
\Delta_{\Sigma,2}
&=
\big\|(\Sigma_\gamma + \lambda I)^{-1/2}(\Sigma_\gamma - \widehat \Sigma_\gamma)\big\|\\
&\le
\big\|(\Sigma_\gamma + \lambda I)^{-1/2}(\Sigma - \widehat \Sigma)\big\|
+
\gamma\,\big\|(\Sigma_\gamma + \lambda I)^{-1/2}(\Sigma_D - \widehat \Sigma_D)\big\|.
\end{aligned}
\]
We now bound these two terms separately, with the same Hilbert-space Bernstein
argument as in Lemma~\ref{lem:delta_V}, but now in the Hilbert space of Hilbert-Schmidt operators and with $y\phi(x)$ replaced by $\phi(x)\otimes\phi(x)$.

\paragraph{Value part.}
Recall
\[
\Sigma = \mathbb E[\phi(x)\otimes\phi(x)],
\qquad
\widehat \Sigma = \frac1n\sum_{i=1}^n \phi(x_i)\otimes\phi(x_i).
\]
Define
\[
U(x)
:=
\Sigma - \phi(x)\otimes\phi(x),
\qquad
M_\Sigma(x)
:=
(\Sigma_\gamma + \lambda I)^{-1/2}U(x).
\]
Then $\mathbb E[U(x)]=0$, and
\[
(\Sigma_\gamma + \lambda I)^{-1/2}(\Sigma - \widehat \Sigma)
=
\frac1n\sum_{i=1}^n M_\Sigma(x_i).
\]

We view $M_\Sigma(x)$ as a random element of the Hilbert space of
Hilbert-Schmidt operators. We first bound its norm. Using the boundedness
$\|\phi(x)\|\le \kappa_V$ we have
\[
\|\phi(x)\otimes\phi(x)\|_{\mathrm{HS}} = \|\phi(x)\|^2 \le \kappa_V^2,
\qquad
\|\Sigma\|_{\mathrm{HS}} \le \mathbb E\|\phi(x)\otimes\phi(x)\|_{\mathrm{HS}}
\le \kappa_V^2.
\]
Thus
\[
\|U(x)\|_{\mathrm{HS}}
\le \|\phi(x)\otimes\phi(x)\|_{\mathrm{HS}} + \|\Sigma\|_{\mathrm{HS}}
\le 2\kappa_V^2,
\]
and since $\|(\Sigma_\gamma + \lambda I)^{-1/2}\|\le\lambda^{-1/2}$,
\[
\|M_\Sigma(x)\|_{\mathrm{HS}}
\le \lambda^{-1/2}\,2\kappa_V^2
= \beta_\Sigma.
\]

Next, we bound the second moment. Using that $\Sigma_\gamma+\lambda I$ is
self-adjoint and positive, we have
\[
\|M_\Sigma(x)\|_{\mathrm{HS}}^2
=
\big\|(\Sigma_\gamma + \lambda I)^{-1/2}U(x)\big\|_{\mathrm{HS}}^2
=
\mathrm{Tr}\Big(
U(x)(\Sigma_\gamma + \lambda I)^{-1}U(x)
\Big).
\]
Take expectation and use linearity of the trace:
\[
\mathbb E\|M_\Sigma(x)\|_{\mathrm{HS}}^2
=
\mathrm{Tr}\Big(
(\Sigma_\gamma + \lambda I)^{-1}\,\mathbb E\big[U(x)^2\big]
\Big).
\]

We now bound $\mathbb E[U(x)^2]$ in terms of $\Sigma$. First note that
\[
U(x)^2
=
\big(\Sigma - \phi(x)\otimes\phi(x)\big)^2
\preccurlyeq
2\Sigma^2 + 2\big(\phi(x)\otimes\phi(x)\big)^2,
\]
by the elementary operator inequality
$(A-B)^2 = A^2 + B^2 - AB - BA \preccurlyeq 2A^2 + 2B^2$ for self-adjoint $A,B$.

Moreover,
\[
\big(\phi(x)\otimes\phi(x)\big)^2
=
\|\phi(x)\|^2\,\phi(x)\otimes\phi(x)
\preccurlyeq
\kappa_V^2\,\phi(x)\otimes\phi(x),
\]
and $\Sigma^2 \preccurlyeq \|\Sigma\|\,\Sigma \preccurlyeq \kappa_V^2\,\Sigma$, since
$\|\Sigma\|\le\kappa_V^2$ for a bounded feature map. Combining these,
\[
U(x)^2
\preccurlyeq
2\kappa_V^2\,\phi(x)\otimes\phi(x)
+
2\kappa_V^2\,\Sigma.
\]
Taking expectations yields
\[
\mathbb E[U(x)^2]
\preccurlyeq
2\kappa_V^2\,\mathbb E[\phi(x)\otimes\phi(x)]
+
2\kappa_V^2\,\Sigma
=
4\kappa_V^2\,\Sigma.
\]

Plugging this into the expression for the second moment, we get
\[
\begin{aligned}
\mathbb E\|M_\Sigma(x)\|_{\mathrm{HS}}^2
&\le
\mathrm{Tr}\Big(
(\Sigma_\gamma + \lambda I)^{-1}\,4\kappa_V^2 \Sigma
\Big)\\
&=
4\kappa_V^2\,
\mathrm{Tr}\Big(
(\Sigma_\gamma + \lambda I)^{-1/2}\Sigma(\Sigma_\gamma + \lambda I)^{-1/2}
\Big)\\
&=
4\kappa_V^2\,
\big\|(\Sigma_\gamma + \lambda I)^{-1/2}\Sigma^{1/2}\big\|_{\mathrm{HS}}^2.
\end{aligned}
\]
Thus we set
\[
\sigma_\Sigma^2
:=
4\kappa_V^2\,\big\|(\Sigma_\gamma + \lambda I)^{-1/2}\Sigma^{1/2}\big\|_{\mathrm{HS}}^2
\]
and obtain
\[
\mathbb E\|M_\Sigma(x)\|_{\mathrm{HS}}^2 \le \sigma_\Sigma^2.
\]

We can now apply the Hilbert-space Bernstein inequality (Proposition~7.15
in~\cite{brogat2024learning}) in the Hilbert space of Hilbert-Schmidt operators to the empirical average $\frac1n\sum_{i=1}^n M_\Sigma(x_i)$. This gives: for any $\delta\in(0,1)$, with
probability at least $1-\delta$,
\[
\big\|(\Sigma_\gamma + \lambda I)^{-1/2}(\Sigma - \widehat \Sigma)\big\|_{\mathrm{HS}}
\;\le\;
2\,\beta_\Sigma\,\frac{\log(2/\delta)}{n}
+
\sqrt{2}\,\sigma_\Sigma\,\sqrt{\frac{\log(2/\delta)}{n}}.
\]
Since \(\|T\|\le \|T\|_{\mathrm{HS}}\) for every Hilbert-Schmidt operator \(T\), the same bound holds for
\[
\big\|(\Sigma_\gamma + \lambda I)^{-1/2}(\Sigma - \widehat \Sigma)\big\|.
\]

\paragraph{Derivative part.}
The argument is identical with
\[
\Sigma_D
=
\mathbb E[D\phi(z)\otimes D\phi(z)],
\qquad
\widehat \Sigma_D
=
\frac1m\sum_{j=1}^m D\phi(z_j)\otimes D\phi(z_j).
\]
Define
\[
U_D(z)
:=
\Sigma_D - D\phi(z)\otimes D\phi(z),
\qquad
M_{\Sigma_D}(z)
:=
(\Sigma_\gamma + \lambda I)^{-1/2}U_D(z).
\]
Then
\[
(\Sigma_\gamma + \lambda I)^{-1/2}(\Sigma_D - \widehat \Sigma_D)
=
\frac1m\sum_{j=1}^m M_{\Sigma_D}(z_j),
\]
with $\mathbb E[M_{\Sigma_D}(z)]=0$.

Using $\|D\phi(z)\|\le\kappa_D$ we obtain
\[
\|U_D(z)\|_{\mathrm{HS}}\le 2\kappa_D^2,
\qquad
\|M_{\Sigma_D}(z)\|_{\mathrm{HS}}\le \lambda^{-1/2}2\kappa_D^2 = \beta_{\Sigma_D},
\]
and, by the same operator inequalities as above,
\[
\mathbb E\|M_{\Sigma_D}(z)\|_{\mathrm{HS}}^2
\le
4\kappa_D^2\,
\big\|(\Sigma_\gamma + \lambda I)^{-1/2}\Sigma_D^{1/2}\big\|_{\mathrm{HS}}^2
=: \sigma_{\Sigma_D}^2.
\]
Another application of the Hilbert-space Bernstein inequality in the Hilbert space of Hilbert-Schmidt operators yields, with
probability at least $1-\delta$,
\[
\big\|(\Sigma_\gamma + \lambda I)^{-1/2}(\Sigma_D - \widehat \Sigma_D)\big\|_{\mathrm{HS}}
\;\le\;
2\,\beta_{\Sigma_D}\,\frac{\log(2/\delta)}{m}
+
\sqrt{2}\,\sigma_{\Sigma_D}\,\sqrt{\frac{\log(2/\delta)}{m}}.
\]
Again the same bound holds for
\[
\big\|(\Sigma_\gamma + \lambda I)^{-1/2}(\Sigma_D - \widehat \Sigma_D)\big\|.
\]

\paragraph{Conclusion.}
We have shown that, with probability at least $1-\delta$,
\[
\big\|(\Sigma_\gamma + \lambda I)^{-1/2}(\Sigma - \widehat \Sigma)\big\|
\le
2\,\beta_\Sigma\,\frac{\log(2/\delta)}{n}
+
\sqrt{2}\,\sigma_\Sigma\,\sqrt{\frac{\log(2/\delta)}{n}},
\]
and, with probability at least $1-\delta$,
\[
\big\|(\Sigma_\gamma + \lambda I)^{-1/2}(\Sigma_D - \widehat \Sigma_D)\big\|
\le
2\,\beta_{\Sigma_D}\,\frac{\log(2/\delta)}{m}
+
\sqrt{2}\,\sigma_{\Sigma_D}\,\sqrt{\frac{\log(2/\delta)}{m}}.
\]
By a union bound, both events hold simultaneously with probability at least
$1-2\delta$, and combining these with the initial decomposition of $\Delta_{\Sigma,2}$ yields the claimed high-probability bound.
\end{proof}

\subsection{Finite-sample bounds}\label{sec:finite_sample}

Combining Lemma~\ref{lem:error_decomp} with the concentration results of the previous subsection yields the following finite-sample bound.

\begin{theorem}[High-probability finite-sample bounds for PIKS] Let $\lambda>0$, $\gamma>0$ and let $\delta\in(0,1/2)$. Assume that
\begin{equation}
\frac{36}{n}\log\frac{2n}{\delta} \;\le\; \lambda \;\le\; \|\Sigma\|_\infty,
\qquad
\frac{36}{m}\log\frac{2m}{\delta} \;\le\; \lambda \gamma^{-1} \;\le\; \|\Sigma_D\|_\infty.
\end{equation}
Define
\[
\Sigma_\gamma := \Sigma + \gamma \Sigma_D,
\qquad
\mathcal{B}(\lambda,\gamma):=\big\|(\Sigma_\gamma + \lambda I)^{-1/2}w^*\big\|,
\]
and the (value and derivative) capacity terms
\[
\mathcal{N}_V(\lambda,\gamma)
:=
\big\|(\Sigma_\gamma + \lambda I)^{-1/2}\Sigma^{1/2}\big\|_{\mathrm{HS}},
\qquad
\mathcal{N}_D(\lambda,\gamma)
:=
\big\|(\Sigma_\gamma + \lambda I)^{-1/2}\Sigma_D^{1/2}\big\|_{\mathrm{HS}}.
\]

Then there exists a constant $c>0$, depending only on $\kappa_V,\kappa_D,L_V,L_D$ and $\|w^*\|$, but not on $\lambda,\gamma,n,m,\delta$, such that, with probability at least $1-2\delta$,
\begin{equation}
\big\|\Sigma^{1/2}(\widehat w - w^*)\big\|
\;\le\;
c\Bigg[
\lambda^{-1/2}\Big(
\frac{\log(4/\delta)}{n}
+
\gamma\,\frac{\log(4/\delta)}{m}
\Big)
+
\Big(
\frac{\mathcal{N}_V(\lambda,\gamma)}{\sqrt{n}}
+
\gamma\,\frac{\mathcal{N}_D(\lambda,\gamma)}{\sqrt{m}}
\Big)\sqrt{\log\frac{4}{\delta}}
+
\lambda\,\mathcal{B}(\lambda,\gamma)
\Bigg].
\end{equation}
\end{theorem}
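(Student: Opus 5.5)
The plan is to start from the deterministic error decomposition of Lemma~\ref{lem:error_decomp},
\[
\|\Sigma^{1/2}(\widehat w - w^*)\|
\le
\Delta_{\Sigma,1}^2 \big(\Delta_V + \|w^*\|\,\Delta_{\Sigma,2}\big) + \lambda \mathcal{B}(\lambda,\gamma),
\]
and then bound each random factor on a single high-probability event. The bias term $\lambda\mathcal B(\lambda,\gamma)$ already appears in the target bound as it stands, so all the work is in the estimation term.

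\textbf{Step 1: the multiplicative factor.} Apply Lemma~\ref{lem:delta-sigma-3} with confidence level $\delta/2$. Its admissibility conditions read $\frac{36}{n}\log\frac{2n}{\delta}\le\lambda\le\|\Sigma\|_\infty$ and $\frac{36}{m}\log\frac{2m}{\delta}\le\lambda\gamma^{-1}\le\|\Sigma_D\|_\infty$, which are exactly the hypotheses of the theorem. So with probability at least $1-\delta$ we get $\Delta_{\Sigma,3}\le 1/2$. Lemma~\ref{lem:delta-sigma-1} then gives $\Delta_{\Sigma,1}^2\le 2$.

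\textbf{Step 2: the additive terms.} Apply Lemma~\ref{lem:delta_V} and Lemma~\ref{lem:delta-sigma-2}, each with confidence level $\delta/4$. Each then holds with probability at least $1-\delta/2$, and every $\log(2/\delta')$ becomes $\log(8/\delta)$. I would absorb $\log(8/\delta)\le 2\log(4/\delta)$ into the constant, which is valid since $\delta<1/2$. Substituting the explicit constants:
\begin{itemize}
\item The $\beta$-terms give $\lambda^{-1/2}(L_V\kappa_V+\kappa_V^2)\log(4/\delta)/n$ and $\gamma\lambda^{-1/2}(L_D\kappa_D+\kappa_D^2)\log(4/\delta)/m$, up to constants.
\item The $\sigma$-terms give $(L_V\kappa_V+\kappa_V)\,\mathcal N_V(\lambda,\gamma)\sqrt{\log(4/\delta)/n}$ and $\gamma(L_D\kappa_D+\kappa_D)\,\mathcal N_D(\lambda,\gamma)\sqrt{\log(4/\delta)/m}$.
\item The $\Delta_{\Sigma,2}$ contribution is multiplied by $\|w^*\|$, which is why $c$ depends on $\|w^*\|$.
\end{itemize}
A union bound over the three events gives total failure probability at most $\delta+\delta/2+\delta/2=2\delta$.

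\textbf{Step 3: assembling the bound.} Multiply by $\Delta_{\Sigma,1}^2\le 2$ and add the bias term. Take $c$ to be the maximum of the resulting constants, each built only from $\kappa_V,\kappa_D,L_V,L_D,\|w^*\|$. This yields the stated inequality. The main text's form \eqref{eq:main-bound} then follows by bounding $\sqrt{\log(4/\delta)}\le\log(4/\delta)$ (which needs $\log(4/\delta)\ge1$, true since $\delta<1/2$) and using $\|\Sigma^{1/2}(\widehat w-w^*)\| = \E[(\hat u(x)-u^*(x))^2]^{1/2}$.

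\textbf{Main obstacle.} The delicate step is confidence-level bookkeeping. Lemma~\ref{lem:delta-sigma-3} is stated with $\log(n/\delta)$, and we need to check that invoking it at level $\delta/2$ produces exactly the $\log(2n/\delta)$ hypotheses of the theorem. The two Bernstein-type lemmas each already spend $2\delta'$, so the levels must be split so the total stays at $2\delta$. Everything else is routine substitution of the earlier lemmas.
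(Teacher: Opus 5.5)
Your proposal is correct and follows the paper's proof. It uses the same error decomposition, bounds $\Delta_{\Sigma,1}^2\le 2$ on the event $\Delta_{\Sigma,3}\le 1/2$, applies the same two Bernstein-type lemmas, takes a union bound, and absorbs the logarithms into $c$. The one difference is the confidence split, and there your version is tighter: the paper uses $\eta=\delta/3$ for all three lemmas, which formally requires $\frac{36}{n}\log\frac{3n}{\delta}\le\lambda$ (and similarly for $m$), whereas your allocation of $\delta/2$ to the conditioning lemma and $\delta/4$ to each Bernstein lemma matches the stated $\log\frac{2n}{\delta}$ and $\log\frac{2m}{\delta}$ hypotheses exactly.
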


\begin{proof}
From the error decomposition Lemma~\ref{lem:error_decomp}, we have
\[
\|\Sigma^{1/2}(\widehat w - w^*)\|
\;\le\;
\Delta_{\Sigma,1}^2\big(\Delta_V + \|w^*\|\Delta_{\Sigma,2}\big)
\;+\;
\lambda\,\mathcal{B}(\lambda,\gamma),
\]
where
\[
\Delta_{\Sigma,1}
:= \big\|(\Sigma_\gamma + \lambda I)^{1/2}(\widehat \Sigma_\gamma + \lambda I)^{-1/2}\big\|,
\quad
\Delta_{\Sigma,2}
:= \big\|(\Sigma_\gamma + \lambda I)^{-1/2}(\Sigma_\gamma - \widehat \Sigma_\gamma)\big\|,
\]
\[
\Delta_V
:= \big\|(\Sigma_\gamma + \lambda I)^{-1/2}(\widehat V_\gamma - V_\gamma)\big\|.
\]

\emph{Step 1: bound on $\Delta_{\Sigma,1}$.}
Recall
\[
\Delta_{\Sigma,3}
:=
\big\|(\Sigma_\gamma + \lambda I)^{-1/2}(\Sigma_\gamma - \widehat \Sigma_\gamma)
    (\Sigma_\gamma + \lambda I)^{-1/2}\big\|.
\]
Applying Lemma~\ref{lem:delta-sigma-3} with confidence parameter \(\eta\), and the condition \eqref{eq:lambda-cond}, we obtain an event of probability at least \(1-2\eta\) on which \(\Delta_{\Sigma,3}\le 1/2\). On this event, Lemma~\ref{lem:delta-sigma-1} yields
\[
\Delta_{\Sigma,1}^2
\le
(1-\Delta_{\Sigma,3})^{-1}
\le 2.
\]

\emph{Step 2: bounds on $\Delta_V$ and $\Delta_{\Sigma,2}$.}
We use Lemmas~\ref{lem:delta_V} and~\ref{lem:delta-sigma-2}.

From Lemma~\ref{lem:delta_V}, for any $\eta\in(0,1)$, with probability at
least $1-2\eta$,
\[
\Delta_V
\;\le\;
\underbrace{
\Big(
2\beta_V\,\frac{\log(2/\eta)}{n}
+\sqrt{2}\sigma_V\,\sqrt{\frac{\log(2/\eta)}{n}}
\Big)
}_{\text{value part}}
+
\gamma\underbrace{
\Big(
2\beta_{D}\,\frac{\log(2/\eta)}{m}
+\sqrt{2}\sigma_D\,\sqrt{\frac{\log(2/\eta)}{m}}
\Big)
}_{\text{derivative part}},
\]
with
\[
\beta_{V} = \lambda^{-1/2} 2 L_V \kappa_V,
\qquad
\sigma_V^2 =
L_V^2\,\kappa_V^2 \big\|(\Sigma_\gamma + \lambda I)^{-1/2}\Sigma^{1/2}\big\|_{\text{HS}}^2,
\]
\[
\beta_{D} = \lambda^{-1/2} 2 L_D \kappa_D,
\qquad
\sigma_D^2 =
L_D^2\,\kappa_D^2 \big\|(\Sigma_\gamma + \lambda I)^{-1/2}\Sigma_D^{1/2}\big\|_{\text{HS}}^2.
\]

Hence there exist constants $a_{V},b_{V},a_{D},b_{D}>0$ depending only on $(\kappa_V,L_V)$ and $(\kappa_D,L_D)$ such that
\[
\Delta_V
\;\le\;
\lambda^{-1/2}\Big(
a_V\,\frac{\log(2/\eta)}{n}
+
\gamma\,a_D\,\frac{\log(2/\eta)}{m}
\Big)
+
\Big(
b_V\,\mathcal{N}_V(\lambda,\gamma)\,\sqrt{\frac{\log(2/\eta)}{n}}
+
\gamma\,b_D\,\mathcal{N}_D(\lambda,\gamma)\,\sqrt{\frac{\log(2/\eta)}{m}}
\Big).
\]

Similarly, from Lemma~\ref{lem:delta-sigma-2}, for the same confidence parameter,
with probability at least $1-2\eta$,
\[
\Delta_{\Sigma,2}
\;\le\;
\lambda^{-1/2}\Big(
\tilde a_V\,\frac{\log(2/\eta)}{n}
+
\gamma\,\tilde a_D\,\frac{\log(2/\eta)}{m}
\Big)
+
\Big(
\tilde b_V\,\mathcal{N}_V(\lambda,\gamma)\,\sqrt{\frac{\log(2/\eta)}{n}}
+
\gamma\,\tilde b_D\,\mathcal{N}_D(\lambda,\gamma)\,\sqrt{\frac{\log(2/\eta)}{m}}
\Big),
\]
for some constants $\tilde a_V,\tilde a_D,\tilde b_V,\tilde b_D>0$ depending only on $\kappa_V$ and $\kappa_D$.

Combining these two bounds, we obtain constants $A_1,A_2,B_1,B_2>0$, depending only on $(\kappa_V,\kappa_D,L_V,L_D)$, such that
\begin{equation}\label{eq:DeltaV-bound}
\Delta_V
\;\le\;
\lambda^{-1/2}\Big(
A_1\,\frac{\log(2/\eta)}{n}
+
A_2\,\gamma\,\frac{\log(2/\eta)}{m}
\Big)
+
\Big(
B_1\,\mathcal{N}_V(\lambda,\gamma)\,\sqrt{\frac{\log(2/\eta)}{n}}
+
B_2\,\gamma\,\mathcal{N}_D(\lambda,\gamma)\,\sqrt{\frac{\log(2/\eta)}{m}}
\Big),
\end{equation}
and
\begin{equation}\label{eq:DeltaC2-bound}
\Delta_{\Sigma,2}
\;\le\;
\lambda^{-1/2}\Big(
A_1\,\frac{\log(2/\eta)}{n}
+
A_2\,\gamma\,\frac{\log(2/\eta)}{m}
\Big)
+
\Big(
B_1\,\mathcal{N}_V(\lambda,\gamma)\,\sqrt{\frac{\log(2/\eta)}{n}}
+
B_2\,\gamma\,\mathcal{N}_D(\lambda,\gamma)\,\sqrt{\frac{\log(2/\eta)}{m}}
\Big),
\end{equation}
where we have simply taken maxima of the corresponding constants from $\Delta_V$ and $\Delta_{\Sigma,2}$.

\emph{Step 3: combine all bounds.}
Set $\eta := \delta/3$ and apply Lemmas~\ref{lem:delta-sigma-3},
\ref{lem:delta_V}, and \ref{lem:delta-sigma-2} with confidence parameter $\eta$.
Each holds with probability at least $1-2\eta$, so by a union bound, with
probability at least
\[
1 - 3\cdot 2\eta \;=\; 1-2\delta,
\]
we simultaneously have
\[
\Delta_{\Sigma,1}^2\le 2,
\quad
\text{and the bounds \eqref{eq:DeltaV-bound}-\eqref{eq:DeltaC2-bound} for }
\Delta_V,\Delta_{\Sigma,2}.
\]

On this event,
\[
\begin{aligned}
\|\Sigma^{1/2}(\widehat w - w^*)\|
&\le
\Delta_{\Sigma,1}^2\big(\Delta_V + \|w^*\|\Delta_{\Sigma,2}\big)
+ \lambda\,\mathcal{B}(\lambda,\gamma)\\
&\le
2\big(\Delta_V + \|w^*\|\Delta_{\Sigma,2}\big)
+ \lambda\,\mathcal{B}(\lambda,\gamma).
\end{aligned}
\]
Substituting the bounds \eqref{eq:DeltaV-bound}-\eqref{eq:DeltaC2-bound}, we obtain
\[
\begin{aligned}
\|\Sigma^{1/2}(\widehat w - w^*)\|
&\le
2(1+\|w^*\|)\Bigg[
\lambda^{-1/2}\Big(
A_1\,\frac{\log(2/\eta)}{n}
+
A_2\,\gamma\,\frac{\log(2/\eta)}{m}
\Big)\\
&\hspace{5em}+
\Big(
B_1\,\mathcal{N}_V(\lambda,\gamma)\,\sqrt{\frac{\log(2/\eta)}{n}}
+
B_2\,\gamma\,\mathcal{N}_D(\lambda,\gamma)\,\sqrt{\frac{\log(2/\eta)}{m}}
\Big)
\Bigg]
+ \lambda\,\mathcal{B}(\lambda,\gamma).
\end{aligned}
\]

Since $\eta=\delta/3$, we have $\log(2/\eta)=\log(6/\delta)\le c_1\log(4/\delta)$
for some constant $c_1>0$. Absorbing the constants $2(1+\|w^*\|)A_1, 2(1+\|w^*\|)A_2, 2(1+\|w^*\|)B_1, 2(1+\|w^*\|)B_2$ and $\sqrt{c_1}$ into a single constant $c>0$ depending only on $\kappa_V,\kappa_D,L_V,L_D$ and $\|w^*\|$, we rewrite the bound as
\[
\|\Sigma^{1/2}(\widehat w - w^*)\|
\;\le\;
c\Bigg[
\lambda^{-1/2}\Big(
\frac{\log(4/\delta)}{n}
+
\gamma\,\frac{\log(4/\delta)}{m}
\Big)
+
\Big(
\frac{\mathcal{N}_V(\lambda,\gamma)}{\sqrt{n}}
+
\gamma\,\frac{\mathcal{N}_D(\lambda,\gamma)}{\sqrt{m}}
\Big)\sqrt{\log\frac{4}{\delta}}
+
\lambda\,\mathcal{B}(\lambda,\gamma)
\Bigg],
\]
which is exactly \eqref{eq:main-bound}.
\end{proof}

\subsection{Learning rate acceleration from differential information}
\label{sec:learning_rates}

We now instantiate the finite-sample bound under the Assumption \ref{ass:abstract-alignment} and optimize over $\lambda, \gamma$.
\vspace{1em}
\begin{corollary}[Learning rates for PIKS]
Assume Assumptions~\ref{as:D_order}-\ref{ass:abstract-alignment} hold. Then there exist choices of regularization parameters $\lambda=\lambda(n,m)$ and $\gamma=\gamma(n,m)$, given explicitly in the proof and depending polynomially on $n$, $m$, $\log n$, and $\log m$, such that, with high probability:
\[
\|\Sigma^{1/2}(\widehat w-w^*)\|
\;\lesssim\;
\begin{cases}
n^{-\frac14}m^{-\frac{1-r}{8}}, & m \lesssim n^{\frac{2(1-\alpha)}{(1+\alpha)(1-r)}},\\[6pt]
n^{-\frac{1}{2(1+\alpha)}}, & m \gtrsim n^{\frac{2(1-\alpha)}{(1+\alpha)(1-r)}}.
\end{cases}
\]
\end{corollary}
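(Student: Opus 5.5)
The plan is to plug the capacity decomposition of \cref{ass:abstract-alignment} into the finite-sample bound of the previous theorem. This turns $\mathcal N_V$, $\mathcal N_D$ and $\mathcal B$ into explicit powers of $\lambda$ and $\gamma$. Then I would pick $\gamma = \lambda\sqrt m$ and choose $\lambda$ separately in the two regimes.

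\emph{Step 1: bounding the capacity and bias terms.}
\begin{itemize}
\item Bias. Since $\Sigma_\gamma+\lambda I\succeq \lambda I$, we get $\mathcal B(\lambda,\gamma)\le \lambda^{-1/2}\|w^*\|$. Hence $\lambda\mathcal B\lesssim \lambda^{1/2}$.
\item Derivative capacity. Using $\Sigma_\gamma\succeq\gamma\Sigma_D$, we have $\mathcal N_D^2=\Tr((\Sigma_\gamma+\lambda I)^{-1}\Sigma_D)\le \Tr(\Sigma_D)/\lambda\le \kappa_D^2/\lambda$. This gives $\gamma\mathcal N_D/\sqrt m\lesssim \gamma/\sqrt{\lambda m}$.
\item Value capacity. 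Write $\Sigma=\Sigma_1+\Sigma_2$ and use operator monotonicity of $t\mapsto -t^{-1}$. Since $\Sigma_\gamma\succeq\Sigma_1$ and $\Sigma_\gamma\succeq\gamma\Sigma_D$, we get
\[
\mathcal N_V^2\le \Tr\big(\Sigma_1(\Sigma_1+\lambda I)^{-1}\big)+\gamma^{-1}\Tr\big((\Sigma_D+\tfrac{\lambda}{\gamma}I)^{-1}\Sigma_2\big)\le c_1\lambda^{-\alpha}+c_2\gamma^{-1}(\lambda/\gamma)^{-r}.
\]
This uses $\lambda\le 1$ and $\lambda/\gamma\le 1$; the latter holds under \eqref{eq:lambda-cond} after normalizing $\|\Sigma_D\|_\infty\le 1$, or absorbing that norm into constants.
\end{itemize}
Collecting everything, up to logarithmic factors the error is at most
\[
\lambda^{-1/2}\Big(\frac1n+\frac{\gamma}{m}\Big)+\sqrt{\frac{\lambda^{-\alpha}}{n}}+\sqrt{\frac{\gamma^{r-1}\lambda^{-r}}{n}}+\frac{\gamma}{\sqrt{\lambda m}}+\lambda^{1/2}.
\]

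\emph{Step 2: choosing the parameters.} I would set $\gamma=\lambda\sqrt m$. Then the derivative-variance term $\gamma/\sqrt{\lambda m}$ equals $\lambda^{1/2}$, and the lower-order term $\gamma/(m\sqrt\lambda)=\sqrt{\lambda/m}$ is dominated by it. The $\Sigma_2$ term becomes $\sqrt{\lambda^{-1}m^{-(1-r)/2}/n}$. The error is then of order
\[
\lambda^{1/2}+\sqrt{\lambda^{-\alpha}/n}+\sqrt{\lambda^{-1}m^{-(1-r)/2}/n}+\lambda^{-1/2}/n .
\]
Take $\lambda=\max\big(n^{-1/(1+\alpha)},\,(n\,m^{(1-r)/2})^{-1/2}\big)$, up to log factors.
\begin{itemize}
\item When the second argument is the larger one, the $\Sigma_2$ term balances $\lambda^{1/2}$ and gives $n^{-1/4}m^{-(1-r)/8}$.
\item When the first is larger, the $\Sigma_1$ term balances $\lambda^{1/2}$ and gives $n^{-1/(2(1+\alpha))}$.
\end{itemize}
The switch happens exactly when $n^{-2/(1+\alpha)}=n^{-1}m^{-(1-r)/2}$, that is, $m\asymp n^{2(1-\alpha)/((1+\alpha)(1-r))}$. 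The remaining terms are dominated in each case. In the limited regime, $\lambda^{1+\alpha}\ge 1/n$ ensures $\lambda^{-\alpha}/n\le\lambda$. In both regimes $\lambda\ge 1/n$ ensures $\lambda^{-1/2}/n\le\lambda^{1/2}$.

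\emph{Step 3: admissibility, the main obstacle.} The main point needing care is verifying condition \eqref{eq:lambda-cond} for these choices, which is where the log factors in $\lambda,\gamma$ enter.
\begin{itemize}
\item $\lambda\gtrsim \log(n)/n$ holds since $\lambda\ge n^{-1/(1+\alpha)}\ge n^{-1}$, after inflating by a $\log n$ factor.
\item $\lambda/\gamma=m^{-1/2}\gtrsim\log(m)/m$ holds, with a $\log m$ factor inserted into $\gamma$ if needed.
\item The upper constraints $\lambda\le\|\Sigma\|_\infty$ and $\lambda/\gamma\le\|\Sigma_D\|_\infty$ hold for $n,m$ large enough, or small $n,m$ are absorbed into constants.
\end{itemize}
The inserted logarithms only perturb the rates by logarithmic factors, consistent with the meaning of $\lesssim$. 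Finally, the high-probability statement is inherited directly from the finite-sample theorem.
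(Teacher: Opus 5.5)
Your proposal is correct. Its first half is exactly the paper's: the capacity bounds via $\Sigma_\gamma\succeq\Sigma_1$ and $\Sigma_\gamma\succeq\gamma\Sigma_D$, the crude bound $\mathcal N_D^2\le\kappa_D^2/\lambda$, and $\lambda\mathcal B\lesssim\sqrt\lambda$. Where you differ is the choice of parameters.

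\emph{The paper's route.} It first optimizes $\gamma$ for fixed $\lambda$, balancing the $\Sigma_2$-variance $\gamma^{(r-1)/2}\lambda^{-r/2}/\sqrt n$ against the derivative variance $\gamma\lambda^{-1/2}/\sqrt m$. This gives $\gamma^\star(\lambda)\asymp(m/n)^{1/(3-r)}\lambda^{(1-r)/(3-r)}$. It then balances $\sqrt\lambda$ against $A(\lambda)=\lambda^{-\alpha/2}/\sqrt n$ or against the reduced term $B(\lambda)$, separately in each regime. Finally it clips $\lambda$ to enforce $\max\{\log n/n,\gamma\log m/m\}\lesssim\lambda\lesssim\gamma$.

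\emph{Your route.} You tie $\gamma=\lambda\sqrt m$, so the derivative variance equals the bias. You then take the single choice $\lambda=\max\{n^{-1/(1+\alpha)},(nm^{(1-r)/2})^{-1/2}\}$.

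\emph{How the two compare.} In the derivative-limited regime the two choices coincide: $\lambda\asymp n^{-1/2}m^{-(1-r)/4}$ and $\gamma\asymp n^{-1/2}m^{(1+r)/4}$. In the saturated regime your $\gamma$ is larger than the paper's $\gamma^\star(\lambda_A^\star)$, by the factor $\bigl[m^{(1-r)/2}n^{-(1-\alpha)/(1+\alpha)}\bigr]^{1/(3-r)}$, which is at least $1$ exactly when $m\gtrsim m_{\mathrm{crit}}$. The paper keeps both $\gamma$-dependent terms at the common level $B(\lambda_A^\star)\lesssim\sqrt{\lambda_A^\star}$. You instead let the derivative variance rise to the bias level and push the $\Sigma_2$-variance further down. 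Both give $n^{-1/(2(1+\alpha))}$.

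\emph{What each buys.} Your route gives a shorter verification. The regime boundary is simply which argument of the max is active. Since $\lambda/\gamma=m^{-1/2}$ independently of $n$, the derivative-side admissibility reduces to $\sqrt m\gtrsim\log(2m/\delta)$, with no clipping step or $\lambda\lesssim\sqrt{m/n}$ constraint. The paper's route makes the $\gamma$ trade-off explicit for every $\lambda$, which is what underlies its interpretation of the two regimes.

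One caveat is shared by both arguments. For bounded $m$, the admissibility window $[36\log(2m/\delta)/m,\ \|\Sigma_D\|_\infty]$ for $\lambda/\gamma$ can be empty, so the finite-sample theorem cannot be invoked there. Your remark that small $m$ is ``absorbed into constants'' is therefore as informal as the paper's own treatment.
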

\begin{proof}
The proof proceeds in four steps.

\emph{Step 1: Capacity bounds.} As a consequence of Assumption~\ref{ass:abstract-alignment}, using $\Sigma_\gamma+\lambda I \succeq \Sigma_1+\lambda I$ and $\Sigma_\gamma+\lambda I \succeq \gamma \Sigma_D+\lambda I$, we obtain
\begin{align}\label{eq:abstract-CV}
\mathcal N_V(\lambda,\gamma)^2
&=
\Tr\!\big(\Sigma_1(\Sigma_\gamma+\lambda I)^{-1}\big)
+
\Tr\!\big(\Sigma_2(\Sigma_\gamma+\lambda I)^{-1}\big)
\;\lesssim\;
\lambda^{-\alpha}
+
\gamma^{\,r-1}\lambda^{-r}.
\end{align}
Moreover, since $\Sigma_\gamma+\lambda I \succeq \lambda I$,
\[
\mathcal N_D(\lambda,\gamma)^2
\;\le\;
\lambda^{-1}\Tr(\Sigma_D)
\;\lesssim\;
\lambda^{-1}.
\]

\emph{Step 2: Error bound.}
From Assumption~\ref{as:attainability}, the model is well specified, i.e.
\(
\|w^*\|_{\mathcal H}<+\infty
\). Then
\[
\mathcal B(\lambda,\gamma)^2
=
\langle w^*,(\Sigma_\gamma+\lambda I)^{-1}w^*\rangle
\;\le\;
\lambda^{-1}\|w^*\|_{\mathcal H}^2,
\qquad
\lambda\,\mathcal B(\lambda,\gamma)
\;\lesssim\;
\sqrt{\lambda}.
\]

Throughout the remainder of the proof, we restrict to regularization parameters
satisfying
\[
\max\!\left\{\frac{\log n}{n},\ \gamma\,\frac{\log m}{m}\right\}
\;\lesssim\;
\lambda
\;\lesssim\;
\gamma .
\]
This ensures the admissibility condition of Theorem~\ref{thm:finite-sample}. Under this regime, using $\sqrt{a+b}\le \sqrt a+\sqrt b$, the finite-sample bound simplifies to
\begin{equation}\label{eq:start-opt}
\|\Sigma^{1/2}(\widehat w-w^*)\|
\;\lesssim\;
\frac{\lambda^{-\alpha/2}}{\sqrt n}
\;+\;
\frac{\gamma^{\frac{r-1}{2}}\lambda^{-r/2}}{\sqrt n}
\;+\;
\frac{\gamma\,\lambda^{-1/2}}{\sqrt m}
\;+\;
\sqrt{\lambda}.
\end{equation}

This bound makes explicit how derivative information can reduce the effective
value complexity through the aligned component $\Sigma_2$, while simultaneously
introducing a variance cost that grows with $\gamma$ through the derivative
observations.

\emph{Step 3: Optimization over $\gamma$.}
For fixed $\lambda$, only the middle two terms in \eqref{eq:start-opt} depend on $\gamma$. Increasing $\gamma$ reduces the variance associated with estimating the function values in directions constrained by the derivative operator (by shrinking the contribution of the corresponding eigenspaces), but simultaneously amplifies the variance coming from the noisy estimation of derivative information. The optimal choice of $\gamma$ is obtained by balancing these two effects by solving
\[
\frac{\gamma^{\frac{r-1}{2}}\lambda^{-r/2}}{\sqrt n}
\;=\;
\frac{\gamma\,\lambda^{-1/2}}{\sqrt m}.
\]
This is equivalent to
\[
\gamma^{\frac{r-3}{2}}
\;=\;
\left(\frac{n}{m}\right)^{1/2}\lambda^{\frac{r-1}{2}},
\]
which yields
\begin{equation}\label{eq:gamma-star}
\gamma^\star(\lambda)
\;\asymp\;
\left(\frac{m}{n}\right)^{\!\frac{1}{3-r}}
\lambda^{\frac{1-r}{3-r}}.
\end{equation}
Substituting \eqref{eq:gamma-star} into \eqref{eq:start-opt}, the two
$\gamma$–dependent terms become equal and we obtain
\begin{equation}\label{eq:lambda-only-bound}
\|\Sigma^{1/2}(\widehat w-w^*)\|
\;\lesssim\;
A(\lambda) \;+\; B(\lambda) \;+\; \sqrt{\lambda},
\end{equation}
where
\[
A(\lambda)
:=
\frac{\lambda^{-\alpha/2}}{\sqrt n},
\qquad
B(\lambda)
:=
m^{-\frac{1-r}{2(3-r)}}\,n^{-\frac{1}{3-r}}\,
\lambda^{-\frac{1+r}{2(3-r)}}.
\]

Substituting $\gamma^\star(\lambda)\asymp (m/n)^{\frac1{3-r}}\lambda^{\frac{1-r}{3-r}}$ into the working regime $\max\{\log n/n,\gamma\log m/m\}\lesssim \lambda \lesssim \gamma$ yields the equivalent $\lambda$-only conditions
\[
\max\!\left\{\frac{\log n}{n},\ \Big(\frac{m}{n}\Big)^{\!\frac12}
\Big(\frac{\log m}{m}\Big)^{\!\frac{3-r}{2}}\right\}
\;\lesssim\;
\lambda
\;\lesssim\;
\sqrt{\frac{m}{n}}.
\]

\emph{Step 4: Optimization over $\lambda$ and regime analysis.} We choose $\lambda$ by balancing the bias $\sqrt{\lambda}$ with one of the variance terms in \eqref{eq:lambda-only-bound}.

\medskip
\noindent\textbf{(i) Balance with $A(\lambda)$.}
We solve
\[
\sqrt{\lambda}
\;=\;
\frac{\lambda^{-\alpha/2}}{\sqrt n},
\]
which gives
\begin{equation}\label{eq:lambda-A}
\lambda_A^\star \;\asymp\; n^{-\frac{1}{1+\alpha}},
\qquad
\sqrt{\lambda_A^\star}
\;\asymp\;
A(\lambda_A^\star)
\;\asymp\;
n^{-\frac{1}{2(1+\alpha)}}.
\end{equation}

\medskip
\noindent\textbf{(ii) Balance with $B(\lambda)$.}
We solve
\[
\sqrt{\lambda}
\;=\;
m^{-\frac{1-r}{2(3-r)}}\,n^{-\frac{1}{3-r}}\,
\lambda^{-\frac{1+r}{2(3-r)}},
\]
which gives
\begin{equation}\label{eq:lambda-B}
\lambda_B^\star
\;\asymp\;
m^{-\frac{1-r}{4}}\,n^{-\frac12},
\qquad
\sqrt{\lambda_B^\star}
\;\asymp\;
B(\lambda_B^\star)
\;\asymp\;
n^{-\frac14}m^{-\frac{1-r}{8}}.
\end{equation}

\paragraph{Validity of the regimes.}
The bound \eqref{eq:lambda-only-bound} is controlled by the dominant variance term
at the chosen $\lambda$.

\medskip
\noindent
\emph{Saturation regime.}
We choose $\lambda=\lambda_A^\star$ when
\[
B(\lambda_A^\star)\;\lesssim\;A(\lambda_A^\star).
\]
Substituting \(\lambda_A^\star\asymp n^{-1/(1+\alpha)}\) into \(B(\lambda)\) gives
\[
B(\lambda_A^\star)
\asymp
m^{-\frac{1-r}{2(3-r)}}
n^{-\frac{1+2\alpha-r}{2(1+\alpha)(3-r)}}.
\]
Hence
\[
B(\lambda_A^\star)\lesssim A(\lambda_A^\star)
\]
is equivalent to
\[
m
\gtrsim
n^{\frac{2(1-\alpha)}{(1+\alpha)(1-r)}}.
\]

\medskip
\noindent
\emph{Derivative-limited regime.}
We choose $\lambda=\lambda_B^\star$ when
\[
A(\lambda_B^\star)\;\lesssim\;B(\lambda_B^\star).
\]
which yields the same transition value for $m$.

Moreover, to ensure our working regime condition, we take the regularization parameter to be clipped as
\[
\lambda := \max\!\left\{\lambda^\star,\; \frac{\log n}{n},\; \gamma^\star(\lambda^\star) \frac{\log m}{m}\right\},
\]
where $\lambda^\star\in\{\lambda_A^\star,\lambda_B^\star\}$ is the optimizer in the
corresponding regime. One can check that this choice of \(\lambda\) satisfies the admissibility conditions, including the upper condition \(\lambda\lesssim\gamma\), in both regimes. Since the variance terms are nonincreasing in \(\lambda\), clipping only increases the bias term, yielding an additional lower-order contribution of order \(\sqrt{\log n/n}+\sqrt{\gamma \log m/m}\).

\paragraph{Resulting rate.}
Let
\[
m_{\mathrm{crit}}
\;\asymp\;
n^{\frac{2(1-\alpha)}{(1+\alpha)(1-r)}}.
\]
The optimized learning rate is
\[
\|\Sigma^{1/2}(\widehat w-w^*)\|
\;\lesssim\;
\begin{cases}
n^{-\frac14}m^{-\frac{1-r}{8}}, & m \lesssim m_{\mathrm{crit}} ,\\[6pt]
n^{-\frac{1}{2(1+\alpha)}}, & m \gtrsim m_{\mathrm{crit}}.
\end{cases}
\]
\end{proof}

\subsection{Learning rates in physically consistent norms}

We finally show that the estimator also converges in the stronger norm induced by \(\Sigma+\Sigma_D\), which jointly controls the prediction error and the error on the differential quantities.

\vspace{1em}
\begin{corollary}[Learning rates in the physically consistent norm]
Assume Assumptions~\ref{as:D_order}-\ref{ass:abstract-alignment} hold. Then there exist choices of regularization parameters $\lambda=\lambda(n,m)$ and $\gamma=\gamma(n,m)$ such that, with high probability,
\[
\|(\Sigma+\Sigma_D)^{1/2}(\widehat w-w^*)\|
\;\lesssim\;
\begin{cases}
n^{-\frac14}m^{-\frac{1-r}{8}} + m^{-1/4},
&
m \lesssim m_{\mathrm{crit}},
\\[6pt]
n^{-\frac{1}{2(1+\alpha)}} + m^{-1/4},
&
m \gtrsim m_{\mathrm{crit}},
\end{cases}
\]
where
\[
m_{\mathrm{crit}}
\;\asymp\;
n^{\frac{2(1-\alpha)}{(1+\alpha)(1-r)}}.
\]
\end{corollary}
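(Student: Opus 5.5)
The plan is to rerun the argument of Lemma~\ref{lem:error_decomp} with $\Sigma^{1/2}$ replaced by $(\Sigma+\Sigma_D)^{1/2}$. The key observation is that for $\gamma\ge 1$ we have $\Sigma+\Sigma_D\preceq\Sigma+\gamma\Sigma_D=\Sigma_\gamma$. Hence
\[
\|(\Sigma+\Sigma_D)^{1/2}(\Sigma_\gamma+\lambda I)^{-1/2}\|\le 1,
\]
exactly as $\|\Sigma^{1/2}(\Sigma_\gamma+\lambda I)^{-1/2}\|\le1$ was used before. Every step of the error decomposition (bias identity $w_{\lambda,\gamma}-w^*=-\lambda(\Sigma_\gamma+\lambda I)^{-1}w^*$, the resolvent identity for the estimation term, and the factorization through $\Delta_{\Sigma,1}$) then goes through verbatim. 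This gives, on the same high-probability event as in the finite-sample theorem,
\[
\|(\Sigma+\Sigma_D)^{1/2}(\widehat w-w^*)\|\le \Delta_{\Sigma,1}^2(\Delta_V+\|w^*\|\Delta_{\Sigma,2})+\lambda\mathcal B(\lambda,\gamma).
\]
The concentration Lemmas~\ref{lem:delta-sigma-3}, \ref{lem:delta-sigma-1}, \ref{lem:delta_V}, \ref{lem:delta-sigma-2} do not involve the outer norm, so they apply unchanged. We therefore obtain the same right-hand side \eqref{eq:main-bound}, hence \eqref{eq:start-opt}, now for the stronger norm, provided we restrict to $\gamma\ge1$.

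The whole work is thus in re-optimizing \eqref{eq:start-opt} under the extra constraint $\gamma\ge1$. I would take $\gamma=\max\{\gamma^\star(\lambda),1\}$ with $\gamma^\star$ from \eqref{eq:gamma-star}. If $\gamma^\star\ge1$ at the chosen $\lambda^\star\in\{\lambda_A^\star,\lambda_B^\star\}$, the rates of the previous corollary are reproduced exactly. If instead $\gamma^\star<1$, setting $\gamma=1$ changes the bound. The term $\gamma^{(r-1)/2}\lambda^{-r/2}/\sqrt n$ only decreases, since $r\le1$. The term $\gamma\lambda^{-1/2}/\sqrt m$ becomes $\lambda^{-1/2}/\sqrt m$. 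Balancing it against the bias $\sqrt\lambda$ gives $\lambda\asymp m^{-1/2}$ and a contribution $m^{-1/4}$. The plan is to take
\[
\lambda=\max\{\lambda^\star,\ m^{-1/2},\ \text{clipping terms}\}.
\]
With this choice the error is bounded by the previous rate plus $\sqrt{\lambda}\lesssim m^{-1/4}$, plus the extra variance $\lambda^{-1/2}/\sqrt m\le m^{-1/4}$ whenever $\lambda\ge m^{-1/2}$. Increasing $\lambda$ only decreases the variance terms $A,B$, so the two-regime structure and $m_{\mathrm{crit}}$ are inherited, and the bound is the claimed rate plus $m^{-1/4}$. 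Lower-order clipping terms $\sqrt{\log n/n}+\sqrt{\gamma\log m/m}$ are handled as before; with $\gamma\lesssim\max\{1,\gamma^\star\}$ they stay dominated.

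The main obstacle I expect is verifying the admissibility conditions \eqref{eq:lambda-cond} with $\gamma\ge1$. The requirement $\lambda\gamma^{-1}\le\|\Sigma_D\|_\infty$ and the working regime $\gamma\log m/m\lesssim\lambda\lesssim\gamma$ must hold simultaneously with the enlarged $\lambda\ge m^{-1/2}$. For $\gamma=1$ this reads $\log m/m\lesssim m^{-1/2}\lesssim\lambda\lesssim 1$, which is fine. In the case $\gamma=\gamma^\star>1$, I would check that $\lambda\le\gamma^\star$ follows from $\lambda\lesssim\sqrt{m/n}$ as in the previous proof. Finally I would translate $\|(\Sigma+\Sigma_D)^{1/2}(\widehat w-w^*)\|^2=\E[(\hat u-u^*)^2(x)+(D\hat u-Du^*)^2(z)]$ to match the stated form.
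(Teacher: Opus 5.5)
Your proposal is correct and matches the paper's proof: the paper also uses $\Sigma+\Sigma_D\preceq\Sigma_\gamma$ for $\gamma\ge 1$. The only cosmetic difference is that the paper first bounds by $\|\Sigma_\gamma^{1/2}(\widehat w-w^*)\|$ and reruns the finite-sample argument with $\|\Sigma_\gamma^{1/2}(\Sigma_\gamma+\lambda I)^{-1/2}\|\le 1$, whereas you insert $(\Sigma+\Sigma_D)^{1/2}$ directly; the paper then takes $\gamma\asymp\max\{1,\gamma^\star(\lambda)\}$ and $\lambda\asymp\max\{\lambda_A^\star,\lambda_B^\star,m^{-1/2}\}$ exactly as you do, and your check of the admissibility conditions is, if anything, more explicit than the paper's.
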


\begin{proof}
Since $\gamma\ge 1$, we have
\[
\Sigma_\gamma = \Sigma+\gamma \Sigma_D \succeq \Sigma+\Sigma_D,
\]
and therefore
\[
\|(\Sigma+\Sigma_D)^{1/2}(\widehat w-w^*)\|
\;\le\;
\|\Sigma_\gamma^{1/2}(\widehat w-w^*)\|.
\]

\paragraph{Finite-sample bound.} Repeating the proof of Theorem~\ref{thm:finite-sample} with
$\Sigma^{1/2}$ replaced by $\Sigma_\gamma^{1/2}$ yields the same finite-sample bound, since both the estimation and bias terms are handled exactly as before, but using
\(
\|\Sigma_\gamma^{1/2}(\Sigma_\gamma+\lambda I)^{-1/2}\|\le 1
\)
instead of
\(
\|\Sigma^{1/2}(\Sigma_\gamma+\lambda I)^{-1/2}\|\le 1.
\)
Then, the same argument as in the proof of Corollary~\ref{thm:rates} gives
\[
\|\Sigma_\gamma^{1/2}(\widehat w-w^*)\|
\;\lesssim\;
\frac{\lambda^{-\alpha/2}}{\sqrt n}
+
\frac{\gamma^{\frac{r-1}{2}}\lambda^{-r/2}}{\sqrt n}
+
\frac{\gamma\,\lambda^{-1/2}}{\sqrt m}
+
\sqrt{\lambda},
\]
up to logarithmic factors and lower-order terms.

The optimization over $\gamma$ is identical to that of
Corollary~\ref{thm:rates}, except that we now impose the constraint $\gamma\ge 1$.
Let
\[
\gamma^\star(\lambda)
\;\asymp\;
\left(\frac{m}{n}\right)^{\!\frac{1}{3-r}}
\lambda^{\frac{1-r}{3-r}}
\]
denote the unconstrained optimizer from Corollary~\ref{thm:rates}. Under the
constraint $\gamma\ge 1$, we therefore choose
\[
\gamma(\lambda)\asymp \max\{1,\gamma^\star(\lambda)\}.
\]

\paragraph{Optimizing $\gamma$ and $\lambda$.}
When $\gamma^\star(\lambda)\ge 1$, we recover exactly the same bound as in
Corollary~\ref{thm:rates}. When $\gamma^\star(\lambda)<1$, the constraint is
active and we set $\gamma=1$, which yields the additional term
\[
\frac{\lambda^{-1/2}}{\sqrt m}.
\]
Balancing this term with the bias term $\sqrt{\lambda}$ gives
\[
\lambda \asymp m^{-1/2},
\]
and therefore an additional contribution of order $m^{-1/4}$.

Therefore, the optimal regularization parameter is
\[
\lambda
\;\asymp\;
\max\{\lambda_A^\star,\lambda_B^\star,m^{-1/2}\},
\]
where $\lambda_A^\star$ and $\lambda_B^\star$ are the optimizers obtained in
the proof of Corollary~\ref{thm:rates}. This yields
\[
\|(\Sigma+\Sigma_D)^{1/2}(\widehat w-w^*)\|
\;\lesssim\;
\begin{cases}
n^{-\frac14}m^{-\frac{1-r}{8}} + m^{-1/4},
&
m \lesssim m_{\mathrm{crit}},
\\[6pt]
n^{-\frac{1}{2(1+\alpha)}} + m^{-1/4},
&
m \gtrsim m_{\mathrm{crit}},
\end{cases}
\]
as claimed.
\end{proof}

\subsection{Adaptation to the vector-valued case}\label{sec:adaptation-vector}

The setting of this paper considers scalar-valued differential operators, for simplicity of the argument. The adaptation to vector-valued operator is straightforward as we detail in this section. 

Consider a vector-valued differential operator $D = (D_1, \dots, D_k)$, where each $D_i$ is of the form given in \Cref{as:D_order}. Then, for any $u \in \cH$ and any $x \in \cX$, $Du(x) $ is a vector in $\RR^k$. The differential data points $d_j$ are also in $\RR^k$ and we replace the risk \eqref{eq:physics-informed-emp-risk} by the following one:
\begin{equation}\label{eq:physics-informed-emp-risk-vector}
\widehat R(u)
=
\frac1n\sum_{i=1}^n\big(u(x_i)-y_i\big)^2
+\gamma\,\frac1m\sum_{j=1}^m \|Du(z_j)-d_j\|_2^2 
+\lambda\|u\|_{\mathcal H}^2.
\end{equation}

We can define component-wise feature maps $D_\ell \phi(z)$, and define the differential covariance operator as
\begin{equation}
    \Sigma_D := \mathbb E_{z \sim \rho_D}\left[\sum_{\ell=1}^k D_\ell\phi(z)\otimes D_\ell\phi(z)\right].
\end{equation}
The key point is that the analysis is carried out using covariance operators $\Sigma$ and $\Sigma_D$, and the key assumption, \Cref{ass:abstract-alignment}, is an abstract condition on $\Sigma$ and $\Sigma_D$, regardless of their expressions. As such, even if going from scalar-valued to vector-valued changes the expression of the covariance operator $\Sigma_D$ as we just saw, as long as one guarantees that \Cref{ass:abstract-alignment} is satisfied, the rest of the analysis holds. The other assumptions must be adapted in a straightforward way: the smoothness $s$ required in \Cref{as:k_Cs} is the maximum of the orders of each $D_\ell$, and for \Cref{as:bounded} we need the boundedness of each feature $\|D_\ell\phi(z)\|_{\mathcal H}$.

For the proofs, we also adapt $V_D, \widehat \Sigma_D, \widehat V_D$ by summing over $\ell$:
\[V_D   := \mathbb E\left[\sum_{\ell=1}^k d_\ell\,D_\ell\phi(z)\right]\]
\[\widehat \Sigma_D : = \frac{1}{m}\sum_{j=1}^m\sum_{\ell=1}^k D_\ell\phi(z_j)\otimes D_\ell\phi(z_j) \]
\[ \widehat V_D : = \frac{1}{m}\sum_{j=1}^m\sum_{\ell=1}^k d_{j,\ell}\,D_\ell\phi(z_j)\]
Then, the concentration inequalities are still valid and the proofs are the same.

\subsection{Learning rates with exact physical constraint}
\label{sec:oracle_physical_rates}

We derive the PIKS rate in the idealized setting where the differential information is known exactly. In this physical oracle setting, the value observations remain empirical, while the differential part of the risk is replaced by its population counterpart:
\[
\widehat R_{\mathrm{or}}(u)
=
\frac1n\sum_{i=1}^n\big(u(x_i)-y_i\big)^2
+
\gamma\,\mathbb E_{z\sim\rho_D}\big[(Du(z)-Du^*(z))^2\big]
+
\lambda\|u\|_{\mathcal H}^2 .
\]
Equivalently, in operator form, this replaces \(\widehat\Sigma_D\) and \(\widehat V_D\) by their population counterparts \(\Sigma_D\) and \(V_D\).

\begin{corollary}[Oracle learning rate]\label{cor:oracle-rate}
Assume Assumptions~\ref{as:D_order}-\ref{ass:abstract-alignment} hold, with \(r<1\). 
Then there exist choices of \(\lambda=\lambda(n)\) and \(\gamma=\gamma(n)\) such that, with high probability,
\[
\|\Sigma^{1/2}(\widehat w_{\mathrm{or}}-w^*)\|
\;\lesssim\;
n^{-\frac{1}{2(1+\alpha)}}.
\]
\end{corollary}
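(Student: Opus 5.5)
We follow the proof of Theorem~\ref{thm:finite-sample} with the derivative sums replaced by population quantities. Write \(\widehat w_{\mathrm{or}}=(\widehat\Sigma+\gamma\Sigma_D+\lambda I)^{-1}(\widehat V+\gamma V_D)\). The normal equations are derived as in Lemma~\ref{lem:empirical-solution}, with \(\widehat\Sigma_D,\widehat V_D\) replaced by \(\Sigma_D,V_D\). We keep \(\Sigma_\gamma=\Sigma+\gamma\Sigma_D\) as in Lemma~\ref{lem:error_decomp}, and the decomposition there applies verbatim with \(\widehat\Sigma_\gamma:=\widehat\Sigma+\gamma\Sigma_D\) and \(\widehat V_\gamma:=\widehat V+\gamma V_D\). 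Now \(\Sigma_\gamma-\widehat\Sigma_\gamma=\Sigma-\widehat\Sigma\) and \(\widehat V_\gamma-V_\gamma=\widehat V-V\), so the derivative parts of \(\Delta_{\Sigma,2},\Delta_{\Sigma,3},\Delta_V\) vanish identically.

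\textbf{Step 1 (concentration).} The value-part estimates of Lemmas~\ref{lem:delta-sigma-3}, \ref{lem:delta_V} and~\ref{lem:delta-sigma-2} give, with high probability,
\[
\|\Sigma^{1/2}(\widehat w_{\mathrm{or}}-w^*)\|\lesssim \frac{\lambda^{-1/2}}{n}+\frac{\mathcal N_V(\lambda,\gamma)}{\sqrt n}+\lambda\mathcal B(\lambda,\gamma).
\]
This needs only the value admissibility condition \(\tfrac{36}{n}\log\tfrac{n}{\delta}\le\lambda\le\|\Sigma\|_\infty\), since the \(\gamma\)-part of \(\Delta_{\Sigma,3}\) is now zero. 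Consequently no \(m\)-constraint and no upper bound on \(\gamma\) appear.

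\textbf{Step 2 (capacity and bias).} As in Step~1 of the proof of Corollary~\ref{thm:rates}, Assumption~\ref{ass:abstract-alignment} yields \(\mathcal N_V^2\lesssim\lambda^{-\alpha}+\gamma^{r-1}\lambda^{-r}\). The bias satisfies \(\lambda\mathcal B\le\sqrt\lambda\,\|w^*\|\). Here is where \(r<1\) enters: since \(\gamma\) carries no variance penalty, we can take \(\gamma\) large enough that \(\gamma^{r-1}\lambda^{-r}\le\lambda^{-\alpha}\), i.e. \(\gamma\ge\lambda^{-(r-\alpha)/(1-r)}\) (or any larger polynomial in \(n\), e.g. \(\gamma=n\) when \(r\le\alpha\), and \(\gamma=n^{(r-\alpha)/((1-r)(1+\alpha))}\) otherwise). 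This gives \(\mathcal N_V^2\lesssim\lambda^{-\alpha}\).

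\textbf{Step 3 (balance).} The bound becomes \(\lambda^{-1/2}/n+\lambda^{-\alpha/2}/\sqrt n+\sqrt\lambda\). Choosing \(\lambda\asymp n^{-1/(1+\alpha)}\), which exceeds \(\log n/n\) for \(\alpha\le 1\) up to logarithmic factors, balances the last two terms at \(n^{-1/(2(1+\alpha))}\). The first term is then \(n^{-1+1/(2(1+\alpha))}\le n^{-1/2}\), which is of lower order.

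\textbf{Main obstacle.} We expect the delicate point to be verifying that the concentration lemmas remain valid when \(\gamma\) grows with \(n\). Their constants (\(\beta_V\), \(\sigma_V\), and the bound \(\|A\|\le1\) in Lemma~\ref{lem:delta-sigma-3}) depend only on \(\Sigma\preceq\Sigma_\gamma\) and \(\kappa_V,L_V\), never on \(\gamma\), so they should go through unchanged; we would check this line by line.
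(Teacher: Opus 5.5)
Your proposal is correct and matches the paper's proof nearly step for step. It uses the same error decomposition with $\widehat\Sigma_\gamma$ redefined as $\widehat\Sigma+\gamma\Sigma_D$, so that only the value deviations survive, and the same bound $\mathcal N_V^2\lesssim\lambda^{-\alpha}+\gamma^{r-1}\lambda^{-r}$, neutralized by taking $\gamma\gtrsim\lambda^{-(r-\alpha)/(1-r)}$ when $r>\alpha$. It then uses the same balance $\lambda\asymp n^{-1/(1+\alpha)}$. You are slightly more explicit than the paper in two places: only the value admissibility condition is needed, so a large $\gamma$ is harmless, and at $\alpha=0$ the choice of $\lambda$ must be clipped at order $\log n/n$, which costs only a logarithmic factor.
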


\begin{proof}
The oracle estimator has the closed form
\[
\widehat w_{\mathrm{or}}
=
(\widehat\Sigma+\gamma\Sigma_D+\lambda I)^{-1}
(\widehat V+\gamma V_D).
\]
Let
\[
\widehat\Sigma_{\gamma}^{\mathrm{or}}
:=\widehat\Sigma+\gamma\Sigma_D,
\qquad
\widehat V_{\gamma}^{\mathrm{or}}
:=\widehat V+\gamma V_D,
\qquad
\Sigma_\gamma:=\Sigma+\gamma\Sigma_D.
\]
Under Assumptions~\ref{as:attainability} and~\ref{as:consistency},
\[
V=\Sigma w^*,
\qquad
V_D=\Sigma_D w^*,
\qquad
V+\gamma V_D=\Sigma_\gamma w^*.
\]

The proof of Theorem~\ref{thm:finite-sample} applies verbatim, except that the differential part is no longer empirical. Hence the only stochastic deviations are
\[
\widehat\Sigma-\Sigma,
\qquad
\widehat V-V,
\]
and all terms involving \(\widehat\Sigma_D-\Sigma_D\) and \(\widehat V_D-V_D\) disappear. Up to logarithmic factors,
\[
\|\Sigma^{1/2}(\widehat w_{\mathrm{or}}-w^*)\|
\;\lesssim\;
\frac{1}{n\sqrt{\lambda}}
+
\frac{\mathcal N_V(\lambda,\gamma)}{\sqrt n}
+
\lambda\mathcal B(\lambda,\gamma),
\]
where
\[
\mathcal N_V(\lambda,\gamma)
=
\big\|(\Sigma_\gamma+\lambda I)^{-1/2}\Sigma^{1/2}\big\|_{\mathrm{HS}},
\qquad
\mathcal B(\lambda,\gamma)
=
\big\|(\Sigma_\gamma+\lambda I)^{-1/2}w^*\big\|.
\]

By Assumption~\ref{ass:abstract-alignment}, as in the proof of Corollary~\ref{thm:rates},
\[
\mathcal N_V(\lambda,\gamma)^2
\;\lesssim\;
\lambda^{-\alpha}
+
\gamma^{r-1}\lambda^{-r}.
\]
Moreover, by attainability,
\[
\lambda\mathcal B(\lambda,\gamma)\lesssim \sqrt{\lambda}.
\]
Therefore,
\[
\|\Sigma^{1/2}(\widehat w_{\mathrm{or}}-w^*)\|
\;\lesssim\;
\frac{1}{n\sqrt{\lambda}}
+
\frac{\lambda^{-\alpha/2}}{\sqrt n}
+
\frac{\gamma^{\frac{r-1}{2}}\lambda^{-r/2}}{\sqrt n}
+
\sqrt{\lambda}.
\]

Since the differential information is known at the population level, there is no derivative-sample variance term increasing with \(\gamma\). We choose \(\gamma\) large enough so that
\[
\gamma^{r-1}\lambda^{-r}
\;\lesssim\;
\lambda^{-\alpha}.
\]
Equivalently, when \(r>\alpha\), it suffices to take
\[
\gamma
\;\gtrsim\;
\lambda^{-\frac{r-\alpha}{1-r}},
\]
while when \(r\le\alpha\), any \(\gamma\gtrsim1\) is enough. With this choice,
\[
\|\Sigma^{1/2}(\widehat w_{\mathrm{or}}-w^*)\|
\;\lesssim\;
\frac{1}{n\sqrt{\lambda}}
+
\frac{\lambda^{-\alpha/2}}{\sqrt n}
+
\sqrt{\lambda}.
\]

Balancing the dominant variance term with the bias,
\[
\frac{\lambda^{-\alpha/2}}{\sqrt n}
=
\sqrt{\lambda},
\]
gives
\[
\lambda\asymp n^{-\frac{1}{1+\alpha}}.
\]
For this choice,
\[
\frac{\lambda^{-\alpha/2}}{\sqrt n}
\asymp
\sqrt{\lambda}
\asymp
n^{-\frac{1}{2(1+\alpha)}},
\]
while
\[
\frac{1}{n\sqrt{\lambda}}
=
n^{-1+\frac{1}{2(1+\alpha)}}
\]
is lower order since \(\alpha>0\). Hence
\[
\|\Sigma^{1/2}(\widehat w_{\mathrm{or}}-w^*)\|
\;\lesssim\;
n^{-\frac{1}{2(1+\alpha)}}.
\]
\end{proof}

\section{Examples }
We now illustrate Assumption~\ref{ass:abstract-alignment} on concrete examples where the decomposition can be computed explicitly.
\subsection{Partial Laplacian and periodic Sobolev RKHS}

\begin{lemma}[Capacity decomposition for Sobolev spaces]
Let $\mathcal X=\mathbb T^d $, let $k$ be a Matérn kernel of smoothness $s>d/2$, and let $D=\sum_{i\in S}\partial_{x_i}^2$ be a partial Laplacian with $S\subseteq\{1,\dots,d\}$.
Assume that value and differential samples are drawn uniformly on $\mathbb T^d$.

Then the value covariance operator $\Sigma$ admits a decomposition $\Sigma=\Sigma_1+\Sigma_2$ satisfying Assumption~\ref{ass:abstract-alignment}, with exponents
\[
\alpha=\frac{d-|S|}{2s},
\qquad
r=\frac{d}{2s}.
\]
\end{lemma}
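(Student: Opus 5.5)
The plan is to diagonalize everything in the Fourier basis. On $\mathbb T^d$ with uniform sampling, the functions $e_k(x)=\sqrt{\mu_k}\,e^{2\pi i k\cdot x}$ (or their real counterparts) form an orthonormal basis of $\mathcal H$. The operators $\Sigma$ and $\Sigma_D$ are both diagonal in this basis.

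First we compute the two covariances. Since $\phi(x)=\sum_k \mu_k e^{-2\pi i k\cdot x} e^{2\pi i k\cdot(\cdot)}$, orthogonality of characters in $L^2(\mathbb T^d)$ gives $\Sigma e_k=\mu_k e_k$. For the differential operator, $D e^{2\pi i k\cdot x}=-4\pi^2|k_S|^2 e^{2\pi i k\cdot x}$ with $|k_S|^2=\sum_{i\in S}k_i^2$. Hence
\[
\Sigma_D e_k = 16\pi^4 |k_S|^4 \mu_k\, e_k .
\]

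Next we define the split by frequency. Let $\Sigma_1$ be the restriction of $\Sigma$ to the frequencies with $k_S=0$, i.e.\ those in the null space of $D$. Let $\Sigma_2$ be the restriction to the frequencies with $k_S\neq 0$. Both operators are diagonal and positive semi-definite, and $\Sigma=\Sigma_1+\Sigma_2$.

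Step 1 bounds the first trace. We have $\Tr(\Sigma_1(\Sigma_1+\lambda)^{-1})=\sum_{k'\in\mathbb Z^{d-|S|}}\mu_{k'}/(\mu_{k'}+\lambda)$ with $\mu_{k'}\asymp(1+|k'|^2)^{-s}$. This is the standard effective dimension of a Sobolev space of order $s$ in dimension $d_0=d-|S|$. Splitting the sum at $|k'|\asymp\lambda^{-1/(2s)}$, the head contributes $\lesssim\lambda^{-d_0/(2s)}$ terms bounded by $1$. The tail is $\lambda^{-1}\sum_{|k'|>R}|k'|^{-2s}\lesssim\lambda^{-1}R^{d_0-2s}\asymp\lambda^{-d_0/(2s)}$, which is finite since $2s>d\ge d_0$. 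This gives $\alpha=d_0/(2s)$.

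Step 2 bounds the second trace, and this is the main obstacle. We need
\[
\Tr\big((\Sigma_D+\lambda)^{-1}\Sigma_2\big)=\sum_{k:\,k_S\neq0}\frac{\mu_k}{c|k_S|^4\mu_k+\lambda}.
\]
Since $k_S\neq 0$ is an integer vector, $|k_S|\ge1$. Each term is therefore at most $\mu_k/(c\mu_k+\lambda)\lesssim \min(1,\mu_k/\lambda)$, the full $d$-dimensional effective-dimension summand. Repeating the Step 1 computation over all of $\mathbb Z^d$ gives $\lesssim\lambda^{-d/(2s)}$, i.e.\ $r=d/(2s)$. The delicacy is that using $|k_S|^4\ge1$ discards the extra damping from $D$, which could yield a better exponent. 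The statement only claims $r=d/(2s)$, so this crude bound suffices.

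Finally we collect the constants and check the requirements. The constants $c_1,c_2$ depend only on $d,s$ and the Matérn equivalence constants. The condition $\alpha,r\in[0,1]$ holds because $s>d/2$, and $\lambda\in(0,1]$ ensures that the head and tail splits are valid. Assumption~\ref{ass:abstract-alignment} then holds with the stated exponents.
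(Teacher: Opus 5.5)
Your proposal is correct and follows essentially the same route as the paper. Both arguments diagonalize $\Sigma$ and $\Sigma_D$ in the Fourier basis, split the frequencies according to whether $k_S=0$, bound the first trace by the $(d-|S|)$-dimensional Sobolev effective dimension, and bound the second by discarding the factor $|k_S|^4\ge 1$ to fall back on the full $d$-dimensional effective dimension. Your explicit head/tail split at $|k|\asymp\lambda^{-1/(2s)}$, together with your $\mathcal H$-normalized basis $\sqrt{\mu_k}\,e^{2\pi i k\cdot x}$, simply spells out the lattice-sum comparison that the paper cites as standard.
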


\begin{proof} 

\textbf{Step 1: diagonalization in the Fourier basis.}
Let $D$ be a constant-coefficient differential operator of order $q$, of the form
\[
D = \sum_{|\alpha|\le q} c_\alpha\,\partial^\alpha,
\qquad
\partial^\alpha = \partial_{x_1}^{\alpha_1}\cdots\partial_{x_d}^{\alpha_d},
\]
with real coefficients $c_\alpha$. Its Fourier symbol is given by
\[
P(k) = \sum_{|\alpha|\le q} c_\alpha\,(2\pi i k)^\alpha,
\qquad k\in\mathbb Z^d,
\]
so that $D e^{2\pi i k\cdot x} = P(k)\,e^{2\pi i k\cdot x}$.

Since the kernel $k$ is translation invariant and both $x$ and $z$ are sampled uniformly on $\mathbb T^d$, the covariance operators
\[
\Sigma=\mathbb E[\phi(x)\otimes \phi(x)],\qquad 
\Sigma_D=\mathbb E[D\phi(z)\otimes D\phi(z)]
\]
are convolution operators and are therefore diagonal in the Fourier basis
$\{e_k(x):=e^{2\pi i k\cdot x}\}_{k\in\mathbb Z^d}$.
The corresponding eigenvalues are
\[
\sigma_k=\mu_k,\qquad 
\tau_k = |P(k)|^2\,\mu_k .
\]

\textbf{Step 2: partial Laplacian and visible/invisible frequencies.}
For $D=\sum_{i\in S}\partial_{x_i}^2$, the symbol is
\[
P(k)=-(2\pi)^2\|k_S\|^2,
\qquad\text{so}\qquad
\tau_k=(2\pi)^4\|k_S\|^4\,\mu_k,
\]
where $k_S$ denotes the restriction of $k$ to coordinates in $S$.
Define the index sets
\[
\mathcal Z:=\{k\in\mathbb Z^d:\ k_S=0\},\qquad \mathcal Z^c:=\mathbb Z^d\setminus \mathcal Z.
\]
Thus $\mathcal Z$ consists of frequencies \emph{invisible} to $D$ (since $P(k)=0$), whereas $\mathcal Z^c$ corresponds to frequencies \emph{detectable} by $D$.

\textbf{Step 3: the decomposition $\Sigma=\Sigma_1+\Sigma_2$.}
Define
\[
\Sigma_1:=\sum_{k\in\mathcal Z}\mu_k\, e_k\otimes e_k,
\qquad
\Sigma_2:=\sum_{k\in\mathcal Z^c}\mu_k\, e_k\otimes e_k.
\]
Then $\Sigma=\Sigma_1+\Sigma_2$ and $\Sigma_1,\Sigma_2\succeq 0$.

\paragraph{Step 4: capacity of the invisible component $\Sigma_1$.}
Let $d_0=d-|S|$ and write $k_{-S}\in\mathbb Z^{d-S}$ for the subvector of coordinates outside $S$. The map $k_{-S}\mapsto (k_{-S},0_S)$ is a bijection $\mathbb Z^{d-S}\to\mathcal Z$ and
$|(k_{-S},0_S)|=|k_{-S}|$, hence
\[
\sum_{k\in\mathcal Z}\frac{\mu_k}{\mu_k+\lambda}
=\sum_{k_{-S}\in\mathbb Z^{d-S}}
\frac{\mu_{(k_{-S},0_S)}}{\mu_{(k_{-S},0_S)}+\lambda},
\qquad
\mu_{(k_{-S},0_S)}\asymp (1+|k_{-S}|^2)^{-s}.
\]
Therefore, by a standard comparison between lattice sums and integrals, for all $\lambda\in(0,1]$,
\[
\Tr\!\big(\Sigma_1(\Sigma_1+\lambda I)^{-1}\big)
=\sum_{k\in\mathcal Z}\frac{\mu_k}{\mu_k+\lambda}
\;\lesssim\;\lambda^{-\frac{d_0}{2s}}.
\]
This verifies the first trace condition with exponent $\alpha=d_0/(2s)$.

\textbf{Step 5: alignment of $\Sigma_2$ with $\Sigma_D$.}
On $\mathcal Z^c$ we have $\|k_S\|\ge 1$ and hence $\tau_k=(2\pi)^4\|k_S\|^4\mu_k \ge c\,\mu_k$ for some $c>0$.
Therefore, for all $t\in(0,1]$,
\[
\Tr\!\big((\Sigma_D+tI)^{-1}\Sigma_2\big)
=\sum_{k\in\mathcal Z^c}\frac{\mu_k}{\tau_k+t}
\;\le\;\sum_{k\in\mathcal Z^c}\frac{\mu_k}{c\mu_k+t}.
\]
Using $\mu_k\asymp (1+|k|^2)^{-s}$ and standard comparison between lattice sums and integrals gives
\[
\sum_{k\in\mathbb Z^d}\frac{\mu_k}{\mu_k+t}
\;\lesssim\; t^{-\frac{d}{2s}},
\qquad t\in(0,1],
\]
and hence
\[
\Tr\!\big((\Sigma_D+tI)^{-1}\Sigma_2\big)\;\lesssim\; t^{-\frac{d}{2s}}.
\]
This verifies the second trace condition with exponent $r=d/(2s)$.

Combining Steps 3-5 yields Assumption~\ref{ass:abstract-alignment} with the claimed exponents.
\end{proof}
\subsection{Gradients}

Let $\Omega\subset\R^d$ be a bounded, connected, Lipschitz domain.
Fix $s>\frac d2 +1$. Let $\bmH = H^s(\Omega)$, with norm $\| \cdot \|_\bmH$ equivalent to the standard $H^s(\Omega)$ norm. We denote by $K$ the reproducing kernel associated to $\bmH$, and by $\phi : \Omega \to \cH$ the feature map defined, for all $x \in \Omega$, by $\phi(x) = K(x, \cdot) \in \cH$.
Choose $\rho = \rho_D$ the uniform distribution on $\Omega$. 

\subsubsection{Covariance operator decomposition}

\paragraph{Definition of $\Sigma$}Define the covariance operator 
\[\Sigma := \EE_{x \sim \rho}[\phi(x) \otimes \phi(x)].\]
We have, for all $u\in\bmH$,
\begin{equation}
\label{eq:Sigma_form}
\langle \Sigma u, u\rangle_\bmH = \|u\|_{L^2(\rho)}^2.
\end{equation}

\paragraph{Definition of $\Sigma_D$} Define the gradients covariance operator
\[\Sigma_D = \EE_{z \sim \rho_D}\left[\sum_{i = 1}^d \partial_i \phi(z) \otimes \partial_i \phi(z)\right].\]
We have, for all $u \in \cH$, 
\begin{equation}\label{eq:SigmaD_form}\langle \Sigma_D u , u \rangle_\cH = \| \nabla u \|_{L^2(\rho)}^2.
\end{equation}

\paragraph{Definition of $\Sigma_1$} Denote by \(\phi_\Omega = \EE_{x \sim \rho}[\phi(x)] \in \bmH\) the representer of the averaging functional, so that
\[
\langle u,\phi_\Omega\rangle_{\bmH}
=
\EE_{x \sim \rho}[ u(x)] =: u_\Omega.
\]
Denote
\[ \Sigma_1 = \phi_\Omega \otimes \phi_\Omega.\]
Then for all $u\in\bmH$,
\begin{equation}
\label{eq:Sigma1_form}
\langle \Sigma_1u , u \rangle_\bmH =  u_\Omega^2.
\end{equation}

\paragraph{Definition of $\Sigma_2$} Next, define the bounded operator
\[
A:\bmH\to L^2(\rho),
\qquad
Au := u - u_\Omega,
\]
Set
\[
\Sigma_2 := A^*A,
\]
so that for all $u\in\bmH$,
\begin{equation}
\label{eq:Sigma2_form}
\langle \Sigma_2u, u \rangle_\bmH =  \|u - u_\Omega\|_{L^2(\rho)}^2 = \Var_{x \sim \rho}(u(x)).
\end{equation}

\begin{proposition}[Decomposition of $\Sigma$]
\label{prop:Sigma_decomposition}
One has
\[
\Sigma = \Sigma_1+\Sigma_2.
\]
\end{proposition}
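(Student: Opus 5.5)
Since $\Sigma$, $\Sigma_1$ and $\Sigma_2$ are all bounded self-adjoint operators on the real Hilbert space $\bmH$, the plan is to prove the equality of their quadratic forms, $\langle \Sigma u,u\rangle_\bmH = \langle \Sigma_1 u,u\rangle_\bmH + \langle \Sigma_2 u,u\rangle_\bmH$ for all $u\in\bmH$, and then conclude by polarization. A bounded self-adjoint operator $T$ with $\langle Tu,u\rangle=0$ for all $u$ satisfies $\langle Tu,v\rangle = \tfrac14(\langle T(u+v),u+v\rangle-\langle T(u-v),u-v\rangle)=0$, hence $T=0$. Self-adjointness is immediate for $\Sigma$ and $\Sigma_1=\phi_\Omega\otimes\phi_\Omega$ (rank one, symmetric), and for $\Sigma_2=A^*A$ by construction.

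\textbf{Step 1: well-posedness of $\phi_\Omega$ and $A$.} Under \cref{as:bounded}, $\|\phi(x)\|_\bmH\le\kappa_V$, so the Bochner integral $\phi_\Omega=\EE_{x\sim\rho}[\phi(x)]$ exists in $\bmH$ and satisfies $\langle u,\phi_\Omega\rangle=\EE[u(x)]$ by continuity of the inner product. In particular $|u_\Omega|\le\kappa_V\|u\|_\bmH$. Consequently $A$ is bounded, because $\|Au\|_{L^2(\rho)}\le\|u\|_{L^2(\rho)}+|u_\Omega|\le 2\kappa_V\|u\|_\bmH$. Its adjoint $A^*:L^2(\rho)\to\bmH$ therefore exists, and $\langle A^*Au,u\rangle_\bmH=\|Au\|^2_{L^2(\rho)}$, which is \eqref{eq:Sigma2_form}. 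Likewise \eqref{eq:Sigma_form} follows from $\langle\Sigma u,u\rangle=\EE\langle u,\phi(x)\rangle^2=\EE[u(x)^2]$, and \eqref{eq:Sigma1_form} is direct.

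\textbf{Step 2: the variance identity.} Since $\rho$ is a probability measure, expanding $\EE[(u(x)-u_\Omega)^2]$ gives
\[
\|u\|^2_{L^2(\rho)} = u_\Omega^2 + \|u-u_\Omega\|^2_{L^2(\rho)},
\]
because the cross term is $2u_\Omega\,\EE[u(x)-u_\Omega]=0$. Combined with Step 1, this yields $\langle\Sigma u,u\rangle=\langle\Sigma_1u,u\rangle+\langle\Sigma_2u,u\rangle$ for every $u\in\bmH$. Polarization then gives $\Sigma=\Sigma_1+\Sigma_2$. Positivity $\Sigma_1,\Sigma_2\succeq0$ is visible from the forms.

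No real obstacle is expected here. The only point needing care is justifying that $\phi_\Omega$ is a genuine element of $\bmH$ (Bochner integrability from the boundedness of the feature map) and that $A$ is bounded, so that $A^*$ and $\Sigma_2$ are well defined.
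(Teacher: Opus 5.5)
Your proof is correct and follows the same route as the paper: the variance identity $\|u\|_{L^2(\rho)}^2 = u_\Omega^2 + \|u-u_\Omega\|_{L^2(\rho)}^2$ gives equality of the quadratic forms, and polarization for bounded self-adjoint operators yields $\Sigma=\Sigma_1+\Sigma_2$. Your Step 1, which shows that $\phi_\Omega$ is a well-defined Bochner integral and that $A$ is bounded, supplies well-posedness details the paper leaves implicit.
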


\begin{proof}
Let \(u\in\bmH\). By the variance decomposition formula,
\[
\EE_{x\sim\rho}\big[u(x)^2\big]
=
\Big(\EE_{x\sim\rho}[u(x)]\Big)^2
+
\Var_{x\sim\rho}(u(x)).
\]
Equivalently,
\[
\|u\|_{L^2(\rho)}^2
=
u_\Omega^2
+
\|u-u_\Omega\|_{L^2(\rho)}^2
=
u_\Omega^2+\Var_{x\sim\rho}(u(x)).
\]
Using \eqref{eq:Sigma_form}, \eqref{eq:Sigma1_form}, and \eqref{eq:Sigma2_form}, we obtain
\[
\langle \Sigma u,u\rangle_{\bmH}
=
\langle \Sigma_1u,u\rangle_{\bmH}
+
\langle \Sigma_2u,u\rangle_{\bmH}
\]
for every \(u\in\bmH\).

Since \(\Sigma\), \(\Sigma_1\), and \(\Sigma_2\) are bounded self-adjoint operators on \(\bmH\), equality of their quadratic forms implies, by polarization, that
\[
\Sigma=\Sigma_1+\Sigma_2.
\]
\end{proof}

\subsubsection{Computation of the coefficients}

Let us first bound the effective dimension
\[
\mathcal{N}_1(\lambda):=\Tr\big(\Sigma_1(\Sigma_1+\lambda I)^{-1}\big).
\]

Since \(\Sigma_1\) has rank one, the effective dimension $\mathcal{N}_1(\lambda)$ is uniformly bounded. In particular, it satisfies
\[
\mathcal{N}_1(\lambda)\le C_\alpha \lambda^{-\alpha},
\qquad \lambda\in(0,1],
\]
with
\[
\boxed{\alpha=0}.
\]

Let us now bound
\[
\Tr\!\big((\Sigma_D+\lambda I)^{-1}\Sigma_2\big).
\]

Since \(\Omega\) is bounded, connected, and Lipschitz, the Poincaré--Wirtinger inequality \citep{evans2010partial} holds: there exists \(C>0\) such that, for every
\(u\in H^1(\Omega)\),
\[
\|u-u_\Omega\|_{L^2(\rho)}
\le
C\|\nabla u\|_{L^2(\rho)}.
\]
Therefore, for every \(u\in\bmH\),
\[
\begin{aligned}
\langle \Sigma_2u,u\rangle_{\bmH}
&=
\|u-u_\Omega\|_{L^2(\rho)}^2 \\
&\le
C^2\|\nabla u\|_{L^2(\rho)}^2 \\
&=
C^2\langle \Sigma_Du,u\rangle_{\bmH}.
\end{aligned}
\]
Equivalently, in the Loewner order,
\[
0\preceq \Sigma_2\preceq C^2\Sigma_D.
\]
It follows that
\[
\begin{aligned}
\Tr\!\big((\Sigma_D+\lambda I)^{-1}\Sigma_2\big)
&\le 
C^2
\Tr\!\big(\Sigma_D(\Sigma_D+\lambda I)^{-1}\big).
\end{aligned}
\]

We now bound the last trace. Let \((\eta_j)_{j\ge1}\) be the nonzero eigenvalues of
\(\Sigma_D\), arranged in nonincreasing order. Since \(\Sigma_D\) is the
covariance operator associated with the map
\[
u\longmapsto \nabla u,
\qquad
H^s(\Omega)\to L^2(\rho;\mathbb{R}^d),
\]
the standard eigenvalue estimate for Sobolev embeddings on bounded Lipschitz
domains gives
\[
\eta_j \le C_D j^{-2(s-1)/d}.
\]
Set
\[
\beta:=\frac{2(s-1)}{d}.
\]
Because \(s>\frac d2+1\), we have \(\beta>1\). Hence, for \(0<\lambda\le1\),
\[
\begin{aligned}
\Tr\!\big(\Sigma_D(\Sigma_D+\lambda I)^{-1}\big)
&=
\sum_{j\ge1}\frac{\eta_j}{\eta_j+\lambda} \\
&\le
\sum_{j\ge1}\frac{C_D j^{-\beta}}{C_D j^{-\beta}+\lambda} \\
&\lesssim
\lambda^{-1/\beta}.
\end{aligned}
\]
Since \(1/\beta=d/(2(s-1))\), we obtain
\[
\Tr\!\big((\Sigma_D+\lambda I)^{-1}\Sigma_2\big)
\lesssim
\lambda^{-d/(2(s-1))}.
\]
Thus one may take
\[
\boxed{
r=\frac{d}{2(s-1)}.
}
\]
In particular, since \(s>\frac d2+1\), one has \(r<1\).

\section{Detailed experimental setup}\label{sec:app-experiments}

In this section we provide more details about the setup used for the experiments of \cref{sec:experiments}.

\paragraph{Implementation}
To maximize the flexibility of our system, we implemented the PIKS estimator using the \href{https://docs.jax.dev}{Jax framework}, which allows to efficiently compute arbitrary derivatives.
While very flexible, any automatic differentiation framework does introduce some computational overhead and requires particular care when implementing kernel functions. For example, Mat\'ern kernels require differentiating through a square-root which is numerically unstable at 0. We introduce a small additive nugget term to ensure stability everywhere.
The kernel solver is a straightforward implementation of the equations in \cref{lem:piks-finite-dimensional} which yield a space complexity of $O((n+m)^2)$ and time complexity of $O((n+m)^3)$.

\paragraph{Bounded domain gradient example}
We repeated the experiment 10 times to obtain standard deviations reported in \cref{fig:inc-m}. We set the variance on the differential data to be low in order to avoid having to scale $m$ too much before seeing a saturation effect. All data points were uniformly sampled on a unit disk with 1000 samples used for validation (to select hyperparameters) and 10000 for computing the test error.

\paragraph{Partial Laplacian example}
We give a few more details on the kernel and target functions for the example on Sobolev spaces on the torus with the partial Laplacian operator. Here we used $\mathcal X = \mathbb T^4$. We sampled uniformly at random $F=64$ frequency vectors $k_\ell\in\mathbb{Z}^4$ such that for $\ell\in[0, 15]$, $k_\ell \sim [\mathcal U([-8, 8] \backslash \{0\}), 0, 0, 0]$; for $\ell \in [16, 31]$, $k_\ell \sim [0, \mathcal U([-8, 8]  \backslash \{0\})]$ and so on (only one dimension active for each vector).
For feature map $\phi_\ell(x) = \sqrt{2} \cos(2\pi\langle k_\ell, x\rangle)$, target function and kernel are defined as
\begin{equation*}
    u^*(x) = u_0 + \sum_{\ell=1}^Fc_\ell \phi_\ell(x), \qquad k(x, x') = 1 + \sum_{\ell=1}^F\mu_\ell\phi_\ell(x)\phi_\ell(x'),
\end{equation*}
with the smoothness coefficients
\begin{equation*}
    \mu_\ell \asymp (1 + \|k_\ell\|^2)^{-\beta}, \quad c_\ell \asymp (1 + \|k_\ell\|^2)^{-\delta}, \quad \beta > 1, \quad \delta > \frac{1}{2}.
\end{equation*}
In particular, we set $\beta = \delta = 4.1$ to have a well-specified problem in a smooth enough space.
We use additive Gaussian noise, with standard-deviation equal to 10\% to the range of the data. This was done because the different data components have vastly different numerical ranges, and setting a fixed variance would have introduced artifacts in the results.

The best-fit lines of \cref{fig:part-lap} are obtained by least-squares regression of the experimental data against the function $an^{b}$ to find the exponential coefficient $b$. Then, using \cref{thm:rates}, we get $\alpha = -\frac{1}{2b} - 1$.

\paragraph{Hyperparameters}
The two hyperparameters of PIKS ($\lambda$ and $\gamma$), as well as the hyperparameters of the kernel (notably the length-scale of the Mat\'ern kernel) were determined by a coarse grid-search using small, noisy validation sets to estimate their performance at generalization time.

\end{document}